\theoremstyle{plain}
\newtheorem{theorem}{Theorem}[section]
\newtheorem{proposition}[theorem]{Proposition}
\newtheorem{lemma}[theorem]{Lemma}
\newtheorem{corollary}[theorem]{Corollary}
\theoremstyle{definition}
\theoremstyle{remark}
\icmltitlerunning{Exponential Smoothing for Off-Policy Learning}
\begin{document}

\twocolumn[
\icmltitle{Exponential Smoothing for Off-Policy Learning}



\icmlsetsymbol{equal}{*}

\begin{icmlauthorlist}
\icmlauthor{Imad Aouali}{yyy,comp}
\icmlauthor{Victor-Emmanuel Brunel}{comp}
\icmlauthor{David Rohde}{yyy}
\icmlauthor{Anna Korba}{comp}
\end{icmlauthorlist}

\icmlaffiliation{yyy}{CREST, ENSAE, IP Paris, France}
\icmlaffiliation{comp}{Criteo AI Lab, Paris, France}

\icmlcorrespondingauthor{Imad Aouali}{i.aouali@criteo.com}

\icmlkeywords{Machine Learning, ICML}

\vskip 0.3in
]



\printAffiliationsAndNotice{}  

\begin{abstract}
Off-policy learning (OPL) aims at finding improved policies from logged bandit data, often by minimizing the inverse propensity scoring (IPS) estimator of the risk. In this work, we investigate a smooth regularization for IPS, for which we derive a two-sided PAC-Bayes generalization bound. The bound is tractable, scalable, interpretable and provides learning certificates. In particular, it is also valid for standard IPS without making the assumption that the importance weights are bounded. We demonstrate the relevance of our approach and its favorable performance through a set of learning tasks. Since our bound holds for standard IPS, we are able to provide insight into when regularizing IPS is useful. Namely, we identify cases where regularization might not be needed. This goes against the belief that, in practice, clipped IPS often enjoys favorable performance than standard IPS in OPL.
\end{abstract}

\section{Introduction}
\label{sec:introduction}

An off-policy contextual bandit \citep{dudik2011doubly} is a ubiquitous framework to optimize decision-making using offline data. In practice, logged data reflecting the preferences of the agent in an online setting is available \citep{bottou2013counterfactual}. In each round, the agent observes a \emph{context}, takes an \emph{action}, and receives a \emph{reward} that depends on the observed context and the taken action. Off-policy evaluation (OPE) \citep{dudik2011doubly} aims at evaluating a policy offline by designing an estimator of its expected reward using logged data. The estimator is often based on the importance sampling trick and it is generally referred to as inverse propensity scoring (IPS) \citep{horvitz1952generalization}. Off-policy learning (OPL) leverages the latter estimator to learn an improved policy \citep{swaminathan2015batch}. 

The literature on OPL has focused so far on using learning principles derived from generalization bounds. First, \citet{swaminathan2015batch} used sample variance penalization (SVP) that favors policies with high estimated reward and low empirical variance. Recently, \citet{london2019bayesian} derived a novel scalable learning principle that favors policies with high estimated reward and whose parameter is not far from that of the logging policy in terms of $L_2$ distance. While derived from generalization bounds, these learning principles do not give any guarantees on the expected performance of the learned policy. Also, they require additional care to tune their hyper-parameters. Thus, motivated by the results in \citet{sakhi2022pac}, we derive tractable generalization bounds that we optimize directly. 

The paper is organized as follows. 
In \cref{sec:setting}, we introduce the necessary background. In \cref{sec:ope}, we explain the shortcomings of the widely used \emph{hard clipping} of IPS and present a smoother correction, 
called exponential smoothing. In \cref{sec:opl}, we focus on OPL and leverage PAC-Bayes theory to derive a \emph{two-sided} generalization bound for our estimator. In contrast with prior works 
\citep{swaminathan2015batch,london2019bayesian,sakhi2022pac}, our bound is also valid for standard IPS without clipping, and this is without assuming that the importance weights are bounded. We also discuss our results in detail in \cref{sec:discussion}. In particular, we give insights into the sample complexity of our learning procedure, an important question not addressed in prior OPL works. Finally, we show in \cref{sec:experiments} that our approach enjoys favorable performance. A detailed comparative review of the literature is provided in \cref{sec:related_work}. The proofs are deferred to \cref{proofs:ope,proofs:opl}. Refer to \cref{app:all_experiments} to reproduce our experiments.

\section{Background}
\label{sec:setting}

Consider an agent interacting with a \emph{contextual bandit} environment over $n$ rounds. In round $t \in [n]$, the agent observes a \emph{context} $x_t \sim \nu$, where $\nu$ is a distribution whose support $\mathcal{X}$ is a compact subset of $\mathbb{R}^d$. Then the agent takes an \emph{action} $a_t \in \cA = [K]$. Finally, the agent receives a stochastic cost $c_t \in [-1, 0]$ that depends on both $x_t$ and $a_t$. That is $c_t \sim p(\cdot | x_t, a_t)$ where $p(\cdot | x, a)$ is the \emph{cost distribution} of action $a$ in context $x$. We let $c(x, a) = \E{c \sim p(\cdot | x, a)}{c}$ be the \emph{cost function} that outputs the expected cost of action $a$ in context $x$. Here we use a negative cost since it is seen as the negative value of the reward, that is for any $(x, a) \in \cX \times \cA\,,$ $c(x, a) = -r(x, a)$ where $r : \cX \times \cA \rightarrow [0, 1]$ is the \emph{reward function} that outputs the expected reward of $a$ in context $x$.

The agent is represented by a stochastic policy $\pi$. Given a context $x \in \cX\,,$ $\pi(\cdot | x)$ is a probability distribution over $\cA$. Our goal is to find a policy $\pi \in \Pi$ among a set of policies $\Pi$ that minimizes the risk defined as
\begin{align}\label{eq:policy_value}
    R(\pi) &= \E{(x, a, c) \sim \mu_\pi}{c} = \E{x \sim \nu, a \sim \pi(\cdot | x)}{c(x, a)}\,,
\end{align}
where $\mu_\pi$ is the joint distribution of $(x, a, c)$; $\mu_\pi(x, a, c) = \nu(x)\pi(a | x)p(c | x, a)$. We assume access to logged data $\mathcal{D}_n = ( x_i, a_i, c_i)_{i \in [n]},$ where $(x_i, a_i, c_i) \sim \mu_{\pi_0}$ are i.i.d. and $\pi_0$ is a \emph{known logging policy}. Given a policy $\pi \in \Pi$, OPE consists in building an estimator for its risk $R(\pi)$ using $\mathcal{D}_n$ such as $\hat{R}_n(\pi) \approx R(\pi)$. After that, OPL is used to find a policy $\hat{\pi}_n \in \Pi$ such that $R(\hat{\pi}_n) \approx \min_{\pi \in \Pi} R(\pi)$. 

In this work, we focus on inverse propensity scoring (IPS) \citep{horvitz1952generalization,dudik2012sample}. Given a policy $\pi \in \Pi$, IPS estimates the risk $R(\pi)$ by re-weighting the samples using the ratio between $\pi$ and $\pi_0$ such as
\begin{align}\label{eq:ips_policy_value}
    \hat{R}_n^{\textsc{ips}}(\pi) &= \frac{1}{n} \sum_{i=1}^n c_i w_{\pi}(a_i | x_i)\,,
\end{align}
where for any $(x, a) \in \cX \times \cA\,, w_{\pi}(a | x) =  \pi(a | x)/\pi_0(a | x)$ are the \textit{importance weights}. The variance of $\hat{R}_n^{\textsc{ips}}(\pi)$ scales linearly with the importance weights \citep{swaminathan2017off} which can be large. Thus other OPE methods that do not rely on the importance weights or partially use them were proposed and they can be categorized into two families, direct method (DM) \citep{jeunen2021pessimistic} and doubly robust (DR) \citep{dudik2011doubly}. The reader may refer to \cref{sec:related_work_OPE} for more details about these methods.

Let $\hat{R}_n$ be an estimator of the risk $R$. For instance, $\hat{R}_n$ can be $\hat{R}^{\textsc{ips}}_n$ in \eqref{eq:ips_policy_value}. The goal in OPL is to minimize the risk $R$. But since we cannot access it, we only search for $\hat{\pi}_n = \argmin_{\pi \in \Pi} \hat{R}_n(\pi) + \operatorname{pen}(\pi)$ hoping that $R(\hat{\pi}_n) \approx \min_{\pi \in \Pi}R(\pi)$. Here $\operatorname{pen}(\cdot)$ is a penalization term obtained using generalization bounds of the following form. Let $\delta \in (0, 1)$, then we have with probability at least $1-\delta$ that
\begin{align}\label{eq:opl_one_sided}
    &R(\pi) \leq \hat{R}_n(\pi) + g(\delta, \Pi, \pi, \pi_0, n)\,, & \forall \pi \in \Pi\,,
\end{align}
for some function $g$. Improving upon $\pi_0$, that is when $R(\pi) - R(\pi_0)< 0$, is guaranteed with high probability when $\hat{R}_n(\pi) + g(\delta, \Pi, \pi, \pi_0, n) - R(\pi_0) < 0$. Thus we minimize $\hat{R}_n(\pi) + g(\delta, \Pi, \pi, \pi_0, n) - R(\pi_0)$ \emph{in the hope} that the minimum is smaller than $0$. Since $R(\pi_0)$ is fixed, the final objective reads 
\begin{align}\label{eq:objective}
 \hat{\pi}_n =  \argmin_{\pi \in \Pi} \hat{R}_n(\pi) + g(\delta, \Pi, \pi, \pi_0, n)\,.
\end{align}
This motivated the concept of \emph{counterfactual risk minimization (CRM)} in \citet{swaminathan2015batch,london2019bayesian,sakhi2022pac}. However, all these works only derived one-sided inequalities similar to \eqref{eq:opl_one_sided}. In contrast, we derive \emph{two-sided} inequalities of the form 
\begin{align}\label{eq:opl_two_sided}
    &|R(\pi) - \hat{R}_n(\pi) |\leq g(\delta, \Pi, \pi, \pi_0, n)\,, & \forall \pi \in \Pi\,.
\end{align}
This is because \eqref{eq:opl_two_sided} can attest to the quality of the estimator $\hat{R}_n$. A one-sided one fails at this. To see why, note that we have with probability $1$ that $R(\pi) \leq \hat{R}^{\textsc{poor}}_n(\pi)$ with $g(\delta, \Pi, \pi, \pi_0, n)=0$, considering a poor estimator of the risk, $\hat{R}^{\textsc{poor}}_n(\pi)=0$ for any $\pi \in \Pi$. This holds since by definition $R(\pi) \in [-1, 0]$ while $\hat{R}^{\textsc{poor}}_n(\pi)=0$ for any $\pi \in \Pi$. While this one-sided inequality holds for  $\hat{R}^{\textsc{poor}}_n$, this estimator is not informative at all about $R$, so minimizing it is not relevant. This is why we need to control the quality of the upper bound on $R$, and this is achieved by two-sided inequalities similar to \eqref{eq:opl_two_sided}. Also, \eqref{eq:opl_two_sided} leads to oracle inequalities of the form $R(\hat{\pi}_n) \leq R(\pi_*) + 2g(\delta, \Pi, \pi_*, \pi_0, n)$, where $\hat{\pi}_n$ is the learned policy in \eqref{eq:objective} and $\pi_* = \argmin_{\pi \in \Pi} R(\pi)$ is the optimal policy. This allows us to quantify the number of samples $n$ needed so that the risk of the learned policy $R(\hat{\pi}_n)$ is close to the optimal one $R(\pi_*)$.

Moreover, in many prior works, the objective in \eqref{eq:objective} is not optimized directly. Instead, the function $g$ is used to motivate a heuristic-based learning principle. Here we review these principles briefly. But the reader may refer to \cref{sec:opl_learning_principles} for more detail. First, \citet{swaminathan2015batch} minimized the estimated risk while penalizing its empirical variance. This was inspired by a function $g$ that contains a variance term; discarding more complicated terms like the covering number of the space of policies $\Pi$. Similarly, \citet{london2019bayesian} parameterize policies by a mean parameter and propose to penalize the estimated risk by the $L_2$ distance between the mean of the logging and the learning policies; discarding all the other terms from their bound. 
In contrast, we follow the theoretically grounded approach that consists in directly optimizing the objective in \eqref{eq:objective} as it is. It may also be relevant to note that some works \citep{metelli2021subgaussian} derived \emph{evaluation} bounds and used them in OPL. In evaluation, we \emph{fix a policy $\pi \in \Pi$}, and show that
\begin{align*}
    \mathbb{P}(|R(\pi) - \hat{R}_n(\pi) |\leq f(\delta, \pi, \pi_0, n)) \geq 1-\delta\,,
\end{align*}
for some function $f$ that does not necessarily depend on the space of policies $\Pi$. In contrast, the generalization bound in \eqref{eq:opl_two_sided} holds simultaneously for any policy $\pi \in \Pi$, and it is the one that should be used in OPL. That said, in this work, we derive a \emph{two-sided} generalization bound that holds \emph{simultaneously for any policy $\pi \in \Pi$} as in \eqref{eq:opl_two_sided}.

\section{Exponential Smoothing}
\label{sec:ope} 
The estimator $\hat{R}_n^{\textsc{ips}}(\pi)$ in \eqref{eq:ips_policy_value} is unbiased when $\pi_0(a|x)=0$ implies that $\pi(a|x)=0$ for any $(x,a)\in \cX \times \cA$. But its variance can be large as it grows linearly with the importance weights $w_\pi(a | x)$. Thus they are often clipped \citep{swaminathan2015batch} such as on the following estimators
\begin{align}\label{eq:clip_ips_policy_value}
 \texttt{IPS-min} \quad  \tilde{R}_n^{\textsc m}(\pi)&= \frac{1}{n} \sum_{i=1}^n c_i \min\big(w_{\pi}(a_i | x_i), M\big)\,,\nonumber\\
    \texttt{IPS-max} \quad \hat{R}_n^\tau(\pi)&= \frac{1}{n} \sum_{i=1}^n c_i \frac{\pi(a_i | x_i)}{\max(\pi_0(a_i | x_i), \tau)}\,.
\end{align}
Here \texttt{IPS-min} clips the weights while \texttt{IPS-max} only clips $\pi_0$ in the denominator since $\pi$ is always smaller than 1. For instance, $M \in \real^+$ in $\tilde{R}_n^{\textsc m}(\pi)$ trades the bias and variance of the estimator. When $M$ is large, the bias of $\tilde{R}_n^{\textsc m}(\pi)$ is small but its variance may be large. On the other hand, the variance goes to $0$ when $M \approx 0$ since in that case $\tilde{R}_n^{\textsc m}(\pi) \approx 0$ for any $\pi \in \Pi$. Similarly, $\tau \in [0, 1]$ trades the bias and variance of $\hat{R}_n^\tau(\pi)$ and can be seen as $\tau \approx \frac{1}{M}$.

This \emph{hard} clipping has some limitations. First, $\min(\cdot, M)$ leads to non-differentiable objectives that may require additional care in optimization \citep{papini2019optimistic}. Also, $\min(\cdot, M)$ is constant on $[M, \infty)$ leading to objectives with zero gradients for any policy $\pi$ that satisfies $w_\pi(a_i | x_i) > M$ for any $i \in [n]$. More importantly, hard clipping is sensitive to the choice of the clipping threshold $M$. In practice, tuning $M$ is challenging and may cause the learned policy to match the logging policy, leading to minimal improvements. To see this, consider the following illustrative example.

For simplicity, suppose that the problem is non-contextual, in which case the reward function $r$ only depends on the actions $a \in \cA$. It follows that policies do not depend on $x \in \cX$; they are now probability distributions $\pi(\cdot)$ over $\cA$. Also, assume that $\cA = [100]$ and that the reward received after taking action $a \in [100]$ is binary. That is, $r \sim {\rm Bern}(r(a))$ where $r(a) = 0.1 - 10^{-3}(a-1)$ is the expected reward of action $a$, and for any $p \in [0, 1],$ ${\rm Bern}(p)$ is the Bernoulli distribution with parameter $p$. This means that the best action is $1$ and the worst is $100$. Finally, the logging policy $\pi_0(\cdot)$ is $\epsilon$-greedy centered at action $50$. That is $\pi_0(50) = 1-\epsilon$, and for any $a \neq 50$, $\pi_0(a) = \frac{\epsilon}{99}$, with $\epsilon=0.05$,. 

Now consider $100$ deterministic policies $\pi_a(\cdot)$ for $a \in [100]$ such that $\pi_a(\cdot)$ is the Dirac distribution centered at $a$. In \cref{fig:example}, we plot the estimated reward of the policies $\pi_a$ using either \texttt{IPS} in \eqref{eq:ips_policy_value} or \texttt{IPS-min} in \eqref{eq:clip_ips_policy_value}. We generate $n=50{\rm k}$ samples and set $M=100 = \mathcal{O}(\sqrt{n})$ as suggested by \citet{ionides2008truncated}. \emph{With this choice of $M$}, \texttt{IPS-min} underestimates the reward of all policies $\pi_a$ for $a \neq 100$ since their weights $\pi_a/\pi_0$ are either $0$ or $ 99/ \epsilon>M$. The estimated reward of \texttt{IPS-min} is maximized in $\pi_{50} \approx \pi_0$ only. Thus, if we optimize $\tilde{R}_n^{\sc M}(\cdot)$ over Dirac policies, we will converge to the logging policy despite its bad performance.

Although the other variant of hard clipping, \texttt{IPS-max} in \eqref{eq:clip_ips_policy_value}, is differentiable, it is still sensitive to $\tau$ and may induce high bias similar to \cref{fig:example}. This is due to some loss of information related to the preferences of the logging policy. Indeed, for two actions $a$ and $a^\prime$ such that $\pi_0(a \mid x_i) \ll \pi_0(a^\prime \mid x_i) < \tau$ for an observed context $x_i$, the propensity scores $\pi_0(a \mid x_i)$ and $\pi_0(a^\prime \mid x_i)$ will be clipped to the same value $\tau$. Thus the information that, for context $x_i$, action $a^\prime$ is preferred by the logging policy than action $a$ will be lost.

\begin{figure}[ht]
\begin{center}
\vskip -0.1in
\centerline{\includegraphics[width=0.7\columnwidth]{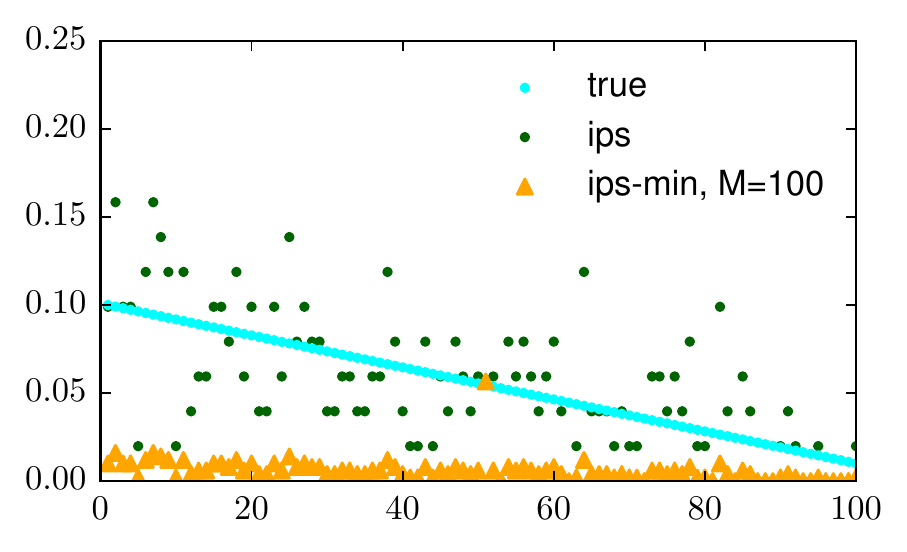}}
\vskip -0.2in
\caption{Effect of hard clipping on the estimation quality. The $x$-axis corresponds to actions $a \in [100]$. The $y$-axis is the estimated reward of each of the $100$ policies $\pi_a$ using either \texttt{IPS} or \texttt{IPS-min}. The cyan line is the true reward for each policy $\pi_a$.}
\label{fig:example}
\end{center}
\vskip -0.3in
\end{figure}

To mitigate this, we propose the following \emph{exponential smoothing} correction for IPS. Our estimators are defined as
\begin{align}\label{eq:exp_ips_policy_value}
\hspace{-0.3cm} \texttt{IPS-}\alpha:   \, \,    \hat{R}_n^\alpha(\pi)
  = \frac{1}{n} \sum_{i=1}^n c_i \hat{w}^\alpha_{\pi}(a_i | x_i)\,, \, \, \alpha \in [0, 1]\,, \nonumber\\
\hspace{-0.3cm} \texttt{IPS-}\beta: \, \,    \tilde{R}_n^\beta(\pi) 
= \frac{1}{n} \sum_{i=1}^n c_i \tilde{w}^\beta_{\pi}(a_i | x_i)\,, \, \, \beta \in [0, 1]\,,
\end{align}
where $\hat{w}^\alpha_{\pi}(a | x) = \frac{\pi(a | x)}{\pi_0(a | x)^\alpha}$ and $\tilde{w}^\beta_{\pi}(a | x) = \frac{\pi(a | x)^\beta}{\pi_0(a | x)^\beta}$. Here standard IPS is recovered for $\alpha=1$ and $\beta=1$. These estimators are differentiable in $\pi$ and do not suffer from stationary points in optimization as they are not constant in $\pi$ when $\beta \neq 0$ and $\alpha \neq 0$. Also, in contrast with \texttt{IPS-max} in \eqref{eq:clip_ips_policy_value}, $\hat{R}_n^\alpha(\pi)$ preserves the preferences of the logging policy. Precisely, for two actions $a$ and $a^\prime$ such that $\pi_0(a \mid x_i) < \pi_0(a^\prime \mid x_i)$ for an observed context $x_i$, we still have $\pi_0(a \mid x_i)^\alpha < \pi_0(a^\prime \mid x_i)^\alpha$ and the information that action $a^\prime$ is preferred by the logging policy than action $a$ is preserved.

While a similar correction to \texttt{IPS-}$\beta$ was proposed in \citet{korba2022adaptive}, its use in off-policy contextual bandits is novel. Also, \citet{su2020doubly,metelli2021subgaussian} regularized the importance weights $w$ as $ \frac{\lambda_1 w}{\lambda_1 + w^2}\,, \lambda_1>0$ and   $ \frac{w}{1-\lambda_2 + \lambda_2 w }\,, \lambda_2 \in [0, 1]$, respectively. Thus, the expression of both corrections is very different from ours. More importantly, these corrections entail different properties than ours. Roughly speaking, our correction allows us to \emph{simultaneously} \textbf{(1)} control a tuning parameter $\alpha \in [0, 1]$ that is in a bounded domain $[0, 1]$, \textbf{(2)} without constraining the resulting importance weights to be bounded, \textbf{(3)} and to obtain PAC-Bayes generalization guarantees as the correction $\frac{\pi}{\pi_0^\alpha}$ is linear in $\pi$; a technical requirement of our analysis. In contrast, \citet{metelli2021subgaussian,su2020doubly} do not provide generalization guarantees; they focus on OPE and only propose heuristics for OPL. Those heuristics are not based on theory, in contrast with ours which is directly derived from our generalization bound. Also, our approach has favorable empirical performance (\cref{app:other_corrections}).

Although \citet[Lemma 1]{korba2022adaptive} show that smoothing the importance weights similarly to \texttt{IPS-}$\beta$ in \eqref{eq:exp_ips_policy_value} reduces the variance, it might still be unclear how $\alpha$ and $\beta$ trade the bias and variance of our estimators in off-policy contextual bandits. To see this, let $\alpha \in [0, 1]$, then we have
\begin{align}\label{eq:alpha_bias_variance} 
    |\mathbb{B}(\hat{R}_n^\alpha(\pi))|  & \leq \E{x \sim \nu, a \sim \pi(\cdot | x)}{1 - \pi_0(a | x)^{1-\alpha}} \,,\\
\mathbb{V}\left[\hat{R}_n^\alpha(\pi)\right] & \leq \frac{1}{n} \mathbb{E}_{x \sim \nu, a \sim \pi(\cdot | x)}\big[ \frac{\pi(a | x)}{\pi_0(a | x)^{2\alpha-1}} \big] \nonumber\,,
\end{align}
with $\mathbb{B}(\hat{R}_n^\alpha(\pi)) = \mathbb{E}[\hat{R}_n^\alpha(\pi)] - R(\pi)$  and $\mathbb{V}[\hat{R}_n^\alpha(\pi)] = \mathbb{E}[(\hat{R}_n^\alpha(\pi)-\mathbb{E}[\hat{R}_n^\alpha(\pi)])^2]$ are respectively the bias and the variance of $\hat{R}_n^\alpha(\pi)$. The bound of the bias in \eqref{eq:alpha_bias_variance} is minimized in $\alpha=1$ (standard IPS); in which case it is equal to 0 (standard IPS is unbiased). In contrast, the bound of the variance is minimized in $\alpha =0$. Thus if the variance is small or $n$ is large enough such that $\mathbb{E}[ \pi(a | x)/\pi_0(a | x)^{2\alpha-1} ] /n \rightarrow 0$, then we set $\alpha  \rightarrow 1$. Otherwise, we set $\alpha  \rightarrow 0$. This shows that $\alpha$ trades the bias and variance of $\hat{R}_n^\alpha$. More details and a similar discussion for $ \tilde{R}_n^\beta(\pi)$ are deferred to \cref{proofs:ope}.

\section{PAC-Bayes Analysis for Off-Policy Learning}\label{sec:opl}

We now derive generalization bounds for our estimator. We opt for the PAC-Bayes framework for the following reasons. First, it is known to provide some of the tightest generalization bounds in challenging scenarios \citep{farid2021generalization}, for aggregated and randomized predictors \citep{alquier2021user}. Second, the bounds have a Kullback–Leibler (KL) divergence \citep{van2014renyi} term $D_{\mathrm{KL}}(\mathbb{Q} \| \mathbb{P})$ that depends on a \emph{fixed prior} $\mathbb{P}$ and a \emph{learning posterior} $\mathbb{Q}$ (see \cref{subsec:pac_bayes_framekwork} for a brief introduction). This quantity can be seen as a complexity measure, similarly to the covering number \citep{maurer2009empirical}. The difference is that complexity measures are uniform on the space of policies while the KL term in PAC-Bayes depends on the prior $\mathbb{P}$ and the posterior $\mathbb{Q}$. This allows getting sharper bounds when the former is well chosen. Third, the PAC-Bayes perspective fits very well with OPL. In fact, a policy $\pi$ can be written as an aggregation of predictors under some distribution $\mathbb{Q}$. Thus the prior $\mathbb{P}$ can be associated with the logging policy $\pi_0$ that we want to improve upon while the posterior $\mathbb{Q}$ is related to the learning policy $\pi$. Fourth, \citet{london2019bayesian} showed that PAC-Bayes can lead to tractable and scalable objectives, an important consideration in practice.

\subsection{Elements of PAC-Bayes }\label{subsec:pac_bayes_framekwork}

Let $\mathcal{Z} = \mathcal{X} \times \mathcal{Y}$ be an instance space: e.g.,  $\mathcal{X}$ and $\mathcal{Y}$ are the input and output space in supervised learning. Let $\mathcal{H} = \set{h : \cX \rightarrow \mathcal{Y}}$ denote a hypothesis space of mappings from $\mathcal{X}$ to $\mathcal{Y}$ (predictors). Also, let $L : \mathcal{H} \times \mathcal{Z} \rightarrow \real$ be a loss function and assume access to data $\cD_n = (z_i)_{i \in [n]}$ drawn from an unknown distribution $\mathbb{D}$. Let $R(h)= \E{z \sim \mathbb{D}}{L(h, z)}$ be the risk of $h \in \mathcal{H}$ while $\hat{R}_n(h)= \frac{1}{n} \sum_{i=1}^n L(h, z_i)$ is its empirical counterpart. Then the main focus in PAC-Bayes is to study the generalization capabilities of random hypothesis $\mathbb{Q}$ on $\mathcal{H}$ by controlling the gap between the expected risk under $\mathbb{Q}$, $\E{h \sim \mathbb{Q}}{R(h)}$ and the expected empirical risk under $\mathbb{Q}$, $\mathbb{E}_{h \sim \mathbb{Q}}[\hat{R}_n(h)]$. For example, assume that $L(h, z) \in [0,1]$ for any $(h, z) \in \mathcal{H} \times \mathcal{Z}$, let $\mathbb{P}$ be a \emph{fixed prior} distribution on $\mathcal{H}$ and let $\delta \in (0, 1)$. Then with probability at least $1-\delta$ over $\cD_n \sim \mathbb{D}^n$, the following inequality holds \emph{simultaneously for any posterior} distribution $\mathbb{Q}$ on $\mathcal{H}$
\begin{align*}
  \E{h \sim \mathbb{Q}}{R(h)} \leq \mathbb{E}_{h \sim \mathbb{Q}}[\hat{R}_n(h)] + \sqrt{ \frac{D_{\mathrm{KL}}(\mathbb{Q} \| \mathbb{P})+\log \frac{2\sqrt{n}}{\delta}}{2n}}.
\end{align*}
This was originally proposed by \citet{mcallester1998some}, and the reader may refer to \citet{alquier2021user,guedj2019primer} for more elaborate introductions of PAC-Bayes theory.

\subsection{PAC-Bayes for Off-Policy Contextual Bandits}
Let $\mathcal{H}=\{h: \cX \rightarrow \cA\}$ be a hypothesis space of mappings from $\cX$ (contexts) to $\cA$ (actions). Given a policy $\pi$ and a context $x \in \cX$, the action distribution $\pi(\cdot|x)$ is induced by a distribution $\mathbb{Q}$ over $\mathcal{H}$ \citep{london2019bayesian} such as
\begin{align}\label{eq:pac_policies}
   &\pi(a |x) = \pi_{\mathbb{Q}}(a | x) = \E{h \sim \mathbb{Q}}{\mathbb{I}_{\{h(x)=a\}}}\,.
\end{align}
This is not an assumption since any policy $\pi$ has this form when $\mathcal{H}$ is rich enough \citep[Theorem 2]{sakhi2022pac}. From \eqref{eq:pac_policies}, we observe that policies can be seen as an aggregation $\E{h \sim \mathbb{Q}}{\cdot}$ (under some distribution $\mathbb{Q}$ on the pre-defined hypothesis space $\mathcal{H}$) of deterministic decision rules $\mathbb{I}_{\{h(x)=a\}}$. This allows formulating OPL as a PAC-Bayes problem. Before showing how this is achieved, we start by providing two practical policies of such form.

\textbf{Example 1 (softmax and mixed-logit policies):} we define the space $\mathcal{H} = \set{h_{\theta, \gamma} \, ; \theta \in \real^{dK}, \gamma \in \real^K}$ of mappings $h_{\theta, \gamma}(x) = \argmax_{a \in \cA} \phi(x)^\top \theta_a + \gamma_a$. Here $\phi(x)$ outputs a $d$-dimensional representation of $x$, and $\gamma_a$ is a standard Gumbel perturbation, $\gamma_a \sim {\rm G}(0, 1)$ for any $a \in \cA$. Then
\begin{align}\label{eq:softmax_pac_bayes}
    \pi^{\textsc{sof}}_{\theta}(a | x) &= \frac{\exp(\phi(x)^\top \theta_a)}{\sum_{a^\prime \in \cA}\exp(\phi(x)^\top  \theta_{a^\prime})}\,,\nonumber\\
    &\stackrel{(i)}{=}  \E{\gamma \sim {\rm G}(0, 1)^K}{\mathbb{I}_{\{ h_{\theta, \gamma}(x) = a \}}}\,,
\end{align}
where $(i)$ follows from the Gumbel-Max trick (GMT) \citep{luce2012individual,maddison2014sampling}. Thus a \texttt{softmax} policy $\pi^{\textsc{sof}}_{\theta}$ can be written as in \eqref{eq:pac_policies}. Now we also consider random parameters $\theta \sim \cN(\mu, \sigma^2 I_{dK})$ with $\mu \in \real^{dK}$ and $\sigma>0$. Then, let $\mathbb{Q}= \cN(\mu, \sigma^2I_{dK}) \times {\rm G}(0, 1)^K$, it follows that $\pi_{\mathbb{Q}} = \pi^{\textsc{mixL}}_{\mu, \sigma}$ is a mixed-logit policy and it reads
\begin{align}\label{eq:logit_pac_bayes}
 \pi^{\textsc{mixL}}_{\mu, \sigma}&(a | x) =  \E{\theta \sim \cN(\mu, \sigma^2 I_d)}{\frac{\exp(\phi(x)^\top \theta_a)}{\sum_{a^\prime \in \cA}\exp(\phi(x)^\top  \theta_{a^\prime})}}\,,\nonumber\\
    &= \E{\theta \sim \cN(\mu, \sigma^2 I_d)\,, \gamma \sim {\rm G}(0, 1)^K}{\mathbb{I}_{\{ h_{\theta, \gamma}(x) = a \}}}.
\end{align}

\textbf{Example 2 (Gaussian policies):} \citet{sakhi2022pac} removed the Gumbel noise $\gamma$ in \eqref{eq:logit_pac_bayes} and consequently defined the hypothesis space as $\mathcal{H} = \set{h_{\theta} \, ; \theta \in \real^{dK}}$ of mappings $h_{\theta}(x) = \argmax_{a \in \cA} \phi(x)^\top \theta_a$ for any $x \in \cX$. Then, let $\mathbb{Q} = \cN(\mu, \sigma^2 I_{dK})$, it follows that $ \pi_{\mathbb{Q}} = \pi^{\textsc{gaus}}_{\mu, \sigma}$ reads 
\begin{align}\label{eq:gaussian_pac_bayes}
   \pi^{\textsc{gaus}}_{\mu, \sigma}(a | x) = \E{\theta \sim \cN(\mu, \sigma^2 I_d)}{\mathbb{I}_{\{ h_{\theta}(x) = a \}}}\,.
\end{align}
To see why removing the Gumbel noise can be beneficial, the reader may refer to \cref{app:policies}. After motivating the definition of policies in \eqref{eq:pac_policies}, we are in a position to relate our estimators to the general PAC-Bayes framework in \cref{subsec:pac_bayes_framekwork}. One technical requirement of our proof is that the estimator should be linear in $\pi$. Thus we focus on $\hat{R}_n^\alpha(\cdot)$ since $\tilde{R}_n^\beta(\pi)$ is non-linear in $\pi$. Let $h \in \mathcal{H}$, $x \in \cX$, $a \in \cA$ and $c \in [-1, 0]$, we define the loss $L_\alpha$ as 
\begin{align}\label{eq:our_loss}
   & L_\alpha(h, x, a, c) = \frac{\mathbb{I}_{\{h(x)=a\}}}{\pi_0(a | x)^\alpha}c\,.
\end{align}
Using the definition in \eqref{eq:pac_policies} and the linearity of the expectation, we have that $\hat{R}^\alpha_n(\cdot)$ in \eqref{eq:exp_ips_policy_value} can be written as 
\begin{align*}
    \hat{R}^\alpha_n(\pi_{\mathbb{Q}}) = \E{h \sim \mathbb{Q}}{\frac{1}{n} \sum_{i=1}^n L_\alpha(h, x_i, a_i, c_i)}\,.
\end{align*}
Moreover, the expectation of $\hat{R}_n(\pi_{\mathbb{Q}})$ reads
\begin{align*}
    R^\alpha(\pi_{\mathbb{Q}}) = \mathbb{E}_{h \sim \mathbb{Q}} \E{(x, a, c) \sim \mu_{\pi_0}}{L_\alpha(h, x, a, c)}\,.
\end{align*}
Finally, the main quantity of interest, the risk $R(\pi_{\mathbb{Q}})\,,$ can be expressed in terms of the loss with $\alpha=1\,,$ $L_1\,,$ as
\begin{align*}
    R(\pi_{\mathbb{Q}}) = \mathbb{E}_{h \sim \mathbb{Q}} \E{(x, a, c) \sim \mu_{\pi_0}}{L_1(h, x, a, c)}\,.
\end{align*}
Since $\hat{R}^\alpha_n(\pi_{\mathbb{Q}})$ is an unbiased estimator of $ R^\alpha(\pi_{\mathbb{Q}})$, PAC-Bayes can be used to bound $R^\alpha(\pi_{\mathbb{Q}}) - \hat{R}^\alpha_n(\pi_{\mathbb{Q}})$. This will allow bounding our quantity of interest $R(\pi_{\mathbb{Q}}) - \hat{R}^\alpha_n(\pi_{\mathbb{Q}})$.

\subsection{Main Result}\label{subsec:main_result}
To ease the exposition, we assume that the costs are deterministic. Then, in logged data $\cD_n$, $c_i = c(x_i, a_i)$ for any $i \in [n]$. Note that the same result holds for stochastic costs. We discuss our result and sketch its proof in \cref{sec:discussion}. The complete proof can be found in \cref{proof:main_thm_proof}.

\begin{theorem}\label{thm:main_result} Let $\lambda>0$,  $n \ge 1$, $\delta \in (0, 1)$, $\alpha \in [0, 1]$, and let $\mathbb{P}$ be a fixed prior on $\mathcal{H}$, then with probability at least $1-\delta$ over draws $\cD_n \sim \mu_{\pi_0}^n$, the following holds simultaneously for any posterior $\mathbb{Q}$ on $\mathcal{H}$ 
\begin{align*}
  |R(\pi_{\mathbb{Q}}) -\hat{R}_n^\alpha(\pi_{\mathbb{Q}})| \leq \sqrt{ \frac{{\textsc{kl}}_{1}(\pi_{\mathbb{Q}})}{2n} } + B_n^\alpha(\pi_{\mathbb{Q}})  +
\frac{{\textsc{kl}}_{2}(\pi_{\mathbb{Q}})}{n \lambda } \\ + \frac{\lambda}{2}\bar{V}_n^\alpha(\pi_{\mathbb{Q}})\,.
\end{align*}
where ${\textsc{kl}}_{1}(\pi_{\mathbb{Q}})  =D_{\mathrm{KL}}(\mathbb{Q} \| \mathbb{P})+\ln \frac{4\sqrt{n}}{\delta}\,,$ and
\begin{align*}
    &{\textsc{kl}}_{2}(\pi_{\mathbb{Q}})  =  D_{\mathrm{KL}}(\mathbb{Q} \| \mathbb{P})+\ln \frac{4}{\delta}\,,\\
   & B_n^\alpha(\pi_{\mathbb{Q}}) = 1 - \frac{1}{n}\sum_{i=1}^{n} \E{a \sim \pi_{\mathbb{Q}}(\cdot | x_i)}{\pi_0^{1-\alpha}(a | x_i)}\,,\\
    &\bar{V}_n^\alpha(\pi_{\mathbb{Q}}) = \frac{1}{n}\sum_{i=1}^n  \E{a \sim \pi_{0}(\cdot | x_i)}{\frac{\pi_{\mathbb{Q}}(a | x_i)}{\pi_0(a | x_i)^{2\alpha }}} + \frac{\pi_{\mathbb{Q}}(a_i | x_i)c_i^2}{\pi_0(a_i | x_i)^{2\alpha}}.
\end{align*}
\end{theorem}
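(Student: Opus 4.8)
The plan is to insert two intermediate, context-conditional quantities that separate the smoothing bias from the stochastic fluctuations, and then to control each piece with a tailored concentration argument. Conditioning on the observed contexts $x_1,\dots,x_n$, I define the conditional-mean risks $\bar{R}_n(\pi_{\mathbb{Q}}) = \frac1n\sum_i \mathbb{E}_{h\sim\mathbb{Q}}[c(x_i,h(x_i))]$ and $\bar{R}_n^\alpha(\pi_{\mathbb{Q}}) = \frac1n\sum_i \mathbb{E}_{h\sim\mathbb{Q}}\mathbb{E}_{a\sim\pi_0(\cdot\mid x_i)}[L_\alpha(h,x_i,a,c(x_i,a))]$, so that, given the contexts, $\hat{R}_n^\alpha(\pi_{\mathbb{Q}})$ is an unbiased estimator of $\bar{R}_n^\alpha(\pi_{\mathbb{Q}})$. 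I then use the decomposition
\[
R(\pi_{\mathbb{Q}}) - \hat{R}_n^\alpha(\pi_{\mathbb{Q}}) = \underbrace{\big(R - \bar{R}_n\big)}_{\text{(I)}} + \underbrace{\big(\bar{R}_n - \bar{R}_n^\alpha\big)}_{\text{(II)}} + \underbrace{\big(\bar{R}_n^\alpha - \hat{R}_n^\alpha\big)}_{\text{(III)}}
\]
and bound $|R(\pi_{\mathbb{Q}}) - \hat{R}_n^\alpha(\pi_{\mathbb{Q}})|$ by controlling each term on both sides.

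For (I), the summand $c(x_i,h(x_i))\in[-1,0]$ is bounded, so this is a textbook bounded-loss PAC-Bayes gap for the loss $\ell(h,x)=c(x,h(x))$; applying a two-sided Maurer/McAllester bound at level $\delta/2$ together with Pinsker's inequality gives $|R-\bar{R}_n|\le\sqrt{{\textsc{kl}}_{1}(\pi_{\mathbb{Q}})/(2n)}$, which is where the $\ln(4\sqrt n/\delta)$ constant (and the extra $\sqrt n$) originate. Term (II) is deterministic given the contexts: a direct computation yields $\bar{R}_n-\bar{R}_n^\alpha = \frac1n\sum_i \mathbb{E}_{a\sim\pi_{\mathbb{Q}}(\cdot\mid x_i)}[c(x_i,a)(1-\pi_0(a\mid x_i)^{1-\alpha})]$, and since $c\le0$ and $0\le 1-\pi_0^{1-\alpha}\le1$, this lies in $[-B_n^\alpha(\pi_{\mathbb{Q}}),0]$; hence $|\bar{R}_n-\bar{R}_n^\alpha|\le B_n^\alpha(\pi_{\mathbb{Q}})$, with the bias entering only the $\hat{R}_n^\alpha-R$ side.

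The crux is term (III), which must be controlled \emph{without} assuming bounded weights. Writing $V_i(h)=-L_\alpha(h,x_i,a_i,c_i)\ge0$, the loss $L_\alpha$ is bounded above by $0$ but unbounded below, so the two tails are asymmetric. For the easy side $\hat{R}_n^\alpha-\bar{R}_n^\alpha$ I use $e^{-u}\le1-u+\tfrac{u^2}{2}$ for $u\ge0$ to get, conditionally on the contexts, $\mathbb{E}[e^{\lambda(\mathbb{E}[V_i\mid x_i]-V_i)}\mid x_i]\le e^{\frac{\lambda^2}{2}\mathbb{E}[V_i^2\mid x_i]}$, whose conditional second moment matches the expectation-under-$\pi_0$ part of $\bar{V}_n^\alpha$. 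The heavy-tailed side $\bar{R}_n^\alpha-\hat{R}_n^\alpha$ is the main obstacle; here I invoke the self-bounding inequality $e^{\lambda V-\frac{\lambda^2}{2}V^2}\le1+\lambda V$, valid for $V\ge0$ (from $\log(1+u)\ge u-\tfrac{u^2}{2}$), which after taking $\mathbb{E}[\cdot\mid x_i]$ gives $\mathbb{E}[e^{\lambda(V_i-\mathbb{E}[V_i\mid x_i])-\frac{\lambda^2}{2}V_i^2}\mid x_i]\le1$; the penalty $\frac{\lambda^2}{2}V_i^2$ is exactly the realized second moment that becomes the second part of $\bar{V}_n^\alpha$ after the $\mathbb{E}_{h\sim\mathbb{Q}}$ aggregation. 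In both directions the per-side exponential moment is thus bounded by $1$ once the variance proxy is moved into the exponent, so the Donsker–Varadhan change of measure against the data-free prior $\mathbb{P}$, followed by Markov and Fubini, yields $\bar{R}_n^\alpha-\hat{R}_n^\alpha$ (resp.\ $\hat{R}_n^\alpha-\bar{R}_n^\alpha$) $\le \frac{{\textsc{kl}}_{2}(\pi_{\mathbb{Q}})}{n\lambda}+\frac\lambda2\bar{V}_n^\alpha(\pi_{\mathbb{Q}})$, each at level $\delta/4$, with $\bar{V}_n^\alpha$ dominating each one-sided second moment because it sums both parts.

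Finally I assemble the four one-sided events by a union bound (total failure probability $\delta/2+\delta/4+\delta/4=\delta$) and apply the triangle inequality to the decomposition, which reproduces ${\textsc{kl}}_{1}$, ${\textsc{kl}}_{2}$, $B_n^\alpha$ and $\bar{V}_n^\alpha$ exactly. The single genuinely delicate step is the heavy-tailed direction of (III): showing that the self-bounding inequality renders the exponential moment integrable and converts the heavy tail into the empirical variance is what lets the bound remain valid for unbounded weights, in particular for standard IPS ($\alpha=1$); I expect verifying that $\bar{V}_n^\alpha$ simultaneously dominates the conditional and realized second moments to demand the most care.
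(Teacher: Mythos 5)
Your proposal is correct, and its skeleton coincides with the paper's own proof: your decomposition (I)+(II)+(III), obtained by conditioning on the contexts to avoid the intractable expectation over $\nu$, is exactly the paper's $I_1+I_2+I_3$; the treatment of (I) via a two-sided bounded-loss PAC-Bayes bound at level $\delta/2$ (yielding $\log\frac{4\sqrt n}{\delta}$) and the deterministic closed-form bound $|{\rm (II)}|\le B_n^\alpha(\pi_{\mathbb{Q}})$ are also identical. Where you genuinely depart is the crux term (III). The paper casts it as a martingale difference sequence $f_i(a_i,h)$ and invokes the lemma of Haddouche and Guedj (\cref{lemma:app_maxime}, a Ville-type supermartingale inequality), which gives a \emph{two-sided} bound at level $\delta/2$ in one shot, with variance proxy $\langle M\rangle_n(h)+[M]_n(h)$; the fixed sign of the loss enters there only to discard a cross term when bounding this sum. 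You instead re-derive the concentration from scratch with two \emph{asymmetric one-sided} exponential-moment arguments exploiting $V_i\ge 0$: the Bernstein-type inequality $e^{-u}\le 1-u+u^2/2$ for the bounded-below direction, penalized by the conditional second moment alone, and the self-bounding inequality $e^{\lambda V-\lambda^2V^2/2}\le 1+\lambda V$ for the heavy-tailed direction, penalized by the realized second moment alone, each at level $\delta/4$, followed by Donsker--Varadhan, Markov and Fubini. The accounting works out: each side carries $\log\frac{4}{\delta}$, matching ${\textsc{kl}}_2$, and each of your per-side variance proxies is dominated by $\bar V_n^\alpha(\pi_{\mathbb{Q}})$ since both of its parts are nonnegative; the indicator-squared-equals-indicator step producing $\pi_{\mathbb{Q}}$ rather than $\pi_{\mathbb{Q}}^2$ appears in both proofs. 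What your route buys is a self-contained, elementary argument whose intermediate per-side penalties are actually tighter than $\bar V_n^\alpha$ before relaxation, and which makes explicit that nonnegativity of the (negated) loss is all that either tail requires. What the paper's route buys is modularity and generality: the same martingale lemma is reused verbatim to prove \cref{prop:direct_application}, and it applies to merely adapted (non-independent) sequences, whereas your product-of-conditional-MGFs step leans on the conditional independence of the actions given the contexts, which holds here only because the logged data is i.i.d.
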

We start by clarifying that the prior $\mathbb{P}$ can be any fixed distribution on $\mathcal{H}$. If we have access to $\mathbb{P}_0$ on $\mathcal{H}$ such that $\pi_0 = \pi_{\mathbb{P}_0}$, then it is natural to set $\mathbb{P} =  \mathbb{P}_0$. But this is just a choice and one may use priors that do not depend on $\pi_0$. Now we explain the main terms in our bound. First, the terms $ {\textsc{kl}}_{1}(\pi_{\mathbb{Q}})$ and  ${\textsc{kl}}_{2}(\pi_{\mathbb{Q}})$ contain the divergence $D_{\mathrm{KL}}(\mathbb{Q} \| \mathbb{P})$ which penalizes posteriors $\mathbb{Q}$ that differ a lot from the prior $\mathbb{P}$. Moreover, $B_n^\alpha(\pi_{\mathbb{Q}})$ is the bias conditioned on the contexts $(x_i)_{i \in [n]}\,$; $B_n^\alpha(\pi_{\mathbb{Q}})=0$ when $\alpha=1$ and $B_n^\alpha(\pi_{\mathbb{Q}})>0$ otherwise. Also, the first term in $\bar{V}_n^\alpha(\pi_{\mathbb{Q}})$ \emph{resembles} the theoretical second moment of the regularized importance weights $\frac{\pi}{\pi^\alpha_0}$ (without the cost) when they are seen as random variables. Similarly, the second term in $\bar{V}_n^\alpha(\pi_{\mathbb{Q}})$ \emph{resembles} the empirical second moment of $\frac{\pi}{\pi^\alpha_0}c$ (with the cost). Finally, if $\bar{V}_n^\alpha(\pi_{\mathbb{Q}})$ is bounded, then we can set $\lambda = 1/\sqrt{n}$, in which case our bound scales as $\mathcal{O}(1/\sqrt{n} + B_n^\alpha(\pi_{\mathbb{Q}}))$. In practice, we set $\alpha \approx 1$ leading to $B_n^\alpha(\pi_{\mathbb{Q}})) \approx 0$ and the bound would scale as $\mathcal{O}(1/\sqrt{n})$.

This bound motivates the idea that we only need to control the second moments $\bar{V}_n^\alpha(\pi_{\mathbb{Q}})$ to get generalization guarantees for $\hat{R}_n^\alpha(\cdot)$.  In particular, one of the main strengths of our result is that it holds for standard IPS with $\alpha=1$ under the assumption that $\bar{V}_n^1(\pi_{\mathbb{Q}})$ is bounded. This assumption is less restrictive than assuming that the importance weight as a random variable, $\pi_{\mathbb{Q}}(a | x)/\pi_0(a|x)$, is bounded, a required assumption for traditional concentration bounds. In contrast, $\bar{V}_n^\alpha(\pi_{\mathbb{Q}})$ only involves the \emph{expectations} of the random variables $\pi_{\mathbb{Q}}(a | x_i) / \pi_0(a | x_i)^{2\alpha }$, and ratios of $\pi_0$ evaluated at observed contexts and actions and  $(x_i, a_i)_{i \in [n]}$, that have non-zero probabilities under $\pi_0$ by definition.

Our result holds for fixed $\lambda>0$ and $\alpha \in [0, 1]$. In \cref{app:thm_extension}, we extend this to any potentially data-dependent $\lambda \in (0, 1)$ and $\alpha \in (0, 1]$. The assumption that $c \in [-1, 0]$ can be relaxed to $c \in [-B, 0]$ up to additional factors $B^2$ and $B$ in $\bar{V}_n^\alpha(\pi_{\mathbb{Q}})$ and  ${\textsc{kl}}_{1}(\pi_{\mathbb{Q}})$, respectively. Finally, our bound is suitable for stochastic first-order optimization \citep{robbins1951stochastic} since data-dependent quantities are not inside a square root. This is important for scalability.

\subsection{Adaptive and Data-Driven Tuning of $\alpha$}\label{subsec:data_dep_alpha}

\cref{thm:main_result} assumes that $\alpha$ is fixed (although we extend it for data-dependent $\alpha$ in \cref{app:thm_extension}). However, providing a procedure to tune $\alpha$ in an adaptive and data-dependent fashion is important in practice. Thus we propose to set 
\begin{align}\label{eq:data_dependent_alpha}
    \alpha_* = \argmin_{\alpha \in [0, 1]} B^\alpha_n(\pi_{\mathbb{Q}})  + \sqrt{\frac{2 {\textsc{kl}}_{2}(\pi_{\mathbb{Q}})\bar{V}^\alpha_n(\pi_{\mathbb{Q}}) }{n}}\,,
\end{align}
where all the terms are defined in \cref{thm:main_result}. Roughly speaking, $\alpha_*$ establishes a bias-variance trade-off; it minimizes the sum of the bias term $B^\alpha_n(\pi_{\mathbb{Q}})$ and the square root of the second moment term $\bar{V}^\alpha_n(\pi_{\mathbb{Q}})$, weighted by $\sqrt{\frac{2 {\textsc{kl}}_{2}(\pi_{\mathbb{Q}})}{n}}$. Here \eqref{eq:data_dependent_alpha} is obtained by minimizing the bound in \cref{thm:main_result} with respect to both $\alpha$ and $\lambda$ as follows. First, we minimize the bound in \cref{thm:main_result} with respect to $\lambda$; the minimizer is $\lambda_* = \sqrt{\frac{2 {\textsc{kl}}_{2}(\pi_{\mathbb{Q}}) }{n\bar{V}^\alpha_n(\pi_{\mathbb{Q}})}}$. Then, the bound in \cref{thm:main_result} evaluated at $\lambda = \lambda_*$ becomes
\begin{align}\label{eq:data_dependent_alpha_2}
    \sqrt{\frac{ {\textsc{kl}}_{1}(\pi_{\mathbb{Q}})}{2n}} +  B^\alpha_n(\pi_{\mathbb{Q}})  + \sqrt{\frac{2 {\textsc{kl}}_{2}(\pi_{\mathbb{Q}})\bar{V}^\alpha_n(\pi_{\mathbb{Q}}) }{n}}\,.
\end{align}
Finally, $\alpha_*$ is defined as the minimizer of \eqref{eq:data_dependent_alpha_2} with respect to $\alpha \in [0, 1]$, and $\sqrt{\frac{ {\textsc{kl}}_{1}(\pi_{\mathbb{Q}})}{2n}}$ does not appear in \eqref{eq:data_dependent_alpha} as it does not depend on $\alpha$. Note that $\alpha_*$ depends on both logged data $\cD_n$ and the learning policy $\pi_{\mathbb{Q}}$. Thus it is adaptive; its value changes in each iteration during optimization.

\section{Discussion}\label{sec:discussion}
We start by interpreting and comparing our results to related work. Then, we present the technical challenges in \cref{subsec:tech_challenges}. After that, we sketch our proof in \cref{subsec:sketch}.

\subsection{Interpretation and Comparison to Related Work}\label{subsec:interpretation}

\cref{thm:main_result} gives insight into the number of samples needed so that the performance of $\hat{\pi}_n$ is close to that of the optimal policy $\pi_*$. To simplify the problem, we consider the Gaussian policies in \eqref{eq:gaussian_pac_bayes} and assume that there exists $\mathbb{Q}_* = \cN(\mu_*, I_{dK})$ with $\mu_* \in \real^{dK}$ such that the optimal policy is $\pi_* = \pi_{\mathbb{Q}_*}$. Also, we let the prior $\mathbb{P} =  \cN(\mu_0, I_{dK})$ and assume that $\pi_0$ is uniform. This is possible since as we said before, the prior $\mathbb{P}$ does not have to depend on the logging policy $\pi_0$. Then we have that $D_{\mathrm{KL}}(\mathbb{Q}_* \| \mathbb{P}) = \norm{\mu_* - \mu_0}^2 / 2$, $ B_n^\alpha(\pi_{\mathbb{Q}_*}) = 1 - 1/K^{1-\alpha}$ and $\bar{V}_n^\alpha(\pi_{\mathbb{Q}_*}) \leq 2K^{2 \alpha}$. The last inequality is not tight but it allows getting an easy-to-interpret term that does not depend on $n$. Now let $\epsilon> 2(1 - K^{\alpha-1})$ for $\alpha \in [1- \log 2 / \log K, 1]$. This condition on $\alpha$ ensures that $\epsilon \in [0, 1]$ and it is mild as $\alpha$ is often close to 1. Then, it holds with high probability that
\begin{align*}
 n \, \widetilde{>} \Big(\frac{ \norm{\mu_* - \mu_0}^2  + K^{2 \alpha}}{\epsilon -  2 (1 - K^{\alpha-1})}\Big)^2  \implies R(\hat{\pi}_n) \leq R(\pi_{\mathbb{Q}_*}) + \epsilon \,,
\end{align*}
where we omit constant and logarithmic terms in $\widetilde{>}$. This gives an intuition on the sample complexity for our procedure. In particular, fewer samples are needed in four cases. The first is when $\epsilon$ is large, which means that we afford to learn a policy whose performance is far from the optimal one. The second is when the prior $\mathbb{P}$ is close to $\mathbb{Q}_*$, that is when $\norm{\mu_* - \mu_0}$ is small. This highlights that the choice of the prior $\mathbb{P}$ is important. The third is when the second-moment term $K^{2 \alpha}$ is small. The fourth is when the bias $B_n^\alpha(\pi_{\mathbb{Q}_*})$ is small. In particular, when $\alpha=1$, the bias is 0. In contrast, the second-moment term is minimized in $\alpha=0$. This is where the choice of $\alpha$ matters. The proofs of these claims and more detail can be found in \cref{proof:practice_theory}.

Prior works \citep{swaminathan2015batch,london2019bayesian,sakhi2022pac} do not provide such insight for two reasons. They only derived one-sided inequalities and thus they cannot relate the risk of the learned policy with the optimal one as we discussed in the last three paragraphs of \cref{sec:setting}. Also, their bounds do not contain a bias term and as a result, they are minimized in $\tau=1$. In contrast, ours have a bias term and this allows seeing the effect of $\alpha$.

Our paper derives a \emph{tractable generalization bound} for an estimator other than clipped IPS in \eqref{eq:clip_ips_policy_value}, which also holds for the standard IPS in \eqref{eq:ips_policy_value}. The bounds in \citet{swaminathan2015batch,london2019bayesian,sakhi2022pac} have a multiplicative dependency on the clipping threshold ($M$ or $1/\tau$ in \eqref{eq:clip_ips_policy_value}). Standard IPS is recovered when $M \rightarrow \infty$ (or $\tau = 0$) in which case their bounds explode. We successfully avoid any similar dependency on $\alpha$. Moreover, \citet{swaminathan2015batch,london2019bayesian} only used their generalization bounds to inspire learning principles. Although we directly optimize our theoretical bound (\cref{thm:main_result})
in our experiments, our analysis also inspires a learning principle where we simultaneously penalize the $L_2$ distance, the variance and the bias. That is, we find $\mu \in \real^{dK}$ that minimizes
\begin{align}\label{eq:learning_principle}
\hspace{-0.2cm} \hat{R}^\alpha_n(\pi_{\mu}) + \lambda_1 \norm{\mu - \mu_0}^2  + \lambda_2  \bar{V}_n^\alpha(\pi_{\mu}) + \lambda_3 B_n^\alpha(\pi_{\mu})\,.
\end{align}
Here $\lambda_1, \lambda_2$ and $\lambda_3$ are tunable hyper-parameters, $\pi_{\mu}$ can be the Gaussian policy in \eqref{eq:gaussian_pac_bayes}, $\pi_{\mu} = \pi^{\textsc{gaus}}_{\mu, 1}$, with a fixed $\sigma=1$, and $\mu_0$ is the mean of the prior $\mathbb{P} = \cN(\mu_0, I_{dK})$. Existing works either penalize the $L_2$ distance or the variance. For completeness, we also show that this learning principle should be preferred over existing ones in \cref{app:add_discussion}. 

\subsection{Technical Challenges}\label{subsec:tech_challenges}

\citet{london2019bayesian,sakhi2022pac} derived PAC-Bayes generalization bounds for the estimator \texttt{IPS-max} in \eqref{eq:clip_ips_policy_value}. Extending their analyses to our case is not straightforward. First, their estimator \texttt{IPS-max} is lower bounded by $-1/\tau$, and thus they relied on traditional techniques for $[0,1]$-losses \citep{alquier2021user}. In contrast, our loss in \eqref{eq:our_loss} is not lower bounded, and controlling it without assuming that the importance weights are bounded is challenging.

Moreover, their bounds have a multiplicative dependency on $1/\tau$, hence they explode as $\tau \rightarrow 0$. This makes them vacuous for small values of $\tau$ and inapplicable to the standard IPS estimator in \eqref{eq:ips_policy_value} recovered for $\tau =0$. 
In contrast, our bound does not have a similar dependency on $\alpha$ and it is also valid for standard IPS recovered for $\alpha=1$. Moreover, we derive two-sided inequalities rather than one-sided ones for the important reasons that we priorly discussed.  
This requires carefully controlling in \emph{closed-form} the absolute value of the bias. Prior works only used that the bias is negative which was enough to obtain one-sided inequalities. 

Explaining other challenges requires stating a result that inspired our analysis: \citet{kuzborskij2019efron} derived PAC-Bayes generalization bounds for unbounded losses by only controlling their second moments. Recently, \citet{haddouche2022pac} proposed a similar result using Ville's inequality \citep{bercu2008exponential}. Adapting their theorem to our problem is given \cref{prop:direct_application}. We slightly adapt their proof to get a \emph{two-sided} inequality for a \emph{negative} loss. The proof is deferred to \cref{proof:direct_application}. 

\begin{proposition}\label{prop:direct_application}  Let $\lambda>0$, $n \geq 1$, $\delta \in(0,1)$, $\alpha \in [0, 1]$ and let $\mathbb{P}$ be a fixed prior on $\mathcal{H}$, then with probability at least $1-\delta$ over draws $\cD_n \sim \mu_{\pi_0}^n$, the following holds simultaneously for all posteriors, $\mathbb{Q}$, on $\mathcal{H}$
\begin{align}\label{eq:direct_application}
    &|R^\alpha(\pi_{\mathbb{Q}}) -\hat{R}_n^\alpha(\pi_{\mathbb{Q}})|   \leq \frac{D_{\mathrm{KL}}(\mathbb{Q} \| \mathbb{P})+\log \frac{2}{\delta}}{\lambda n} \\ &+\frac{\lambda}{2 n} \sum_{i=1}^n \frac{\pi_{\mathbb{Q}}(a_i | x_i)}{\pi_0^{2\alpha}(a_i | x_i)} c_i^2 +\frac{\lambda}{2}\mathbb{E}_{(x, a, c) \sim \mu_{\pi_0}}\left[\frac{\pi_{\mathbb{Q}}(a | x)}{\pi_0^{2\alpha}(a | x)} c^2\right]\nonumber\,,
\end{align}
\end{proposition}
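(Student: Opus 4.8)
The plan is to prove the two sides of the absolute value separately, via the standard PAC-Bayes recipe (exponential-moment bound, change of measure, Markov/Ville), and then combine them with a union bound. The structural fact I would exploit throughout is that the loss $L_\alpha$ in \eqref{eq:our_loss} is \emph{non-positive}: since $c\in[-1,0]$ and $\mathbb{I}_{\{h(x)=a\}}/\pi_0(a\mid x)^\alpha\ge 0$, we have $L_\alpha(h,z)\le 0$ for every $h$ and $z=(x,a,c)$. This non-positivity is precisely what lets me control the (unbounded-below) loss through two elementary exponential inequalities, with no assumption that the importance weights are bounded.

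For the direction $\hat{R}_n^\alpha(\pi_{\mathbb{Q}})-R^\alpha(\pi_{\mathbb{Q}})\le\cdots$, I would fix $h\in\mathcal{H}$, write $X_i(h)=L_\alpha(h,z_i)\le 0$ with mean $\mu(h)=\mathbb{E}_{z\sim\mu_{\pi_0}}[L_\alpha(h,z)]$, and apply the inequality $e^{y}\le 1+y+\tfrac{y^2}{2}$, valid for $y\le 0$, with $y=\lambda X_i(h)$. Taking expectations and using $1+t\le e^{t}$ gives $\mathbb{E}[e^{\lambda X_i(h)}]\le \exp\bigl(\lambda\mu(h)+\tfrac{\lambda^2}{2}\mathbb{E}_{z\sim\mu_{\pi_0}}[L_\alpha(h,z)^2]\bigr)$, so by independence the data-expectation of $\exp\bigl(\lambda\sum_i(X_i(h)-\mu(h))-\tfrac{\lambda^2 n}{2}\mathbb{E}_{z\sim\mu_{\pi_0}}[L_\alpha(h,z)^2]\bigr)$ is at most $1$. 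For the reverse direction $R^\alpha(\pi_{\mathbb{Q}})-\hat{R}_n^\alpha(\pi_{\mathbb{Q}})\le\cdots$ the sign of the argument flips and this simple bound no longer applies; here I would instead use $e^{u-u^2/2}\le 1+u$ for $u\ge 0$ with $u=-\lambda X_i(h)\ge 0$, which yields $\mathbb{E}\bigl[\exp\bigl(-\lambda X_i(h)-\tfrac{\lambda^2}{2}X_i(h)^2\bigr)\bigr]\le 1-\lambda\mu(h)\le e^{-\lambda\mu(h)}$, so the data-expectation of $\exp\bigl(\lambda\sum_i(\mu(h)-X_i(h))-\tfrac{\lambda^2}{2}\sum_i L_\alpha(h,z_i)^2\bigr)$ is at most $1$. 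The asymmetry is the point: the first direction carries the \emph{population} second moment $\mathbb{E}_{z\sim\mu_{\pi_0}}[L_\alpha^2]$ while the second carries the \emph{empirical} one $\tfrac1n\sum_i L_\alpha(h,z_i)^2$, which is exactly why both second-moment terms appear in the final inequality.

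In each direction I would then push the prior expectation $\mathbb{E}_{h\sim\mathbb{P}}$ inside (Fubini), apply Markov's inequality at level $\delta/2$ to the resulting nonnegative quantity—this is the source of $\log(2/\delta)$, and equivalently Ville's inequality applied to the supermartingale of \citet{haddouche2022pac} gives an anytime version—and finally invoke the Donsker--Varadhan change-of-measure inequality to replace $\mathbb{E}_{h\sim\mathbb{P}}$ by $\mathbb{E}_{h\sim\mathbb{Q}}$ at the cost of $D_{\mathrm{KL}}(\mathbb{Q}\|\mathbb{P})$. Dividing by $\lambda n$ and using the identity $\mathbb{E}_{h\sim\mathbb{Q}}[L_\alpha(h,z)^2]=\pi_{\mathbb{Q}}(a\mid x)\,c^2/\pi_0^{2\alpha}(a\mid x)$, which follows from \eqref{eq:pac_policies} and $\mathbb{I}_{\{h(x)=a\}}^2=\mathbb{I}_{\{h(x)=a\}}$, turns each expression into one of the stated one-sided bounds.

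A union bound over the two events, each of probability at least $1-\delta/2$, makes both one-sided bounds hold simultaneously with probability at least $1-\delta$; since every second-moment term is nonnegative, each one-sided bound is dominated by the sum of the two, so the maximum of them—hence the absolute value—is controlled by the stated right-hand side. I expect the main obstacle to be the reverse direction: the elementary estimate $e^{u-u^2/2}\le 1+u$ for $u\ge 0$ (used in place of a Bernstein-type argument that would require boundedness) is the key adaptation that delivers two-sided control of a \emph{negative, unbounded} loss, and verifying it while correctly matching each direction to its population-versus-empirical second moment is the delicate part of the argument.
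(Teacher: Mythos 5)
Your proof is correct, but it takes a genuinely different route from the paper's. The paper does not re-derive any exponential-moment bounds: it defines the centered variables $f(x_i,a_i,h)=\mathbb{E}_{x\sim\nu,\,a\sim\pi_0(\cdot|x)}\bigl[L_\alpha(h,x,a,c(x,a))\bigr]-L_\alpha(h,x_i,a_i,c_i)$, checks that they form a martingale difference sequence, and invokes the two-sided PAC-Bayes martingale bound of \citet{haddouche2022pac} (\cref{lemma:app_maxime}) as a black box, which controls $\bigl|\mathbb{E}_{h\sim\mathbb{Q}}[M_n(h)]\bigr|$ by the KL term plus $\tfrac{\lambda}{2}\,\mathbb{E}_{h\sim\mathbb{Q}}\bigl[\langle M\rangle_n(h)+[M]_n(h)\bigr]$; the only adaptation is the observation that, when expanding $f_i^2+\mathbb{E}[f_i^2\,|\,\mathcal{F}_{i-1}]$, the cross term (twice the product of the sampled loss and its conditional mean) is non-positive because the loss has a fixed sign, which is exactly how the sum of the population and empirical second moments appears. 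You replace that lemma with a from-scratch i.i.d.\ argument: two one-sided exponential-moment bounds, built on $e^{y}\le 1+y+y^2/2$ for $y\le 0$ and $e^{u-u^2/2}\le 1+u$ for $u\ge 0$ (both applicable precisely because $L_\alpha\le 0$), each followed by Fubini, Markov at level $\delta/2$, Donsker--Varadhan, and a final union bound. Both arguments thus hinge on the same essential ingredient---the fixed sign of the loss---but yours is more self-contained and exposes a structure the paper's symmetric lemma hides: the upward deviation $\hat{R}_n^\alpha(\pi_{\mathbb{Q}})-R^\alpha(\pi_{\mathbb{Q}})$ is controlled by the \emph{population} second moment alone, and the downward deviation $R^\alpha(\pi_{\mathbb{Q}})-\hat{R}_n^\alpha(\pi_{\mathbb{Q}})$ by the \emph{empirical} one alone, so each of your one-sided inequalities is slightly tighter than the stated bound, which you then recover by relaxing each side to the sum of the two non-negative terms. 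The paper's route buys brevity and reuse of an established supermartingale result (with anytime validity for free); yours buys elementary transparency and the sharper per-direction bounds.
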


There are two main issues with \cref{prop:direct_application}. First, the term $\mathbb{E}_{(x, a, c) \sim \mu_{\pi_0}}\big[\frac{\pi_{\mathbb{Q}}(a | x)}{\pi_0^{2\alpha}(a | x)} c^2\big]$ in \eqref{eq:direct_application} is intractable. One could bound $c^2$ by $1$, but the resulting term will still be intractable due to the expectation over the unknown distribution of contexts $\nu$. Second, we need an upper bound of $|R(\pi_{\mathbb{Q}}) - \hat{R}_n^\alpha(\pi_{\mathbb{Q}})|$ while \cref{prop:direct_application} only provides one for $|R^\alpha(\pi_{\mathbb{Q}})-\hat{R}_n^\alpha(\pi_{\mathbb{Q}})|$. Thus it remains to quantify the approximation error $|R(\pi_{\mathbb{Q}})-R^\alpha(\pi_{\mathbb{Q}})|$. This will also require computing an expectation over $x \sim \nu$, which is intractable.

\subsection{Sketch of Proof for \Cref{thm:main_result}}\label{subsec:sketch}

We conclude by showing how the technical challenges above were solved. First, We decompose $R(\pi_{\mathbb{Q}})-\hat{R}_n^\alpha(\pi_{\mathbb{Q}})$ as
\begin{align*}
    R(\pi_{\mathbb{Q}})-\hat{R}_n^\alpha &(\pi_{\mathbb{Q}}) = I_1 + I_2 + I_3\,, \qquad \text{where}
\end{align*}
  \begin{align*}
    I_1 &= R(\pi_{\mathbb{Q}}) - \frac{1}{n}\sum_{i=1}^n R(\pi_{\mathbb{Q}} | x_i)\,,\\
     I_2 &=  \frac{1}{n} \sum_{i=1}^n R(\pi_{\mathbb{Q}} | x_i) - \frac{1}{n}\sum_{i=1}^n R^\alpha(\pi_{\mathbb{Q}} | x_i)\,,\\
     I_3 &= \frac{1}{n}\sum_{i=1}^n R^\alpha(\pi_{\mathbb{Q}} | x_i) - \hat{R}_n^\alpha(\pi_{\mathbb{Q}})\,,\\
    R(\pi_{\mathbb{Q}} | x_i) & = \E{a \sim \pi_{\mathbb{Q}}(\cdot | x_i)}{c(x_i, a)}\,,\\
    R^\alpha(\pi_{\mathbb{Q}} | x_i) &= \mathbb{E}_{a \sim \pi_0(\cdot | x_i)}\Big[\frac{\pi_{\mathbb{Q}}(a | x_i)}{\pi_0(a | x_i)^\alpha}c(x_i, a)\Big]\,.
\end{align*}
$I_1$ is the estimation error of the empirical mean of the risk using $n$ i.i.d. contexts $(x_i)_{i \in [n]}$. This term is introduced to avoid the intractable expectation over $x \sim \nu$. Moreover, $I_2$ is the bias term conditioned on the contexts $(x_i)_{i \in [n]}$ and we bound it in closed-form. Finally, $I_3$ is the estimation error of the risk conditioned on the contexts $(x_i)_{i \in [n]}$. Again, this conditioning allows us to avoid the intractable expectation over $x \sim \nu$ and to consequently bound $|I_3|$ by tractable terms. First, \citet[Theorem~3.3]{alquier2021user} yields that with probability at least $1-\frac{\delta}{2}$, it holds for any $\mathbb{Q}$ on $\mathcal{H}$ that
\begin{align*}
    |I_1| \leq \sqrt{ \frac{D_{\mathrm{KL}}(\mathbb{Q} \| \mathbb{P})+\log \frac{4\sqrt{n}}{\delta}}{2n}} \,.
\end{align*}
Also, $|I_2|$ is bounded similarly to \eqref{eq:alpha_bias_variance} as 
\begin{align*}
   |I_2| \leq \frac{1}{n} \sum_{i=1}^{n} \E{a \sim \pi_{\mathbb{Q}}(\cdot | x_i)}{1 - \pi_0^{1-\alpha}(a | x_i)}\,.
\end{align*}
Bounding $|I_3|$ is achieved by expressing it using martingale difference sequences $(f_i(a_i, h))_{i \in [n]}$ that we construct as follows. Let $(\mathcal{F}_i)_{i \in \{0\} \cup [n]}$ be a filtration adapted to $(S_i)_{i \in [n]}$ where $S_i = (a_\ell)_{\ell \in [i]}$ for any $i \in [n]$, we define
\begin{align*}
f_i\left(a_i, h\right) = \E{a \sim \pi_0(\cdot | x_i)}{\frac{\mathbb{I}_{\{h(x_i)=a\}}c(x_i, a)}{\pi_0(a | x_i)^\alpha} } \\- \frac{\mathbb{I}_{\{h(x_i)=a_i\}}c_i}{\pi_0(a_i | x_i)^\alpha}\,.
\end{align*}
Then we show that for any $h \in \mathcal{H}$,  $(f_i(a_i, h))_{i \in [n]}$ is a martingale difference sequence. After that, we apply \citet[Theorem 5]{haddouche2022pac} and obtain that with probability at least $1-\delta/2$, it holds for any $\mathbb{Q}$ on $\mathcal{H}$ that
\begin{align*}
    \left|\E{h \sim \mathbb{Q}}{M_n(h)}\right| &\leq \frac{D_{\mathrm{KL}}(\mathbb{Q} \| \mathbb{P})+\log \frac{4}{\delta}}{\lambda} +\frac{\lambda}{2} \E{h \sim \mathbb{Q}}{\bar{V}_n(h)},\nonumber
\end{align*}
where $M_n(h)=\sum_{i=1}^n f_i\left(a_i, h\right)$ and $\bar{V}_n(h)=\sum_{i=1}^n f_i\left(a_i, h\right)^2 + \mathbb{E}\left[f_i\left(a_i, h\right)^2 | \mathcal{F}_{i-1}\right]$. Then notice that $\E{h \sim \mathbb{Q}}{M_n(h)}$ can be expressed in terms of $I_3$ as 
\begin{align*}
    \E{h \sim \mathbb{Q}}{M_n(h)} &= \sum_{i=1}^n R^\alpha(\pi_{\mathbb{Q}} | x_i) - n\hat{R}_n^\alpha(\pi_{\mathbb{Q}}) = n I_3\,,
\end{align*}
Moreover, $\E{h \sim \mathbb{Q}}{\bar{V}_n(h)}$ is bounded by
\begin{align*}
    \sum_{i=1}^n  \E{a \sim \pi_0(\cdot | x_i)}{\frac{\pi_{\mathbb{Q}}(a | x_i)}{\pi_0(a | x_i)^{2\alpha}}}+ \frac{\pi_{\mathbb{Q}}(a_i | x_i)}{\pi_0(a_i | x_i)^{2\alpha}}c_i^2\,.
\end{align*}
Thus with probability at least $1-\frac{\delta}{2}$, it holds for any $\mathbb{Q}$ that
\begin{align*}
   |I_3 | &\leq \frac{D_{\mathrm{KL}}(\mathbb{Q} \| \mathbb{P})+\log\frac{4}{\delta}}{n\lambda}+ \frac{\lambda}{2n} \sum_{i=1}^n\frac{\pi_{\mathbb{Q}}(a_i | x_i)}{\pi_0(a_i | x_i)^{2\alpha}}c_i^2\\
   &  + \frac{\lambda}{2n}\sum_{i=1}^n  \E{a \sim \pi_0(\cdot | x_i)}{\frac{\pi_{\mathbb{Q}}(a | x_i)}{\pi_0(a | x_i)^{2\alpha}}}\,.
\end{align*}
Our result is obtained by bounding $|I_1| + |I_2| + |I_3|$. One shortcoming of our analysis is that $\bar{V}_n^\alpha(\pi_{\mathbb{Q}})$ is not exactly and only resembles the sum of the theoretical and empirical second moments of our estimator. Precisely, the terms $\pi_{\mathbb{Q}}/\pi_0^{2\alpha}$ should be $\pi_{\mathbb{Q}}^2/\pi_0^{2\alpha}$. This problem arises due to our definition of the martingale difference sequences $(f_i(a_i, h))_{i \in [n]}$ in \eqref{eq:our_loss}. Precisely, in our proof, we compute the square $f_i(a_i, h)^2$. However, the square of an indicator function is the indicator function itself. Thus applying the expectation afterwards, $\E{h \sim \mathbb{Q}}{f_i(a_i, h)^2}$, leads to $\pi_{\mathbb{Q}}$ appearing instead of $\pi_{\mathbb{Q}}^2$. This issue is inherent in the PAC-Bayes formulation and seminal works \citep{london2019bayesian,sakhi2022pac} would suffer the same issue. Solving this would be beneficial and we leave it to future work.

\begin{figure*}[t!]
  \centering  \includegraphics[width=\linewidth]{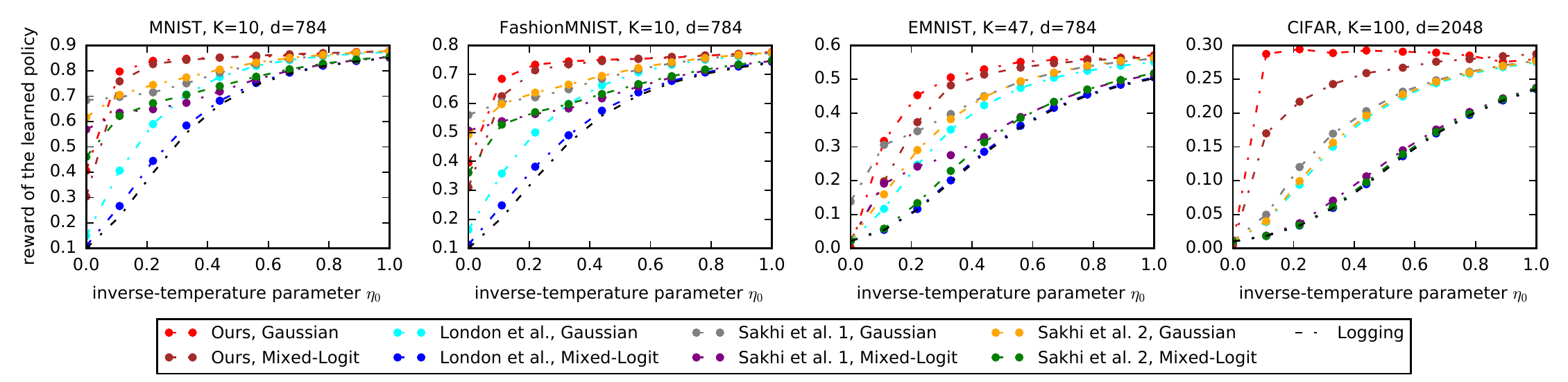}
  \vspace{-0.8cm}
  \caption{The reward of the learned policy using one of the baselines with varying quality of the logging policy $\eta_0 \in [0, 1]$.} 
  \label{fig:main_exp_results}
\end{figure*}

\section{Experiments}\label{sec:experiments}

We briefly present our experiments. More details and discussions can be found in \Cref{app:all_experiments}. We consider the standard supervised-to-bandit conversion \citep{agarwal2014taming} where we transform a supervised training set $\mathcal{S}^{\textsc{tr}}_{n}$ to a logged bandit data $\cD_n$ as described in \cref{alg:supervised_to_bandit} in \cref{app:setup}. Here the action space $\cA$ is the label set and the context space $\cX$ is the input space. Then, $\cD_n$ is used to train our policies. After that, we evaluate the reward of the learned policies on the supervised test set $\mathcal{S}^{\textsc{ts}}_{n_{\textsc{ts}}}$ as described in \cref{alg:supervised_to_bandit_test} in \cref{app:setup}. Roughly speaking, the resulting reward quantifies the ability of the learned policy to predict the true labels of the inputs in the test set. This is our performance metric; the higher the better. We use 4 image classification datasets \texttt{MNIST} \citep{lecun1998gradient}, \texttt{FashionMNIST} \citep{xiao2017fashion},  \texttt{EMNIST} \citep{cohen2017emnist} and \texttt{CIFAR100} \citep{krizhevsky2009learning}.

The logging policy is defined as $\pi_0 = \pi_{\eta_0 \cdot \mu_0}^{\textsc{sof}}$ in \eqref{eq:softmax_pac_bayes}, where $\mu_0 = (\mu_{0,a})_{a \in \cA} \in \real^{dK}$ and $\eta_0 \in [0, 1]$ is the inverse-temperature parameter. The higher $\eta_0$, the better the performance of $\pi_0$. When $\eta_0=0$, $\pi_0$ is uniform. The parameters $\mu_0$ are learned using 5\% of the training set $\mathcal{S}^{\textsc{tr}}_{n}$. In our experiments, we consider both, Gaussian and mixed-logit policies, in \eqref{eq:logit_pac_bayes} and \eqref{eq:gaussian_pac_bayes}, for which we set the prior as $\mathbb{P} = \cN(\eta_0 \mu_0, I_{dK})$ and $\mathbb{P} = \cN(\eta_0 \mu_0, I_{dK}) \times {\rm G}(0, 1)^K$, respectively. Given that $\mu_0$ are learnt on $5\%$ of $\mathcal{S}^{\textsc{tr}}_{n}$, we train our policies on the remaining $95\%$ portion of $\mathcal{S}^{\textsc{tr}}_{n}$ to match our theory that requires the prior to not depend on training data. The policies are trained using Adam \citep{kingma2014adam} with a learning rate of $0.1$ for $20$ epochs.

We compare our bound to those in \citet{london2019bayesian,sakhi2022pac}; discarding the intractable bound in \citet{swaminathan2015batch} as it requires computing a covering number. Here we do not include the learning principles in \citet{swaminathan2015batch,london2019bayesian} since we directly optimize our bounds. But we make such a comparison in \cref{app:add_discussion} for completeness, showing the favorable performance of our bound and the newly proposed learning principle in \eqref{eq:learning_principle}. Also, we do not compare to \citet{su2020doubly,metelli2021subgaussian} since they do not provide generalization guarantees; they focus on OPE and only propose a heuristic for OPL. However, we still show the favorable performance of our approach in OPL compared to \citet{su2020doubly,metelli2021subgaussian} in \cref{app:other_corrections} for completeness.

Prior methods are not named. Thus we refer to them as \textbf{(Author, Policy)} where \textbf{Author} $\in$ \{\textbf{Ours, London et al., Sakhi et al. 1, Sakhi et al. 2\} and Policy $\in$ \{Gaussian, Mixed-Logit}\}. Here \textbf{Ours}, \textbf{London et al.}, \textbf{Sakhi et al. 1} and \textbf{Sakhi et al. 2} correspond to \cref{thm:main_result},  \citet[Theorem 1]{london2019bayesian}, \citet[Proposition 1]{sakhi2022pac}, and \citet[Proposition 3]{sakhi2022pac}, respectively. Since we have two classes of policies, each bound leads to two baselines. For example, \citet[Theorem 1]{london2019bayesian} leads to\textbf{ (London et al., Gaussian)} and \textbf{(London et al., Mixed-Logit)}. More details are provided in \cref{app:baselines}.

In \cref{fig:main_exp_results}, we report the reward of the learned policies. Here we fix $\tau= 1/\sqrt[\leftroot{-2}\uproot{2}4]{n} \approx 0.06$ and $\alpha = 1-1/\sqrt[\leftroot{-2}\uproot{2}4]{n} \approx 0.94$ so that when $n$ is large enough, both $\hat{R}_n^\tau(\pi)$ and $ \hat{R}_n^\alpha(\pi)$ approach $\hat{R}_n^{\textsc{ips}}(\pi)$ \citep{ionides2008truncated}. This is because standard IPS should be preferred when $n \rightarrow \infty$. To have a fair comparison, we fixed $\alpha$ instead of tuning it in an adaptive fashion as described in \cref{subsec:data_dep_alpha}. However, we also provide the results with an adaptive $\alpha$ in \cref{fig:varying_params}. Let us start with interpreting \cref{fig:main_exp_results} (with fixed $\alpha$ and $\tau$). Overall, our method outperforms all the baselines. We also observe that Gaussian policies behave better than mixed-logit policies. However, this is less significant for our method where the performances of both Gaussian and mixed-logit policies are comparable. Moreover, our method reaches the maximum reward even when the logging policy has an average performance. In contrast, the baselines only reach their best reward when the logging policy is well-performing ($\eta_0 \approx 1$), in which case minor to no improvements are made. Finally, the baselines induce a better reward when the logging policy is uniform ($\eta_0 = 0$). But our method has a better reward when $\eta_0>0$, which is more common in practice. 

Our choice of $\tau$ and $\alpha$ does not affect the above conclusions. In \cref{fig:varying_params} (left-hand side), we compare our method with the best baseline, \textbf{(Sakhi et al. 2)} with Gaussian policies, for $20$ evenly spaced values of $\tau \in (0, 1)$ and $\alpha \in (0, 1)$. We also include the results using the adaptive tuning procedure of $\alpha$ described in \cref{subsec:data_dep_alpha} (green curve). This procedure is reliable since the performance with an adaptive $\alpha$ (green curve) is comparable with the best possible choice of $\alpha$. Also, our method consistently outperforms the best baseline \textbf{(Sakhi et al. 2)} with the best value of $\tau$ when the logging policy is not uniform $(\eta_0 >0)$. Also, there is no very bad choice of $\alpha$, in contrast with $\tau = 10^{-5}$ (dark blue plot) which led to minimal improvement upon all logging policies. This might be due to the $1/\tau$ dependency in existing bounds. 

To see the effect of $\alpha$, we consider the following experiment. We split the logging policies into two groups. The first is called \emph{modest logging} which corresponds to logging policies $\pi_0$ whose $\eta_0$ is between $0$ and $0.5$. This group includes the uniform policy and other average-performing policies. The second is called \emph{good logging} and it includes the logging policies whose $\eta_0$ is between $0.5$ and $1$. Then, for each $\alpha$, we compute the average reward of the learned policy, with that value of $\alpha$, across these two groups. This leads to the two red and green curves in \cref{fig:varying_params} (right-hand side). Overall, we observe that $\alpha \approx 0.7$ leads to the best performance across the modest logging group. Thus when the performance of the logging policy is bad or average, which is common in practice, regularization can be critical. In contrast, when the performance of the logging policy is already good and $n$ is large enough, regularization might not be needed and $\alpha \approx 1$ would also lead to good performance. This is one of the main strengths of our bound; it holds for the standard IPS recovered with $\alpha=1$. This result goes against the belief that clipped IPS should always be preferred to standard IPS. Here, our bound applied to standard IPS outperformed clipping by a large margin when the logging policy is relatively well-performing. Similar results for the other datasets are deferred to \cref{app:add_results}.

\begin{figure}[ht]
\begin{center}
\vskip -0.1in
\centerline{\includegraphics[width=\columnwidth]{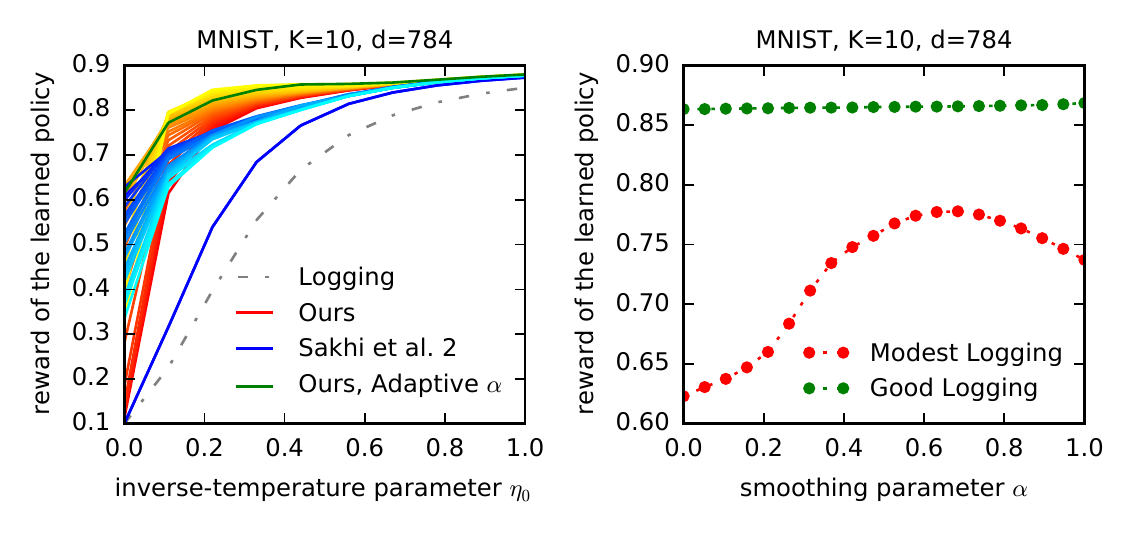}}
\vskip -0.2in
\caption{On the left-hand side is the reward of the learned policy with varying $\tau \in (0, 1)$, $\alpha \in (0, 1)$ and $\eta_0 \in [0, 1]$, and for an adaptive $\alpha$ using the procedure in \cref{subsec:data_dep_alpha} (green curve). The blue-to-cyan and red-to-yellow colors correspond to varying values of $\tau$ and $\alpha$, respectively. The lighter the color, the higher the value of $\tau$ or $\alpha$. The green curve corresponds to the reward of the learned policy with an adaptive and data-dependent $\alpha$ (\cref{subsec:data_dep_alpha}). On the right-hand side is the \emph{average} reward of the learned policies using our method across the modest and good logging groups, $\eta_0 \in [0, 0.5]$ (red) and $\eta_0 \in [0.5, 1]$ (green), respectively.}
\label{fig:varying_params}
\end{center}
\vskip -0.3in
\end{figure}

\section{Conclusion}\label{sec:conclusion}

In this paper, we investigated a smooth regularization of IPS in the context of OPL. First, we highlighted the pitfalls of hard clipping and advocated for a soft regularization alternative, called exponential smoothing. Moreover, we addressed some fundamental theoretical limitations of existing OPL approaches. Those limitations include the use of one-sided inequalities instead of two-sided ones, the use of learning principles and the use of evaluation bounds in OPL. Building upon this, we successfully derived a \emph{tractable two-sided} PAC-Bayes \emph{generalization bound} for our estimator, which \emph{we directly optimize}. We demonstrated, both theoretically through our bias-variance trade-off analysis in \eqref{eq:alpha_bias_variance} and our bound in \cref{thm:main_result}, and empirically, that this smooth regularization may be critical in some situations. In contrast with all prior works, our bound also applies to the standard IPS. This allowed us to also show that in some other cases, slight to no correction of IPS is needed in OPL.

\newpage

\bibliography{example_paper}
\bibliographystyle{icml2023}

\newpage
\appendix
\onecolumn

\section*{Organization of the Supplementary Material}
The supplementary material is organized as follows.
\begin{itemize}[topsep=0pt,itemsep=0pt]
    \item In \textbf{\cref{sec:related_work}}, we  give a detailed comparative
review of the literature of OPE and OPL.
\item In \textbf{\cref{proofs:ope}}, we provide some results and proofs for the bias and variance trade-off for our estimators. 
\item In \textbf{\cref{proofs:opl}}, we prove \cref{thm:main_result}. We also provide the proofs for all the claims made in \cref{sec:opl}.
\item In \textbf{\cref{app:all_experiments}}, we present in detail our experimental setup for reproducibility. This appendix also includes additional experiments.
\end{itemize}

\section{Related Work}\label{sec:related_work}

A contextual bandit \citep{lattimore19bandit} is a popular and practical framework for online learning to act under uncertainty \citep{li10contextual,chu11contextual}. In practice, the action space is large and short-term gains are important. Thus the agent should be \emph{risk-averse} which goes against the core principle of online algorithms that seek to explore the action space for the sake of long-term gains \citep{auer02finitetime,thompson33likelihood,russo18tutorial}. Although some practical algorithms have been proposed to efficiently explore the action space of a contextual bandit \citep{zong2016cascading,hong2022deep,zhu2022contextual,aoualimixed}. A clear need remains for an offline procedure that allows optimizing decision-making using offline data. Fortunately, we have access to logged data about previous interactions. The agent can leverage such data to learn an improved policy \emph{offline} \citep{swaminathan2015batch,london2019bayesian,sakhi2022pac} and consequently enhance the performance of the current system. In this work, we are concerned with this offline, or \emph{off-policy}, formulation of contextual bandits \citep{dudik2011doubly,dudik2012sample,dudik14doubly,wang2017optimal,farajtabar2018more}. Before learning an improved policy, an important intermediary step is to estimate the performance of policies using logged data, as if they were evaluated online. This task is referred to as \emph{off-policy evaluation (OPE)} \citep{dudik2011doubly}. After that, the resulting estimator is optimized to approximate the optimal policy, and this is referred to as \emph{off-policy learning (OPL)} \citep{swaminathan2015batch}. Next, we review both OPE and OPL approaches.

\subsection{Off-Policy Evaluation}
\label{sec:related_work_OPE}

Off-policy evaluation in contextual bandits has seen a lot of interest these recent years \citep{dudik2011doubly,dudik2012sample,dudik14doubly,wang2017optimal,farajtabar2018more,su2019cab,su2020doubly,kallus2021optimal,metelli2021subgaussian,kuzborskij2021confident,saito2022off,sakhi2020blob,jeunen2021pessimistic}. We can distinguish between three main families of approaches in the literature. First, \emph{direct method (DM)} \citep{jeunen2021pessimistic} learns a model that approximates the expected cost and then uses it to estimate the performance of evaluated policies. Unfortunately, DM can suffer from modeling bias and misspecification. Thus DM is often designed for specific use cases, in particular large-scale recommender systems \citep{sakhi2020blob,jeunen2021pessimistic,aouali2021combining,aouali2022probabilistic}. Second, \emph{inverse propensity scoring (IPS)} \citep{horvitz1952generalization,dudik2012sample} estimates the cost of the evaluated policies by removing the preference bias of the logging policy in logged data. Under the assumption that the evaluation policy is absolutely continuous with respect to the logging policy, IPS is unbiased, but it can suffer high variance. Note that it can also be highly biased when such an assumption is violated \citep{sachdeva2020off}. The variance issue is acknowledged and some fixes were proposed. For instance, clipping the importance weights \citep{ionides2008truncated,swaminathan2015batch}, self normalizing them \citep{swaminathan2015self}, etc. (see \citet{gilotte2018offline} for a survey). Third, \emph{doubly robust (DR)}  \citep{robins1995semiparametric,bang2005doubly,dudik2011doubly,dudik14doubly,farajtabar2018more} is a combination of DM and IPS. Here a model of the expected cost is used as a control variate for IPS to reduce the variance. Finally, the accuracy of an estimator $\hat{R}_{n}(\pi)$ in OPE is assessed using the mean squared error (MSE) defined as 
\begin{align*}
    {\rm MSE}(\hat{R}_n(\pi)) 
    &= \mathbb{E}\big[\big( \hat{R}_n(\pi) - R(\pi) \big)^2\big]\,\\
    &=  \mathbb{B}(\hat{R}_n(\pi))^2 + \mathbb{V}\big[\hat{R}_n(\pi)\big]\,,
\end{align*}
where $\mathbb{B}(\hat{R}_n(\pi)) = \mathbb{E}_{\cD_n}[\hat{R}_n(\pi)] - R(\pi)$  and $\mathbb{V}[\hat{R}_n(\pi)] = \mathbb{E}_{\cD_n}[(\hat{R}_n(\pi)-\mathbb{E}_{\cD_n}[\hat{R}_n(\pi)])^2]$ are respectively the bias and the variance of the estimator. It may be relevant to note that \citet{metelli2021subgaussian} argued that high-probability concentration rates should be preferred over the MSE to evaluate OPE estimators as they provide non-asymptotic guarantees. In this work, we highlighted the effect of $\alpha$ and $\beta$ in OPE following the common methodology of using the MSE as a performance metric. However, we also derived two-sided high-probability generalization bounds that attest to the quality of our estimator.

\subsection{Off-Policy Learning}
\label{sec:opl_learning_principles}

Previous works focused on deriving learning principles inspired by generalization bounds. First, \citet{swaminathan2015batch} derived a generalization bound for the \texttt{IPS-min} estimator in \eqref{eq:clip_ips_policy_value} of the form 
\begin{align}\label{eq:svp_bound}
    R(\pi) \leq \tilde{R}^M_n(\pi) + \mathcal{O}\left(\sqrt{\frac{\hat{V}_n(\pi) \mathcal{C}_n(\Pi, \delta)}{n}}+M\frac{ \mathcal{C}_n(\Pi, \delta)}{n}\right)\,,
\end{align}
where $\mathcal{C}_n(\Pi, \delta)$ is the complexity measure of the class of learning policies $\Pi$ while $\hat{V}_n(\pi)$ is the empirical variance of the estimator on the logged data $\cD_n$. The term $\mathcal{C}_n(\Pi, \delta)$ is not necessarily tractable. Thus the generalization bound above was only used to inspire the following learning principle
\begin{align}\label{eq:svp_principle}
\min_{\mu} \tilde{R}^M_n(\pi_\mu) + \lambda\sqrt{\frac{\hat{V}_n(\pi_\mu)}{n}}\,,
\end{align}
where $\lambda$ is a tunable hyper-parameter. This learning principle favors policies that simultaneously enjoy low estimated cost and empirical variance. \citet{faury2020distributionally} generalized their work using distributional robustness optimization while \citet{zenati2020counterfactual} generalized it to continuous action spaces. The latter also proposed a soft clipping scheme but they derived a generalization bound similar to the one in \citet{swaminathan2015batch}. Hence they also used the learning principle in \eqref{eq:svp_principle}. Our paper improves upon these work in different ways. First, \eqref{eq:svp_bound} has a multiplicative dependency on $M$. Therefore, it is not applicable to standard IPS recovered for $M \rightarrow \infty$. In contrast, our bound in \cref{thm:main_result} does not have a similar dependency on $\alpha$ and thus it also provides generalization guarantees for standard IPS without assuming that the importance weights are bounded. Second, the complexity measure $\mathcal{C}_n(\Pi, \delta)$ is often hard to compute while our bound is tractable and the KL terms can be computed or bounded in closed-form for Gaussian and mixed-logit policies. Third, our bound is differentiable and scalable while the learning principle in \eqref{eq:svp_principle} requires additional care in optimization \citep{swaminathan2015batch}. Fourth, it is challenging to tune $\lambda$ in \eqref{eq:svp_principle} using a procedure that is aligned with online metrics. Finally, we follow the theoretically grounded approach where we optimize our bound directly instead of using a learning principle. This direct optimization of the bound does not require any additional hyper-parameters tuning.

Recently, \citet{london2019bayesian} elegantly made the connection between PAC-Bayes theory and OPL. As a result, they derived a generalization bound for \texttt{IPS-max} in \eqref{eq:clip_ips_policy_value} which roughly has the following form 
    \begin{align}\label{eq:l2_bound}
    R(\pi_\mu) \leq \hat{R}^\tau_n(\pi_\mu)  + \mathcal{O}\left(\sqrt{\frac{(\hat{R}^\tau_n(\pi_\mu) + \frac{1}{\tau}) \norm{\mu -\mu_0}^2}{\tau n}}+\frac{ \norm{\mu -\mu_0}^2}{\tau n}\right)\,.
\end{align}
Again, this bound was used to inspire a novel learning principle in the form 
\begin{align}\label{eq:l2_principle}
\min_{\mu} \hat{R}^\tau_n(\pi_\mu) + \lambda \norm{\mu - \mu_0}^2\,,
\end{align}
where $\lambda$ is a tunable hyper-parameter and $\mu_0 \in \real^{dK}$ is the parameter of the logging policy. This principle favors policies with low estimated cost and whose parameter is not far from that of the logging policy in terms of $L_2$ distance. While the bound of \citet{london2019bayesian} is tractable, it still has a multiplicative dependency on $1/\tau$. This makes it inapplicable to standard IPS recovered for $\tau=0$. It is also not suitable for stochastic first-order optimization \citep{robbins1951stochastic} since the data-dependent quantity $ \hat{R}^\tau_n(\pi_\mu)$ is inside a square root. Moreover, optimizing directly their bound leads to minimal improvements over the logging policy in practice. In their work, they used the learning principle in \eqref{eq:l2_principle} instead which suffers the same issues that we discussed before for \citet{swaminathan2015batch}, except that it is scalable. Recently, \citet{sakhi2022pac} derived novel generalization bounds for a doubly robust version of the \texttt{IPS-max} estimator in \eqref{eq:clip_ips_policy_value}. \citet{sakhi2022pac} optimized the theoretical bound directly instead of using some form of learning principle and they showed favorable performance over existing methods. Unfortunately, their bounds have the same multiplicative dependency on $1/\tau$ which makes them vacuous for small values of $\tau$ and inapplicable to standard IPS. Moreover, we derive two-sided generalization bounds while all these works only derived one-sided generalization bounds. Unfortunately, the latter does not provide any guarantees on the expected performance of the learned policy. Also, we propose a different estimator to the clipped IPS traditionally used for OPL and we demonstrate empirically that it has better performance.

\section{Bias and Variance Trade-Off}\label{proofs:ope}
In this section, we provide additional results on how  $\beta$ and $\alpha$ control the bias and variance of $\tilde{R}_n^\beta(\cdot)$ and $\hat{R}_n^\alpha(\cdot)$, respectively. Precisely, in \cref{prop:beta_bias_variance,prop:alpha_bias_variance} we upper bound the absolute bias and variance of $\hat{R}_n^\alpha(\cdot)$ and $\tilde{R}_n^\beta(\cdot)$, respectively.

\subsection{Bias and variance of $\texttt{IPS-}\alpha$}\label{app:alpha_bias_variance}

The following proposition states the bias-variance trade-off for $\hat{R}_n^\alpha(\cdot)$. 

\begin{proposition}[Bias and variance of $\texttt{IPS-}\alpha$]\label{prop:alpha_bias_variance} Let $\alpha \in [0, 1]$, the following holds for any evaluation policy $\pi \in \Pi$ that is absolutely continuous with respect to $\pi_0$
\begin{align*}
    |\mathbb{B}(\hat{R}_n^\alpha(\pi))|  & \leq \E{x \sim \nu, a \sim \pi(\cdot | x)}{1 - \pi_0(a | x)^{1-\alpha}} \,,\nonumber\\
\mathbb{V}\left[\hat{R}_n^\alpha(\pi)\right] & \leq \frac{1}{n} \mathbb{E}_{x \sim \nu, a \sim \pi(\cdot | x)}\big[ \frac{\pi(a | x)}{\pi_0(a | x)^{2\alpha-1}} \big] \,.
\end{align*}
\end{proposition}

\begin{proof}
We first bound the bias as 
\begin{align*}
 \mathbb{B}(\hat{R}_n^\alpha(\pi))  & = \E{}{\hat{R}_n^\alpha(\pi)} - R(\pi)\,,\\
 &= \frac{1}{n} \sum_{i=1}^n \E{x_i \sim \nu, a_i \sim \pi_0(\cdot | x_i), c_i \sim p(\cdot | x_i, a_i)}{c_i \frac{\pi(a_i | x_i)}{\pi_0(a_i | x_i)^\alpha}} - R(\pi)\,,\\
    & \stackrel{(i)}{=} \E{(x, a, c) \sim \mu_{\pi_0}}{c \frac{\pi(a | x)}{\pi_0(a | x)^\alpha}} - R(\pi)\,,\\
    & = \E{x \sim \nu}{\sum_{a \in \cA}c(x, a) \frac{\pi(a | x)}{\pi_0(a | x)^{\alpha-1}}} - \E{x \sim \nu}{\sum_{a \in \cA}c(x, a)\pi(a | x)}\,,\\
    & = \E{x \sim \nu}{\sum_{a \in \cA}c(x, a) \pi(a | x) ( \pi_0(a | x)^{1-\alpha}-1)}\,,\\
    & = \E{x \sim \nu, a \sim \pi(\cdot | x)}{c(x, a) ( \pi_0(a | x)^{1-\alpha}-1)}\,,\\
\end{align*}
where $(i)$ follows from the i.i.d. assumption. Since $\pi_0(a | x)^{1-\alpha} \leq 1$ for any $x \in \cX$ and $a \in \cA$, we have that
\begin{align*}
 |\mathbb{B}(\hat{R}_n^\alpha(\pi))|     & \leq \E{x \sim \nu, a \sim \pi(\cdot | x)}{ |c(x, a)||\pi_0(a | x)^{1-\alpha}-1|}\,,\\
&\leq \E{x \sim \nu, a \sim \pi(\cdot | x)}{ 1-\pi_0(a | x)^{1-\alpha}}\,.
\end{align*}

The variance is bounded as 
\begin{align*}
\mathbb{V}\left[\tilde{R}_n^\beta(\pi)\right] &= \frac{1}{n^2} \sum_{i=1}^n \mathbb{V}_{x_i \sim \nu, a_i \sim \pi_0(\cdot | x_i), c_i \sim p(\cdot | x_i, a_i)}\Big[c_i \frac{\pi(a_i | x_i)}{\pi_0(a_i | x_i)^\alpha}\Big] \,,\\
   &= \frac{1}{n} \mathbb{V}_{(x, a, c) \sim \mu_{\pi_0}}\Big[c \frac{\pi(a | x)}{\pi_0(a | x)^\alpha}\Big] \,,\\
   &\leq\frac{1}{n} \mathbb{E}_{(x, a, c) \sim \mu_{\pi_0}}\Big[c^2 \frac{\pi(a | x)^2}{\pi_0(a | x)^{2\alpha}}\Big] \,,\\
   &\leq\frac{1}{n} \mathbb{E}_{x \sim \nu, a \sim \pi_0(\cdot | x)}\Big[\frac{\pi(a | x)^2}{\pi_0(a | x)^{2\alpha}}\Big] \,,\\
    &=\frac{1}{n} \mathbb{E}_{x \sim \nu}\Big[ \sum_{a \in \cA} \frac{\pi(a | x)^{2}}{\pi_0(a | x)^{2\alpha-1}}\Big] \,,\\
    &=\frac{1}{n} \mathbb{E}_{x \sim \nu, a \sim \pi(\cdot | x)}\Big[ \frac{\pi(a | x)}{\pi_0(a | x)^{2\alpha-1}}\Big] \,.
\end{align*}
\end{proof}

\subsection{Bias and variance of $\texttt{IPS-}\beta$}\label{app:beta_bias_variance}

The following proposition states the bias-variance trade-off for $\tilde{R}_n^\beta(\cdot)$. 

\begin{proposition}[Bias and variance of $\texttt{IPS-}\beta$]\label{prop:beta_bias_variance} Let $\beta \in [0, 1]$, the following holds for any evaluation policy $\pi \in \Pi$ that is absolutely continuous with respect to $\pi_0$
\begin{align*}
    | \mathbb{B}(\tilde{R}_n^\beta(\pi))| & \leq \mathbb{E}_{x \sim \nu, a \sim \pi(\cdot | x)}\big[\big|\big(\frac{\pi(a | x)}{\pi_0(a | x)}\big)^{\beta-1}-1\big|\big] \,,\\
\mathbb{V}\left[\tilde{R}_n^\beta(\pi)\right] & \leq \frac{1}{n} \mathbb{E}_{x \sim \nu, a \sim \pi(\cdot | x)}\big[\big(\frac{\pi(a | x)}{\pi_0(a | x)}\big)^{2\beta-1}\big] \,.
\end{align*}
\end{proposition}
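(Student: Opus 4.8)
The plan is to mirror the proof of \cref{prop:alpha_bias_variance} (the $\texttt{IPS-}\alpha$ case), since the estimator $\tilde{R}_n^\beta$ has the same linear-in-samples structure, only with the weight $\tilde{w}^\beta_\pi(a|x) = (\pi(a|x)/\pi_0(a|x))^\beta$ in place of $\hat{w}^\alpha_\pi$. I would treat the bias and the variance separately, in that order, and the absolute continuity of $\pi$ with respect to $\pi_0$ is what guarantees every displayed weight and expectation is well defined.

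For the bias, I would first use the i.i.d. assumption to collapse the empirical average into a single expectation under $\mu_{\pi_0}$, giving $\mathbb{B}(\tilde{R}_n^\beta(\pi)) = \E{(x,a,c)\sim\mu_{\pi_0}}{c\,\pi(a|x)^\beta/\pi_0(a|x)^\beta} - R(\pi)$. Expanding the inner expectation over $a \sim \pi_0(\cdot|x)$ turns the weighted summand into $\pi(a|x)^\beta \pi_0(a|x)^{1-\beta}$, and the key algebraic step is to factor this as $\pi(a|x)\,(\pi(a|x)/\pi_0(a|x))^{\beta-1}$. Subtracting $R(\pi) = \E{x\sim\nu,a\sim\pi(\cdot|x)}{c(x,a)}$ then yields $\mathbb{B}(\tilde{R}_n^\beta(\pi)) = \E{x\sim\nu,a\sim\pi(\cdot|x)}{c(x,a)\,[(\pi(a|x)/\pi_0(a|x))^{\beta-1}-1]}$. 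Taking absolute values inside and bounding $|c(x,a)|\leq 1$ gives the claimed bound.

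For the variance, I would again invoke independence of the samples to reduce to $\frac{1}{n}\mathbb{V}_{(x,a,c)\sim\mu_{\pi_0}}[c\,\pi(a|x)^\beta/\pi_0(a|x)^\beta]$, upper bound the variance by the second moment, and use $c^2\leq 1$ to reach $\frac{1}{n}\E{x\sim\nu,a\sim\pi_0(\cdot|x)}{\pi(a|x)^{2\beta}/\pi_0(a|x)^{2\beta}}$. Expanding the expectation over $\pi_0(\cdot|x)$ produces a factor $\pi_0(a|x)$ that cancels one power in the denominator, and the analogous factorization $\pi(a|x)^{2\beta}/\pi_0(a|x)^{2\beta-1} = \pi(a|x)\,(\pi(a|x)/\pi_0(a|x))^{2\beta-1}$ lets me re-express the resulting sum as an expectation under $\pi$, delivering the second bound.

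The one genuine difference from the $\alpha$ case — and the only point requiring care — is that there one could drop the absolute value in the bias because $\pi_0(a|x)^{1-\alpha}\leq 1$ holds pointwise. Here no such bound is available: the ratio $\pi/\pi_0$ is unconstrained, so $(\pi/\pi_0)^{\beta-1}$ may lie on either side of $1$, and the absolute value must be retained in the final statement. Apart from that observation, the argument is entirely routine, so I expect no substantive obstacle beyond keeping the exponent bookkeeping in the two factorizations correct.
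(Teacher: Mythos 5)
Your proposal is correct and follows essentially the same route as the paper's proof: collapse the empirical average via the i.i.d.\ assumption, expand the expectation over $a \sim \pi_0(\cdot \mid x)$, use the factorization $\pi(a\mid x)^\beta \pi_0(a\mid x)^{1-\beta} = \pi(a\mid x)\big(\pi(a\mid x)/\pi_0(a\mid x)\big)^{\beta-1}$ to rewrite everything as an expectation under $\pi$, and bound $|c|\le 1$ (and $c^2 \le 1$ with the variance-by-second-moment step for the second claim). Your closing observation is also exactly the right subtlety: unlike the $\alpha$ case, $(\pi/\pi_0)^{\beta-1}$ is not pointwise bounded by $1$, so the absolute value in the bias bound cannot be dropped, which is precisely why the paper's statement retains it.
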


\begin{proof}
We first bound the bias as 
\begin{align*}
 \mathbb{B}(\tilde{R}_n^\beta(\pi))  & = \E{}{\tilde{R}_n^\beta(\pi)} - R(\pi)\,\\ &= \frac{1}{n} \sum_{i=1}^n \E{x_i \sim \nu, a_i \sim \pi_0(\cdot | x_i), c_i \sim p(\cdot | x_i, a_i)}{c_i \Big(\frac{\pi(a_i | x_i)}{\pi_0(a_i | x_i)}\Big)^\beta} - R(\pi)\,,\\
    & \stackrel{(i)}{=} \E{(x, a, c) \sim \mu_{\pi_0}}{c \Big(\frac{\pi(a | x)}{\pi_0(a | x)}\Big)^\beta} - R(\pi)\,,\\
    & = \E{x \sim \nu, a \sim \pi_0(\cdot | x)}{c(x, a) \Big(\frac{\pi(a | x)}{\pi_0(a | x)}\Big)^\beta} - \E{x \sim \nu, a \sim \pi(\cdot | x)}{c(x, a)}\,,\\
    & = \E{x \sim \nu}{\sum_{a \in \cA}c(x, a) \frac{\pi(a | x)^\beta}{\pi_0(a | x)^{\beta-1}}} - \E{x \sim \nu}{\sum_{a \in \cA}c(x, a)\pi(a | x)}\,,\\
    & = \E{x \sim \nu}{\sum_{a \in \cA}c(x, a) \pi(a | x) \Big(\Big(\frac{\pi_0(a | x)}{\pi(a | x)}\Big)^{1-\beta}-1\Big)}\,,
\end{align*}
where $(i)$ follows from the i.i.d. assumption. It follows that 
\begin{align*}
 |\mathbb{B}(\tilde{R}_n^\beta(\pi))| & \leq \E{x \sim \nu}{\sum_{a \in \cA}|c(x, a)| \pi(a | x) \Big|\Big(\frac{\pi_0(a | x)}{\pi(a | x)}\Big)^{1-\beta}-1\Big|}\,,\\ 
    & \leq \E{x \sim \nu}{\sum_{a \in \cA} \pi(a | x) \Big|\Big(\frac{\pi_0(a | x)}{\pi(a | x)}\Big)^{1-\beta}-1\Big|}\,,\\
    &= \E{x \sim \nu, a \sim \pi(\cdot | x)}{ \Big|\Big(\frac{\pi_0(a | x)}{\pi(a | x)}\Big)^{1-\beta}-1\Big|}\,.
\end{align*}

The variance is bounded as 
\begin{align*}
\mathbb{V}\left[\tilde{R}_n^\beta(\pi)\right] &= \frac{1}{n^2} \sum_{i=1}^n \mathbb{V}_{x_i \sim \nu, a_i \sim \pi_0(\cdot | x_i), c_i \sim p(\cdot | x_i, a_i)}\Big[c_i \Big(\frac{\pi(a_i | x_i)}{\pi_0(a_i | x_i)}\Big)^\beta\Big]\,,\\
&= \frac{1}{n} \mathbb{V}_{(x, a, c) \sim \mu_{\pi_0}}\Big[c \Big(\frac{\pi(a | x)}{\pi_0(a | x)}\Big)^\beta\Big] \,,\\
   &\leq\frac{1}{n} \mathbb{E}_{(x, a, c) \sim \mu_{\pi_0}}\Big[c^2 \Big(\frac{\pi(a | x)}{\pi_0(a | x)}\Big)^{2\beta}\Big] \,,\\
   &\leq\frac{1}{n} \mathbb{E}_{x \sim \nu, a \sim \pi_0(\cdot | x)}\Big[\Big(\frac{\pi(a | x)}{\pi_0(a | x)}\Big)^{2\beta}\Big] \,,\\
    &=\frac{1}{n} \mathbb{E}_{x \sim \nu}\Big[ \sum_{a \in \cA} \frac{\pi(a | x)^{2\beta}}{\pi_0(a | x)^{2\beta-1}}\Big]\,,\\ 
    &=\frac{1}{n} \mathbb{E}_{x \sim \nu, a \sim \pi(\cdot | x)}\Big[ \frac{\pi(a | x)^{2\beta-1}}{\pi_0(a | x)^{2\beta-1}}\Big] \,.
\end{align*}
\end{proof}

\subsection{Discussion}

Here we show using \cref{prop:alpha_bias_variance,prop:beta_bias_variance} how $\alpha$ 
and $\beta$ trade the bias and variance of $ \hat{R}_n^\alpha(\pi)$ and $ \tilde{R}_n^\beta(\pi)$, respectively. Let us start with $ \hat{R}_n^\alpha(\pi)$, from \cref{prop:alpha_bias_variance}, the bound on the bias is minimized in $\alpha=1$; in which case it is equal to 0. In contrast, the bound on the variance is minimized in $\alpha =0$; in which case the variance is bounded by $1/n$. Let $\alpha_*$ be the minimizer of the corresponding bound of the MSE
\begin{align*}
    \alpha_* = {\rm argmin}_{\alpha \in [0, 1]} & \E{x \sim \nu, a \sim \pi(\cdot | x)}{1 - \pi_0(a | x)^{1-\alpha}}^2 + \mathbb{E}_{x \sim \nu, a \sim \pi(\cdot | x)}\big[ \frac{\pi(a | x)}{\pi_0(a | x)^{2\alpha-1}} \big] / n \,.
\end{align*}
We observe that when the variance is small or $n$ is large enough such that $\mathbb{E}_{x \sim \nu, a \sim \pi(\cdot | x)}\big[ \frac{\pi(a | x)}{\pi_0(a | x)^{2\alpha-1}} \big] /n \rightarrow 0$, then we have that $\alpha_* \rightarrow 1$. Thus it is better to use the standard IPS in this case. Otherwise, we have $\alpha_* \rightarrow 0$ and this is when regularization helps; basically when we have few samples or when the evaluation policy induces high variance. This demonstrates that the choice of $\alpha$ matters as it trades the bias and variance of $\hat{R}_n^\alpha$. 

Similarly, from \cref{prop:beta_bias_variance}, we define $\beta_*$ as 
\begin{align*}
    \beta_* = {\rm argmin}_{\beta \in [0, 1]} & \mathbb{E}_{x \sim \nu, a \sim \pi(\cdot | x)}\big[\big|\big(\frac{\pi(a | x)}{\pi_0(a | x)}\big)^{\beta-1}-1\big|\big] + \frac{1}{n} \mathbb{E}_{x \sim \nu, a \sim \pi(\cdot | x)}\big[\big(\frac{\pi(a | x)}{\pi_0(a | x)}\big)^{2\beta-1}\big] \,.
\end{align*}
Again, we observe that if $\mathbb{E}_{x \sim \nu, a \sim \pi(\cdot | x)}\big[\big(\frac{\pi(a | x)}{\pi_0(a | x)}\big)^{2\beta-1}\big]/n \rightarrow 0$, then $\beta_* \rightarrow 1$; in which case it is better to use standard IPS. Otherwise, we have $\beta_* \rightarrow 0$ to regularize the importance weights.

\section{Proofs for Off-Policy Learning}\label{proofs:opl}
In this section, we provide the complete proofs for our OPL results in \cref{sec:opl}. We start with proving \cref{thm:main_result} in \cref{proof:main_thm_proof}. We then state the extension of \cref{thm:main_result} along with its proof in \cref{app:thm_extension}. After that, in \cref{proof:direct_application}, we provide the proof for \cref{prop:direct_application}.  Finally, in \cref{proof:practice_theory}, we discuss in detail and prove our claims regarding the number of samples needed so that the performance of the learned policy is close to that of the optimal policy.

\subsection{Proof of \cref{thm:main_result}}\label{proof:main_thm_proof}
In this section, we prove \cref{thm:main_result}.
\begin{proof}
    
First, we decompose the difference $R(\pi_{\mathbb{Q}})-\hat{R}_n^\alpha(\pi_{\mathbb{Q}})$ as
\begin{align*}
    R(\pi_{\mathbb{Q}})-\hat{R}_n^\alpha(\pi_{\mathbb{Q}}) =  \underbrace{R(\pi_{\mathbb{Q}}) - \frac{1}{n}\sum_{i=1}^n R(\pi_{\mathbb{Q}} | x_i)}_{I_1} + \underbrace{\frac{1}{n} \sum_{i=1}^n R(\pi_{\mathbb{Q}} | x_i) - \frac{1}{n}\sum_{i=1}^n R^\alpha(\pi_{\mathbb{Q}} | x_i)}_{I_2} + \underbrace{\frac{1}{n}\sum_{i=1}^n R^\alpha(\pi_{\mathbb{Q}} | x_i) - \hat{R}_n^\alpha(\pi_{\mathbb{Q}})}_{I_3}\,,
\end{align*}
where 
\begin{align*}
    & R(\pi_{\mathbb{Q}}) = \E{x \sim \nu\,, a \sim \pi_{\mathbb{Q}}(\cdot | x)}{c(x, a)}\,,\\
    & R(\pi_{\mathbb{Q}} | x_i)    = \E{a \sim \pi_{\mathbb{Q}}(\cdot | x_i)}{c(x_i, a)}\,,\\ 
   & R^\alpha(\pi_{\mathbb{Q}} | x_i) = \E{a \sim \pi_0(\cdot | x_i)}{\frac{\pi_{\mathbb{Q}}(a | x_i)}{\pi_0(a | x_i)^\alpha}c(x_i, a)}\,, \\
   & \hat{R}_n^\alpha(\pi) = \frac{1}{n} \sum_{i=1}^n \frac{\pi(a_i | x_i)}{\pi_0(a_i | x_i)^\alpha}c_i \,.
\end{align*}
Our goal is to bound $|R(\pi_{\mathbb{Q}})-\hat{R}_n^\alpha(\pi_{\mathbb{Q}}) |$ and thus we need to bound $|I_1| + |I_2| + |I_3| $. We start with $|I_1|$, \citet[Theorem 3.3]{alquier2021user} yields that following inequality holds with probability at least $1 -\delta/2$ for any distribution $\mathbb{Q}$ on $\mathcal{H}$ 
\begin{align}\label{eq:app_i1}
    |I_1| \leq \sqrt{ \frac{D_{\mathrm{KL}}(\mathbb{Q} \| \mathbb{P})+\log \frac{4\sqrt{n}}{\delta}}{2n}} \,.
\end{align}
Moreover, $|I_2|$ can be bounded by decomposing it as 
\begin{align*}
    |I_2|&=\left|\frac{1}{n} \sum_{i=1}^{n}  \E{a \sim \pi_{\mathbb{Q}}(\cdot | x_i)}{c(x_i, a)}-\frac{1}{n} \sum_{i=1}^{n}\mathbb{E}_{a \sim \pi_0\left(\cdot | x_i\right)}\left[ \frac{\pi_{\mathbb{Q}}(a | x_i)}{\pi_0^\alpha(a | x_i)}c(x_i, a) \right]\right| \\
    &=\left|\frac{1}{n} \sum_{i=1}^{n} \sum_{a \in \cA} \pi_{\mathbb{Q}}(a | x_i) c(x_i, a) -\pi_0(a | x_i)\frac{\pi_{\mathbb{Q}}(a | x_i)}{\pi_0^\alpha(a | x_i)}c(x_i, a)\right| \\
    &=\left|\frac{1}{n} \sum_{i=1}^{n} \sum_{a \in \cA} \Big(\pi_{\mathbb{Q}}(a | x_i) - \frac{\pi_{\mathbb{Q}}(a | x_i)}{\pi_0^{\alpha-1}(a | x_i)}\Big)c(x_i, a)\right| \\
    &=\left|\frac{1}{n} \sum_{i=1}^{n} \sum_{a \in \cA} \Big(1 - \pi_0^{1-\alpha}(a | x_i)\Big)\pi_{\mathbb{Q}}(a | x_i)c(x_i, a)\right|\,,\\
    &\leq\frac{1}{n} \sum_{i=1}^{n} \sum_{a \in \cA} \left|1 - \pi_0^{1-\alpha}(a | x_i)\right|\pi_{\mathbb{Q}}(a | x_i)\left|c(x_i, a)\right|\,.
\end{align*}
But $1 - \pi_0^{1-\alpha}(a | x) \geq 0$ and $|c(x, a)| \leq 1$ for any $a \in \cA$ and $x \in \cX$. Thus 
\begin{align}\label{eq:app_i2}
   |I_2| \leq \frac{1}{n} \sum_{i=1}^{n} \E{a \sim \pi_{\mathbb{Q}}(\cdot | x_i)}{1 - \pi_0^{1-\alpha}(a | x_i)}\,.
\end{align}
Finally, we need to bound the main term $|I_3|$. To achieve this, we borrow the following technical lemma from \citet{haddouche2022pac}. It is slightly different from the one in \citet{haddouche2022pac}; their result holds for any $n \geq 1$ while we state a simpler version where $n$ is fixed in advance.

\begin{lemma}\label{lemma:app_maxime}
Let $\mathcal{Z}$ be an instance space and let $S_n=\left(z_i\right)_{i \in [n]}$ be an $n$-sized dataset for some $n \geq 1$. Let $\left(\mathcal{F}_i\right)_{i \in\{0\} \cup [n] }$ be a filtration adapted to $S_n$. Also, let $\mathcal{H}$ be a hypothesis space and $(f_i\left(S_i, h\right))_{i \in [n]}$ be a martingale difference sequence for any $h \in \mathcal{H}$, that is for any $i \in [n]$, and $h \in \mathcal{H}\,,$ we have that $ \mathbb{E}\left[f_i\left(S_i, h\right) | \mathcal{F}_{i-1}\right]=0$. Moreover, for any $h \in \mathcal{H}$, let $M_n(h)=\sum_{i=1}^n f_i\left(S_i, h\right)$. Then for any fixed prior, $\mathbb{P}$, on $\mathcal{H}$, any $\lambda>0$, the following holds with probability $1-\delta$ over the sample $S_n$, simultaneously for any $\mathbb{Q}$, on $\mathcal{H}$
\begin{align*}
    \left|\E{h \sim \mathbb{Q}}{M_n(h)}\right| \leq \frac{D_{\mathrm{KL}}(\mathbb{Q} \| \mathbb{P})+\log (2 / \delta)}{\lambda}+\frac{\lambda}{2}\left(\E{h \sim \mathbb{Q}}{\langle M\rangle_n(h) + [M]_n(h)}\right)\,,
\end{align*}
where $ \langle M\rangle_n(h)=\sum_{i=1}^n \mathbb{E}\left[f_i\left(S_i, h\right)^2 | \mathcal{F}_{i-1}\right]$ and $[M]_n(h)=\sum_{i=1}^n f_i\left(S_i, h\right)^2$.
\end{lemma}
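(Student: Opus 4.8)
The plan is to follow the supermartingale-plus-change-of-measure recipe underlying \citet{haddouche2022pac}, whose engine is an exponential inequality for martingale differences in the spirit of \citet{bercu2008exponential}: for any square-integrable $X$ with $\mathbb{E}[X \mid \mathcal{F}] = 0$ and any $\lambda > 0$,
\[
\mathbb{E}\Big[\exp\big(\lambda X - \tfrac{\lambda^2}{2}(X^2 + \mathbb{E}[X^2 \mid \mathcal{F}])\big) \,\big|\, \mathcal{F}\Big] \leq 1,
\]
which follows from the elementary bound $e^{u} \le 1 + u + \tfrac12 u^2 e^{u_+}$ together with $\mathbb{E}[X\mid\mathcal{F}]=0$. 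Applying this to each increment $f_i(S_i,h)$ conditionally on $\mathcal{F}_{i-1}$ shows that, for each fixed $h$,
\[
Z_i(h) = \exp\Big(\lambda \sum_{j=1}^i f_j(S_j,h) - \tfrac{\lambda^2}{2}\sum_{j=1}^i \big(f_j(S_j,h)^2 + \mathbb{E}[f_j(S_j,h)^2\mid\mathcal{F}_{j-1}]\big)\Big)
\]
is a nonnegative supermartingale adapted to $(\mathcal{F}_i)$ with $Z_0(h)=1$.

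Next I would average over the prior. Setting $\bar{Z}_i = \mathbb{E}_{h\sim\mathbb{P}}[Z_i(h)]$ and using that $\mathbb{P}$ is fixed (data-independent), Tonelli lets me exchange the $\mathbb{P}$-expectation with the conditional expectation, so $\bar{Z}_i$ is again a nonnegative supermartingale with $\bar{Z}_0 = 1$. Ville's maximal inequality for nonnegative supermartingales then gives $\mathbb{P}(\sup_{i} \bar{Z}_i \ge 2/\delta) \le \delta/2$; in particular, with probability at least $1-\delta/2$ we have $\bar{Z}_n \le 2/\delta$, i.e. $\log \bar{Z}_n \le \log(2/\delta)$. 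The posterior then enters through the Donsker--Varadhan change-of-measure inequality, valid simultaneously for all $\mathbb{Q}$:
\[
\mathbb{E}_{h\sim\mathbb{Q}}[\log Z_n(h)] \le D_{\mathrm{KL}}(\mathbb{Q}\|\mathbb{P}) + \log \mathbb{E}_{h\sim\mathbb{P}}[Z_n(h)] = D_{\mathrm{KL}}(\mathbb{Q}\|\mathbb{P}) + \log \bar{Z}_n.
\]
Substituting $\log Z_n(h) = \lambda M_n(h) - \tfrac{\lambda^2}{2}([M]_n(h) + \langle M\rangle_n(h))$, bounding $\log\bar{Z}_n \le \log(2/\delta)$ on the favorable event, and dividing by $\lambda>0$ yields the one-sided bound on $\mathbb{E}_{h\sim\mathbb{Q}}[M_n(h)]$.

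To obtain the two-sided statement, I would repeat the whole argument with $-f_i$ in place of $f_i$: this is again a martingale difference sequence with identical quadratic variation $[M]_n(h)$ and predictable variation $\langle M\rangle_n(h)$, so it produces the matching lower bound on a second event of probability at least $1-\delta/2$. A union bound over the two events gives the claim with probability $1-\delta$ and the advertised $\log(2/\delta)$ term. The main obstacle is establishing the exponential supermartingale cleanly --- pinning down the Bercu--Touati-type one-step inequality and checking the integrability and measurability needed to pass from the per-hypothesis process $Z_i(h)$ to its prior-average $\bar{Z}_i$ while preserving the supermartingale property. Everything else (the change of measure, Ville's inequality, and the symmetrization for the two-sided bound) is standard once the supermartingale is in hand; care is only needed to keep the factor $2/\delta$ consistent between the union bound and Ville's inequality.
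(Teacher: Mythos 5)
Your proposal is sound, but note that the paper itself never proves this lemma: it is imported (in a fixed-$n$ form) from \citet{haddouche2022pac}, and your argument is in effect a faithful reconstruction of that source's proof --- the same recipe of a Bercu--Touati-type exponential supermartingale, a maximal/Markov inequality after averaging over the data-free prior, Donsker--Varadhan change of measure to bring in $\mathbb{Q}$ simultaneously, and symmetrization in $\pm f_i$ with the $\delta/2$--$\delta/2$ split that produces the $\log(2/\delta)$ term. Two refinements are worth making. First, the elementary bound you quote, $e^{u} \le 1 + u + \tfrac12 u^2 e^{u_+}$, is not the one that delivers the one-step inequality; the inequality you actually need is $e^{z - z^2/2} \le 1 + z + z^2/2$, valid for all real $z$ (a one-line monotonicity check: the ratio has derivative of the sign of $z$). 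Applying it with $z = \lambda X$, taking conditional expectations and using $\mathbb{E}[X \mid \mathcal{F}] = 0$ gives $\mathbb{E}\bigl[e^{\lambda X - \lambda^2 X^2/2} \mid \mathcal{F}\bigr] \le 1 + \tfrac{\lambda^2}{2}\mathbb{E}[X^2 \mid \mathcal{F}] \le e^{\frac{\lambda^2}{2}\mathbb{E}[X^2 \mid \mathcal{F}]}$, and multiplying by the $\mathcal{F}$-measurable factor $e^{-\frac{\lambda^2}{2}\mathbb{E}[X^2 \mid \mathcal{F}]}$ yields exactly the one-step bound you assert; with that repaired, your supermartingale construction, Tonelli step, and change of measure all go through. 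Second, since the lemma as stated here fixes $n$ in advance (the paper explicitly weakens \citet{haddouche2022pac} in this way), Ville's inequality \citep{bercu2008exponential} is not needed: Markov's inequality applied to $\bar Z_n$, whose expectation is at most $\bar Z_0 = 1$ by the supermartingale property, already gives $\mathbb{P}(\bar Z_n \ge 2/\delta) \le \delta/2$; Ville buys the anytime-valid version, which is the only content lost in the paper's simplification.
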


To apply \cref{lemma:app_maxime}, we need to construct an adequate martingale difference sequence $(f_i(S_i, h))_{i \in [n]}$ for $h \in \mathcal{H}$ that allows us to retrieve $|I_3|$. To achieve this, we define $S_n = (a_i)_{i \in [n]}$ as the set of $n$ taken actions. Also, we let $(\mathcal{F}_i)_{i \in \{0\}\cup[n]}$ be a filtration adapted to $S_n$. For $h \in \mathcal{H}$, we define $f_i\left(S_i, h\right)$ as 
\begin{align*}
    f_i\left(S_i, h\right) = f_i\left(a_i, h\right) = \E{a \sim \pi_0(\cdot | x_i)}{\frac{\mathbb{I}_{\{h(x_i)=a\}}}{\pi_0(a | x_i)^\alpha}c(x_i, a)} - \frac{\mathbb{I}_{\{h(x_i)=a_i\}}}{\pi_0(a_i | x_i)^\alpha}c(x_i, a_i)\,.
\end{align*}
We stress that $f_i(S_i, h)$ only depends on the last action in $S_i$, $a_i$, and the predictor $h$. For this reason, we denote it by $f_i(a_i, h)$. The function $f_i$ is indexed by $i$ since it depends on the fixed $i$-th context, $x_i$. The context $x_i$ is fixed and thus randomness only comes from $a_i \sim \pi_0(\cdot | x_i)$. It follows that the expectations are under $a_i \sim \pi_0(\cdot | x_i)$. First, we have that $\mathbb{E}\left[f_i\left(a_i, h\right) | \mathcal{F}_{i-1}\right]= 0$ for any $i \in [n]\,, h \in \mathcal{H}$. This follows from 
\begin{align*}
    \mathbb{E}\left[f_i\left(a_i, h\right) | \mathcal{F}_{i-1}\right]  &= \mathbb{E}_{a_i \sim \pi_0(\cdot | x_i)}\left[f_i\left(a_i, h\right) \Big| a_1, \ldots, a_{i-1}\right] \,,\\
     &=  \mathbb{E}_{a_i \sim \pi_0(\cdot | x_i)}\left[\E{a \sim \pi_0(\cdot | x_i)}{\frac{\mathbb{I}_{\{h(x_i)=a\}}}{\pi_0(a | x_i)^\alpha}c(x_i, a)} - \frac{\mathbb{I}_{\{h(x_i)=a_i\}}}{\pi_0(a_i | x_i)^\alpha}c(x_i, a_i) \Big| a_1, \ldots, a_{i-1}\right]\,,\\
     &\stackrel{(i)}{=}  \E{a \sim \pi_0(\cdot | x_i)}{\frac{\mathbb{I}_{\{h(x_i)=a\}}}{\pi_0(a | x_i)^\alpha}c(x_i, a)} - \mathbb{E}_{a_i \sim \pi_0(\cdot | x_i)}\left[\frac{\mathbb{I}_{\{h(x_i)=a_i\}}}{\pi_0(a_i | x_i)^\alpha}c(x_i, a_i) \Big| a_1, \ldots, a_{i-1}\right]\,.
\end{align*}
In $(i)$ we use the fact that given $x_i$, $ \E{a \sim \pi_0(\cdot | x_i)}{\frac{\mathbb{I}_{\{h(x_i)=a\}}}{\pi_0(a | x_i)^\alpha}c(x_i, a)}$ is deterministic. Now $a_i$ does not depend on $a_1, \ldots, a_{i-1}$ since logged data is i.d.d. Hence 
\begin{align*}
    \mathbb{E}_{a_i \sim \pi_0(\cdot | x_i)}\left[\frac{\mathbb{I}_{\{h(x_i)=a_i\}}}{\pi_0(a_i | x_i)^\alpha}c(x_i, a_i) \Big| a_1, \ldots, a_{i-1}\right] &= \mathbb{E}_{a_i \sim \pi_0(\cdot | x_i)}\left[\frac{\mathbb{I}_{\{h(x_i)=a_i\}}}{\pi_0(a_i | x_i)^\alpha}c(x_i, a_i)\right]\,,\\ &= \mathbb{E}_{a \sim \pi_0(\cdot | x_i)}\left[\frac{\mathbb{I}_{\{h(x_i)=a\}}}{\pi_0(a | x_i)^\alpha}c(x_i, a)\right]\,.
\end{align*}
It follows that 
\begin{align*}
     \mathbb{E}\left[f_i\left(a_i, h\right) | \mathcal{F}_{i-1}\right] 
     &=  \E{a \sim \pi_0(\cdot | x_i)}{\frac{\mathbb{I}_{\{h(x_i)=a\}}}{\pi_0(a | x_i)^\alpha}c(x_i, a)} - \mathbb{E}_{a_i \sim \pi_0(\cdot | x_i)}\left[\frac{\mathbb{I}_{\{h(x_i)=a_i\}}}{\pi_0(a_i | x_i)^\alpha}c(x_i, a_i) \Big| a_1, \ldots, a_{i-1}\right]\,,\\
    &= \E{a \sim \pi_0(\cdot | x_i)}{\frac{\mathbb{I}_{\{h(x_i)=a\}}}{\pi_0(a | x_i)^\alpha}c(x_i, a)} - \mathbb{E}_{a \sim \pi_0(\cdot | x_i)}\left[\frac{\mathbb{I}_{\{h(x_i)=a\}}}{\pi_0(a | x_i)^\alpha}c(x_i, a)\right]\,,\\
    &= 0\,.
\end{align*}
Therefore, for any $h \in \mathcal{H}$, $(f_i(a_i, h))_{i \in [n]}$ is a martingale difference sequence. Hence we apply \cref{lemma:app_maxime} and obtain that the following inequality holds with probability at least $1-\delta/2$ for any $\mathbb{Q}$ on $\mathcal{H}$
\begin{align}\label{eq:app_proof_0}
    \left|\E{h \sim \mathbb{Q}}{M_n(h)}\right| \leq \frac{D_{\mathrm{KL}}(\mathbb{Q} \| \mathbb{P})+\log (4 / \delta)}{\lambda}+\frac{\lambda}{2}\left(\E{h \sim \mathbb{Q}}{\langle M\rangle_n(h) + [M]_n(h)}\right)\,,
\end{align}
where 
\begin{align*}
    M_n(h)&=\sum_{i=1}^n f_i\left(a_i, h\right)\,,\\
     \langle M\rangle_n(h)&=\sum_{i=1}^n \mathbb{E}\left[f_i\left(a_i, h\right)^2 | \mathcal{F}_{i-1}\right]\,,\\
     [M]_n(h)&=\sum_{i=1}^n f_i\left(a_i, h\right)^2
\end{align*}
Now these terms can be decomposed as 
\begin{align*}
    \E{h \sim \mathbb{Q}}{M_n(h)} &= \sum_{i=1}^n\E{h \sim \mathbb{Q}}{f_i\left(a_i, h\right)}\,,\\
    &=  \sum_{i=1}^n \E{h \sim \mathbb{Q}}{\E{a \sim \pi_0(\cdot | x_i)}{\frac{\mathbb{I}_{\{h(x_i)=a\}}}{\pi_0(a | x_i)^\alpha}c(x_i, a)} - \frac{\mathbb{I}_{\{h(x_i)=a_i\}}}{\pi_0(a_i | x_i)^\alpha}c(x_i, a_i)}\,,\\
    &\stackrel{(i)}{=} \sum_{i=1}^n \E{h \sim \mathbb{Q}}{\E{a \sim \pi_0(\cdot | x_i)}{\frac{\mathbb{I}_{\{h(x_i)=a\}}}{\pi_0(a | x_i)^\alpha}c(x_i, a)}} - \E{h \sim \mathbb{Q}}{\frac{\mathbb{I}_{\{h(x_i)=a_i\}}}{\pi_0(a_i | x_i)^\alpha}c(x_i, a_i)}\,,\\
    &\stackrel{(ii)}{=} \sum_{i=1}^n\E{a \sim \pi_0(\cdot | x_i)}{\frac{ \E{h \sim \mathbb{Q}}{\mathbb{I}_{\{h(x_i)=a\}}}}{\pi_0(a | x_i)^\alpha}c(x_i, a)} - \frac{\E{h \sim \mathbb{Q}}{\mathbb{I}_{\{h(x_i)=a_i\}}}}{\pi_0(a_i | x_i)^\alpha}c(x_i, a_i)\,,\\
    &\stackrel{(iii)}{=} \sum_{i=1}^n\E{a \sim \pi_0(\cdot | x_i)}{\frac{ \pi_{\mathbb{Q}}(a | x_i) }{\pi_0(a | x_i)^\alpha}c(x_i, a)} - \sum_{i=1}^n \frac{ \pi_{\mathbb{Q}}(a_i | x_i)}{\pi_0(a_i | x_i)^\alpha}c(x_i, a_i)\,,
\end{align*}
where we use the linearity of the expectation in both $(i) $  and $(ii)$. In $(iii)$, we use our definition of policies in \eqref{eq:pac_policies}. Therefore, we have that 
\begin{align}\label{eq:app_proof_1}
    \E{h \sim \mathbb{Q}}{M_n(h)}  &=\sum_{i=1}^n\E{a \sim \pi_0(\cdot | x_i)}{\frac{ \pi_{\mathbb{Q}}(a | x_i) }{\pi_0(a | x_i)^\alpha}c(x_i, a)} - \sum_{i=1}^n \frac{ \pi_{\mathbb{Q}}(a_i | x_i)}{\pi_0(a_i | x_i)^\alpha}c(x_i, a_i)\,,\nonumber\\
    & \stackrel{(i)}{=} \sum_{i=1}^n R^\alpha(\pi_{\mathbb{Q}} | x_i) - n\hat{R}_n^\alpha(\pi_{\mathbb{Q}})\,,\nonumber\\
    &= nI_3\,,
\end{align}
where we used the fact that $c_i = c(a_i, x_i)$ for any $i \in [n]$ in $(i)$. 

Now we focus on the terms $\langle M\rangle_n(h)$ and $ [M]_n(h)$. First, we have that 
\begin{align}\label{eq:app_proof_2}
    f_i\left(a_i, h\right)^2 &= \Big(\E{a \sim \pi_0(\cdot | x_i)}{\frac{\mathbb{I}_{\{h(x_i)=a\}}}{\pi_0(a | x_i)^\alpha}c(x_i, a)} - \frac{\mathbb{I}_{\{h(x_i)=a_i\}}}{\pi_0(a_i | x_i)^\alpha}c(x_i, a_i)\Big)^2\,,\\
    &= \E{a \sim \pi_0(\cdot | x_i)}{\frac{\mathbb{I}_{\{h(x_i)=a\}}}{\pi_0(a | x_i)^\alpha}c(x_i, a)}^2 + \Big(\frac{\mathbb{I}_{\{h(x_i)=a_i\}}}{\pi_0(a_i | x_i)^\alpha}c(x_i, a_i)\Big)^2 \nonumber \\ & \hspace{2cm} - 2   \E{a \sim \pi_0(\cdot | x_i)}{\frac{\mathbb{I}_{\{h(x_i)=a\}}}{\pi_0(a | x_i)^\alpha}c(x_i, a)} \frac{\mathbb{I}_{\{h(x_i)=a_i\}}}{\pi_0(a_i | x_i)^\alpha}c(x_i, a_i)\,,\nonumber\\
    &= \E{a \sim \pi_0(\cdot | x_i)}{\frac{\mathbb{I}_{\{h(x_i)=a\}}}{\pi_0(a | x_i)^\alpha}c(x_i, a)}^2 + \frac{\mathbb{I}_{\{h(x_i)=a_i\}}}{\pi_0(a_i | x_i)^{2\alpha}}c(x_i, a_i)^2 \nonumber \\ & \hspace{2cm} - 2   \E{a \sim \pi_0(\cdot | x_i)}{\frac{\mathbb{I}_{\{h(x_i)=a\}}}{\pi_0(a | x_i)^\alpha}c(x_i, a)} \frac{\mathbb{I}_{\{h(x_i)=a_i\}}}{\pi_0(a_i | x_i)^\alpha}c(x_i, a_i)\,.\nonumber
\end{align}
Moreover, $f_i\left(a_i, h\right)^2$ does not depend on $a_1, \ldots, a_{i-1}$. Thus
\begin{align*}
    \mathbb{E}\left[f_i\left(a_i, h\right)^2 | \mathcal{F}_{i-1}\right] = \mathbb{E}_{a_i \sim \pi_0(\cdot | x_i)}\left[f_i\left(a_i, h\right)^2 | \mathcal{F}_{i-1}\right] = \mathbb{E}_{a_i \sim \pi_0(\cdot | x_i)}\left[f_i\left(a_i, h\right)^2\right]= \mathbb{E}_{a \sim \pi_0(\cdot | x_i)}\left[f_i\left(a, h\right)^2\right]\,.
\end{align*}
Computing $\mathbb{E}_{a \sim \pi_0(\cdot | x_i)}\left[f_i\left(a, h\right)^2\right]$ using the decomposition in \eqref{eq:app_proof_2} yields
\begin{align}\label{eq:app_proof_3}
    \mathbb{E}\left[f_i\left(a_i, h\right)^2 | \mathcal{F}_{i-1}\right] &= \mathbb{E}_{a \sim \pi_0(\cdot | x_i)}\left[f_i\left(a, h\right)^2\right] \,,\nonumber\\
    &= -  \E{a \sim \pi_0(\cdot | x_i)}{\frac{\mathbb{I}_{\{h(x_i)=a\}}}{\pi_0(a | x_i)^\alpha}c(x_i, a)}^2
+ \E{a \sim \pi_0(\cdot | x_i)}{\frac{\mathbb{I}_{\{h(x_i)=a\}}}{\pi_0(a | x_i)^{2\alpha}}c(x_i, a)^2}
\end{align}
Combining \eqref{eq:app_proof_2} and \eqref{eq:app_proof_3} leads to
\begin{align}\label{eq:app_proof_4}
    \mathbb{E}\left[f_i\left(a_i, h\right)^2 | \mathcal{F}_{i-1}\right] + f_i\left(a_i, h\right)^2 &= \E{a \sim \pi_0(\cdot | x_i)}{\frac{\mathbb{I}_{\{h(x_i)=a\}}}{\pi_0(a | x_i)^{2\alpha}}c(x_i, a)^2}+ \frac{\mathbb{I}_{\{h(x_i)=a_i\}}}{\pi_0(a_i | x_i)^{2\alpha}}c(x_i, a_i)^2 \nonumber\\ &\hspace{2cm} - 2   \E{a \sim \pi_0(\cdot | x_i)}{\frac{\mathbb{I}_{\{h(x_i)=a\}}}{\pi_0(a | x_i)^\alpha}c(x_i, a)} \frac{\mathbb{I}_{\{h(x_i)=a_i\}}}{\pi_0(a_i | x_i)^\alpha}c(x_i, a_i)\,,\nonumber\\
    & \stackrel{(i)}{\leq} \E{a \sim \pi_0(\cdot | x_i)}{\frac{\mathbb{I}_{\{h(x_i)=a\}}}{\pi_0(a | x_i)^{2\alpha}}c(x_i, a)^2}+ \frac{\mathbb{I}_{\{h(x_i)=a_i\}}}{\pi_0(a_i | x_i)^{2\alpha}}c(x_i, a_i)^2\,.
\end{align}
The inequality in $(i)$ holds because $- 2   \E{a \sim \pi_0(\cdot | x_i)}{\frac{\mathbb{I}_{\{h(x_i)=a\}}}{\pi_0(a | x_i)^\alpha}c(x_i, a)} \frac{\mathbb{I}_{\{h(x_i)=a_i\}}}{\pi_0(a_i | x_i)^\alpha}c(x_i, a_i) \leq 0$. Therefore, we have that
\begin{align*}
    \langle M\rangle_n(h) + [M]_n(h) \leq \sum_{i=1}^n  \E{a \sim \pi_0(\cdot | x_i)}{\frac{\mathbb{I}_{\{h(x_i)=a\}}}{\pi_0(a | x_i)^{2\alpha}}c(x_i, a)^2}+ \frac{\mathbb{I}_{\{h(x_i)=a_i\}}}{\pi_0(a_i | x_i)^{2\alpha}}c(x_i, a_i)^2\,.
\end{align*}
Finally, by using the linearity of the expectation and the definition of policies in \eqref{eq:pac_policies}, we get that 
\begin{align}\label{eq:app_proof_5}
    \E{h \sim \mathbb{Q}}{\langle M\rangle_n(h) + [M]_n(h)} & \leq \sum_{i=1}^n  \E{a \sim \pi_0(\cdot | x_i)}{\frac{\E{h \sim \mathbb{Q}}{\mathbb{I}_{\{h(x_i)=a\}}}}{\pi_0(a | x_i)^{2\alpha}}c(x_i, a)^2}+ \frac{\E{h \sim \mathbb{Q}}{\mathbb{I}_{\{h(x_i)=a_i\}}}}{\pi_0(a_i | x_i)^{2\alpha}}c(x_i, a_i)^2\,,\nonumber\\
    & = \sum_{i=1}^n  \E{a \sim \pi_0(\cdot | x_i)}{\frac{\pi_{\mathbb{Q}}(a | x_i)}{\pi_0(a | x_i)^{2\alpha}}c(x_i, a)^2}+ \frac{\pi_{\mathbb{Q}}(a_i | x_i)}{\pi_0(a_i | x_i)^{2\alpha}}c(x_i, a_i)^2\,.
\end{align}
Combining \eqref{eq:app_proof_0} and \eqref{eq:app_proof_5} yields
\begin{align}\label{eq:app_proof_6}
  n |I_3| &= | \sum_{i=1}^n R^\alpha(\pi_{\mathbb{Q}} | x_i) - n\hat{R}_n^\alpha(\pi_{\mathbb{Q}})  |\, \nonumber\\
   &\leq \frac{D_{\mathrm{KL}}(\mathbb{Q} \| \mathbb{P})+\log (4 / \delta)}{\lambda} + \frac{\lambda}{2}\sum_{i=1}^n  \E{a \sim \pi_0(\cdot | x_i)}{\frac{\pi_{\mathbb{Q}}(a | x_i)}{\pi_0(a | x_i)^{2\alpha}}c(x_i, a)^2}+ \frac{\pi_{\mathbb{Q}}(a_i | x_i)}{\pi_0(a_i | x_i)^{2\alpha}}c(x_i, a_i)^2\,.
\end{align}
This means that the following inequality holds with probability at least $1-\delta/2$ for any distribution $\mathbb{Q}$ on $\mathcal{H}$
\begin{align}
   \left|I_3  \right| \leq \frac{D_{\mathrm{KL}}(\mathbb{Q} \| \mathbb{P})+\log (4 / \delta)}{n\lambda} + \frac{\lambda}{2n}\sum_{i=1}^n  \E{a \sim \pi_0(\cdot | x_i)}{\frac{\pi_{\mathbb{Q}}(a | x_i)}{\pi_0(a | x_i)^{2\alpha}}c(x_i, a)^2}+ \frac{\lambda}{2n} \sum_{i=1}^n\frac{\pi_{\mathbb{Q}}(a_i | x_i)}{\pi_0(a_i | x_i)^{2\alpha}}c(x_i, a_i)^2\,.
\end{align}
However we know that $c(x, a)^2 \leq 1$ for any $x\in \cX$ and $a \in \cA$ and that $c(x_i, a_i) = c_i$ for any $i \in [n]$. Thus the following inequality holds with probability at least $1-\delta/2$ for any distribution $\mathbb{Q}$ on $\mathcal{H}$
\begin{align}\label{eq:app_i3}
   \left|I_3  \right| \leq \frac{D_{\mathrm{KL}}(\mathbb{Q} \| \mathbb{P})+\log (4 / \delta)}{n\lambda} + \frac{\lambda}{2n}\sum_{i=1}^n  \E{a \sim \pi_0(\cdot | x_i)}{\frac{\pi_{\mathbb{Q}}(a | x_i)}{\pi_0(a | x_i)^{2\alpha}}}+ \frac{\lambda}{2n} \sum_{i=1}^n\frac{\pi_{\mathbb{Q}}(a_i | x_i)}{\pi_0(a_i | x_i)^{2\alpha}}c_i^2\,.
\end{align}
The union bound of \eqref{eq:app_i1} and \eqref{eq:app_i3} combined with the deterministic result in \eqref{eq:app_i2} yields that the following inequality holds with probability at least $1-\delta$ for any distribution $\mathbb{Q}$ on $\mathcal{H}$
\begin{align}\label{eq:app_main_inequality}
    |R(\pi_{\mathbb{Q}}) -\hat{R}_n^\alpha(\pi_{\mathbb{Q}})|  \leq  \sqrt{ \frac{D_{\mathrm{KL}}(\mathbb{Q} \| \mathbb{P})+\log \frac{4\sqrt{n}}{\delta}}{2n}} + \frac{1}{n} \sum_{i=1}^{n} \E{a \sim \pi_{\mathbb{Q}}(\cdot | x_i)}{1 - \pi_0^{1-\alpha}(a | x_i)} + \frac{D_{\mathrm{KL}}(\mathbb{Q} \| \mathbb{P})+\log (4 / \delta)}{n\lambda}\nonumber\\
    + \frac{\lambda}{2n}\sum_{i=1}^n  \E{a \sim \pi_0(\cdot | x_i)}{\frac{\pi_{\mathbb{Q}}(a | x_i)}{\pi_0(a | x_i)^{2\alpha}}}+ \frac{\lambda}{2n} \sum_{i=1}^n\frac{\pi_{\mathbb{Q}}(a_i | x_i)}{\pi_0(a_i | x_i)^{2\alpha}}c_i^2\,.
\end{align}
\end{proof}

\subsection{Extensions of \cref{thm:main_result}}\label{app:thm_extension}

\begin{proposition}[Extension of \cref{thm:main_result} to hold simultaneously for any $\lambda \in (0, 1)$]\label{prop:thm_any_lambda} Let $n \ge 1$, $\delta \in [0, 1]$, $\alpha \in [0, 1]$, and let $\mathbb{P}$ be a fixed prior on $\mathcal{H}$, then with probability at least $1-\delta$ over draws $\cD_n \sim \mu_{\pi_0}^n$, the following holds simultaneously for any posterior $\mathbb{Q}$ on $\mathcal{H}$, and for any $\lambda \in (0, 1)$ that 
\begin{align*}
    |R(\pi_{\mathbb{Q}}) -\hat{R}_n^\alpha(\pi_{\mathbb{Q}})| \leq \sqrt{ \frac{{\textsc{kl}^\prime}_{1}(\pi_{\mathbb{Q}}, \lambda)}{2n} } + B_n^\alpha(\pi_{\mathbb{Q}})  +
\frac{{\textsc{kl}^\prime}_{2}(\pi_{\mathbb{Q}}, \lambda)}{n \lambda } + \frac{\lambda}{2}\bar{V}_n^\alpha(\pi_{\mathbb{Q}})\,.
\end{align*}
where 
\begin{align*}
    & {\textsc{kl}^\prime}_{1}(\pi_{\mathbb{Q}}, \lambda)  =D_{\mathrm{KL}}(\mathbb{Q} \| \mathbb{P})+\log \frac{8\sqrt{n}}{\delta \lambda} \,,\\
    &{\textsc{kl}^\prime}_{2}(\pi_{\mathbb{Q}}, \lambda)  =  2\big( D_{\mathrm{KL}}(\mathbb{Q} \| \mathbb{P})+\log \frac{8 }{\delta \lambda}\big)\,,\\
    & B_n^\alpha(\pi_{\mathbb{Q}}) = 1 - \frac{1}{n}\sum_{i=1}^{n} \E{a \sim \pi_{\mathbb{Q}}(\cdot | x_i)}{\pi_0^{1-\alpha}(a | x_i)}\,,\\
    &\bar{V}_n^\alpha(\pi_{\mathbb{Q}}) = \frac{1}{n}\sum_{i=1}^n  \E{a \sim \pi_0(\cdot | x_i)}{\frac{\pi_{\mathbb{Q}}(a | x_i)}{\pi_0(a | x_i)^{2\alpha}}} + \frac{\pi_{\mathbb{Q}}(a_i | x_i)}{\pi_0(a_i | x_i)^{2\alpha}}c_i^2.
\end{align*}
\end{proposition}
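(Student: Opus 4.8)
The plan is to obtain the uniform-in-$\lambda$ statement as a corollary of \cref{thm:main_result}, which already holds for every \emph{fixed} $\lambda$, via a union bound over a geometric grid of candidate values of $\lambda$ together with the budget-splitting argument that is standard for making PAC-Bayes bounds hold simultaneously over a continuous parameter.

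First I would fix the grid $\lambda_k = 2^{-k}$ for $k \ge 1$ and assign to each grid point the confidence level $\delta_k = 2^{-k}\delta$, so that $\sum_{k\ge 1}\delta_k = \delta$. Applying \cref{thm:main_result} with the pair $(\lambda_k,\delta_k)$ yields, for each fixed $k$, an event of probability at least $1-\delta_k$ on which the fixed-$\lambda$ bound holds simultaneously for all posteriors $\mathbb{Q}$; a union bound over $k$ then gives a single event of probability at least $1-\delta$ on which the bound holds simultaneously for all $\mathbb{Q}$ \emph{and} all grid points $\lambda_k$. The key simplification is the identity $\log(1/\delta_k) = \log(1/\delta) + k\log 2 = \log(1/\delta) + \log(1/\lambda_k) = \log\frac{1}{\delta\lambda_k}$, which rewrites the two constants appearing at grid point $k$ as $D_{\mathrm{KL}}(\mathbb{Q}\|\mathbb{P}) + \log\frac{4\sqrt{n}}{\delta\lambda_k}$ and $D_{\mathrm{KL}}(\mathbb{Q}\|\mathbb{P}) + \log\frac{4}{\delta\lambda_k}$.

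The final step transfers the bound from the grid to an arbitrary $\lambda\in(0,1)$. Given such a $\lambda$, I would pick the unique $k\ge 1$ with $\lambda_k \le \lambda < 2\lambda_k$, equivalently $\lambda/2 < \lambda_k \le \lambda$ (one checks the grid covers all of $(0,1)$, with $k=1$ handling $\lambda\in[1/2,1)$ and $k\to\infty$ as $\lambda\to 0^+$). On the event above I then upper bound each term of the grid-$k$ inequality in terms of $\lambda$: the second-moment term uses $\lambda_k\le\lambda$ to give $\tfrac{\lambda_k}{2}\bar{V}_n^\alpha(\pi_{\mathbb{Q}}) \le \tfrac{\lambda}{2}\bar{V}_n^\alpha(\pi_{\mathbb{Q}})$, while the two terms carrying a factor $1/\lambda_k$ use $\lambda_k>\lambda/2$, i.e. $1/\lambda_k < 2/\lambda$. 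This last substitution simultaneously multiplies the $I_3$-constant by $2$ and replaces $\log\frac{4}{\delta\lambda_k}$ (resp. $\log\frac{4\sqrt{n}}{\delta\lambda_k}$) by $\log\frac{8}{\delta\lambda}$ (resp. $\log\frac{8\sqrt{n}}{\delta\lambda}$), which is exactly how the factor $2$ in ${\textsc{kl}^\prime}_{2}$ and the constant $8$ in ${\textsc{kl}^\prime}_{1}, {\textsc{kl}^\prime}_{2}$ arise. The bias term $B_n^\alpha(\pi_{\mathbb{Q}})$ is untouched since it does not depend on $\lambda$.

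The argument is essentially routine, so I do not anticipate a genuine obstacle; the part demanding care is the bookkeeping in the last step, namely keeping the two inequalities $\lambda_k \le \lambda$ (needed for the variance term) and $\lambda_k > \lambda/2$ (needed for the $1/\lambda_k$ terms) straight and pointing in the right direction, and verifying that the geometric choice $\delta_k = 2^{-k}\delta$ both sums to $\delta$ and produces the clean identity $\log(1/\delta_k) = \log\frac{1}{\delta\lambda_k}$ that makes the stated constants come out. A minor but important point is that, since after the union bound the event holds for all grid points at once, the resulting inequality remains valid even for a \emph{data-dependent} choice of $\lambda$, because any realized $\lambda$ is mapped deterministically to its grid neighbour without requiring a further union bound.
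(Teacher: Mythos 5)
Your proposal is correct and follows essentially the same route as the paper's proof: a geometric grid $\lambda_k = 2^{-k}$ with confidence budget $\delta_k = \delta\lambda_k$, a union bound over the grid, and then a transfer to arbitrary $\lambda \in (0,1)$ via the neighbour satisfying $\lambda/2 \le \lambda_k \le \lambda$, with $\lambda_k \le \lambda$ absorbed into the variance term and $1/\lambda_k \le 2/\lambda$ producing the factor $2$ and the constants $8$ in ${\textsc{kl}^\prime}_{1}$ and ${\textsc{kl}^\prime}_{2}$. The bookkeeping you describe matches the paper's exactly, including the observation that the resulting bound holds for data-dependent choices of $\lambda$.
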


\begin{proof}
Let $\delta \in (0, 1)$. For any $i \geq 1$, we define $\lambda_i=2^{-i}$ and let $\delta_i = \delta \lambda_i$. Then \cref{thm:main_result} yields that for any $i \geq 1$, the following inequality holds with probability at least $1 - \delta_i$ for any $\mathbb{Q}$ on $\mathcal{H}$
\begin{align*}
    |R(\pi_{\mathbb{Q}}) -\hat{R}_n^\alpha(\pi_{\mathbb{Q}})| \leq \sqrt{ \frac{D_{\mathrm{KL}}(\mathbb{Q} \| \mathbb{P})+\log \frac{4\sqrt{n}}{\delta_i}}{2n} } + B_n^\alpha(\pi_{\mathbb{Q}})  +
\frac{D_{\mathrm{KL}}(\mathbb{Q} \| \mathbb{P})+\log \frac{4}{\delta_i}}{n \lambda_i } + \frac{\lambda_i}{2}\bar{V}_n^\alpha(\pi_{\mathbb{Q}})\,.
\end{align*}
Now notice that $\sum_{i =1}^\infty \lambda_i = 1$, and hence $\sum_{i =1}^\infty \delta_i = \delta$. Therefore, the union bound of the above inequalities over $i \geq 1$ yields that with probability at least $1-\delta$, the following inequality holds with probability at least $1 - \delta$ for any $\mathbb{Q}$ on $\mathcal{H}$ and for any $i \geq 1$
\begin{align}\label{eq:for_any_i}
    |R(\pi_{\mathbb{Q}}) -\hat{R}_n^\alpha(\pi_{\mathbb{Q}})| \leq \sqrt{ \frac{D_{\mathrm{KL}}(\mathbb{Q} \| \mathbb{P})+\log \frac{4\sqrt{n}}{\delta_i}}{2n} } + B_n^\alpha(\pi_{\mathbb{Q}})  +
\frac{D_{\mathrm{KL}}(\mathbb{Q} \| \mathbb{P})+\log \frac{4}{\delta_i}}{n \lambda_i } + \frac{\lambda_i}{2}\bar{V}_n^\alpha(\pi_{\mathbb{Q}})\,.
\end{align}
Let $\lceil \cdot \rceil$ denote the ceiling function, then we have that for any $\lambda \in (0, 1)$, there exists $j = \lceil \frac{-\log \lambda}{\log 2} \rceil \geq 1$ such that $\lambda/2 \leq \lambda_j \leq \lambda$. Since \eqref{eq:for_any_i} holds for any $i \geq 1$, it holds in particular for $j$. In addition to this, we have that $\frac{1}{\lambda_j} \leq \frac{2}{\lambda}$, that $\lambda_j \leq \lambda$ and that $\frac{1}{\delta_j} = \frac{1}{\lambda_j \delta} \leq \frac{2}{\delta \lambda}$. This yields that the following inequality holds with probability at least $1 - \delta$ for any $\mathbb{Q}$ on $\mathcal{H}$ and for any $\lambda \in (0, 1)$
\begin{align}\label{eq:any_lambda}
    |R(\pi_{\mathbb{Q}}) -\hat{R}_n^\alpha(\pi_{\mathbb{Q}})| \leq \sqrt{ \frac{D_{\mathrm{KL}}(\mathbb{Q} \| \mathbb{P})+\log \frac{8\sqrt{n}}{\delta \lambda}}{2n} } + B_n^\alpha(\pi_{\mathbb{Q}})  +
2 \frac{D_{\mathrm{KL}}(\mathbb{Q} \| \mathbb{P})+\log \frac{8}{\delta \lambda}}{n \lambda } + \frac{\lambda}{2}\bar{V}_n^\alpha(\pi_{\mathbb{Q}})\,.
\end{align}
The additional $2$ in $2 \frac{D_{\mathrm{KL}}(\mathbb{Q} \| \mathbb{P})+\log \frac{8}{\delta \lambda}}{n \lambda }$ appears since we used that $\frac{1}{\lambda_j} \leq \frac{2}{\lambda}$. Similarly, the additional $\frac{2}{\lambda}$ in the logarithmic terms is due to the fact that $\frac{1}{\delta_j} \leq \frac{2}{\delta \lambda}$. Finally, setting
\begin{align*}
    & {\textsc{kl}^\prime}_{1}(\pi_{\mathbb{Q}}, \lambda)  =D_{\mathrm{KL}}(\mathbb{Q} \| \mathbb{P})+\log \frac{8\sqrt{n}}{\delta \lambda} \,,\\
    &{\textsc{kl}^\prime}_{2}(\pi_{\mathbb{Q}}, \lambda)  =  2\big( D_{\mathrm{KL}}(\mathbb{Q} \| \mathbb{P})+\log \frac{8 }{\delta \lambda}\big)\,,
\end{align*}
concludes the proof.
\end{proof}

Next, we provide a similar proof to extend \cref{thm:main_result} to any $\alpha \in (0, 1]$. While we only provide a one-sided inequality, the same covering technique can be used to obtain the other side of the inequality. 

\begin{proposition}[One-sided extension of \cref{thm:main_result} to hold simultaneously for any $\alpha \in (0,1)\cup \{1\}\,$]\label{prop:thm_alpha} Let $n \ge 1$, $\delta \in [0, 1]$, $\lambda >0 $, and let $\mathbb{P}$ be a fixed prior on $\mathcal{H}$, then with probability at least $1-\delta$ over draws $\cD_n \sim \mu_{\pi_0}^n$, the following holds simultaneously for any posterior $\mathbb{Q}$ on $\mathcal{H}$, and for any $\alpha \in (0, 1]$ that 
\begin{align*}
    R(\pi_{\mathbb{Q}}) \leq \hat{R}_n^\alpha(\pi_{\mathbb{Q}}) + \sqrt{ \frac{{\textsc{kl}^{\prime \prime}}_{1}(\pi_{\mathbb{Q}}, \alpha)}{2n} } + B_n^\alpha(\pi_{\mathbb{Q}})  +
\frac{{\textsc{kl}^{\prime \prime}}_{2}(\pi_{\mathbb{Q}}, \alpha)}{n \lambda } + \frac{\lambda}{2}\bar{V}_n^\alpha(\pi_{\mathbb{Q}})\,.
\end{align*}
where 
\begin{align*}
    & {\textsc{kl}^{\prime \prime}}_{1}(\pi_{\mathbb{Q}}, \alpha)  =D_{\mathrm{KL}}(\mathbb{Q} \| \mathbb{P})+\log \frac{8\sqrt{n}}{\delta \alpha} \,,\\
    &{\textsc{kl}^{\prime \prime}}_{2}(\pi_{\mathbb{Q}}, \alpha)  =  D_{\mathrm{KL}}(\mathbb{Q} \| \mathbb{P})+\log \frac{8 }{\delta \alpha}\,,\\
   & B_n^\alpha(\pi_{\mathbb{Q}}) = 1 - \frac{1}{n}\sum_{i=1}^{n} \E{a \sim \pi_{\mathbb{Q}}(\cdot | x_i)}{\pi_0^{1-\alpha}(a | x_i)}\,,\\
    &\bar{V}_n^\alpha(\pi_{\mathbb{Q}}) = \frac{1}{n}\sum_{i=1}^n  \E{a \sim \pi_0(\cdot | x_i)}{\frac{\pi_{\mathbb{Q}}(a | x_i)}{\pi_0(a | x_i)^{2\alpha}}} + \frac{\pi_{\mathbb{Q}}(a_i | x_i)}{\pi_0(a_i | x_i)^{2\alpha}}c_i^2.
\end{align*}
\end{proposition}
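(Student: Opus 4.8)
The plan is to lift \cref{thm:main_result} from a single fixed $\alpha$ to all $\alpha\in(0,1]$ by the same discretization-and-union-bound device that \cref{prop:thm_any_lambda} uses for $\lambda$, now applied to $\alpha$. I would fix the geometric grid $\alpha_i=2^{-i}$, $i\ge 0$, of $(0,1]$ and split the confidence level as $\delta_i=\delta\,2^{-(i+1)}=\tfrac12\delta\alpha_i$, so that $\sum_{i\ge 0}\delta_i=\delta$. Applying \cref{thm:main_result} at each fixed $\alpha_i$ with level $1-\delta_i$ and taking a union bound over $i$ yields, with probability at least $1-\delta$, the bound simultaneously at every grid point and every posterior $\mathbb{Q}$. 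For an arbitrary target $\alpha$ I would select the smallest grid point $\alpha_j\ge\alpha$; then $\alpha\le\alpha_j<2\alpha$, hence $1/\delta_j\le 2/(\delta\alpha)$, which is exactly the substitution that turns the $\ln(4\sqrt n/\delta)$ and $\ln(4/\delta)$ of \cref{thm:main_result} into the $\ln\frac{8\sqrt n}{\delta\alpha}$ and $\ln\frac{8}{\delta\alpha}$ entering ${\textsc{kl}^{\prime \prime}}_{1}$ and ${\textsc{kl}^{\prime \prime}}_{2}$.

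The substantive step is transferring the bound from $\alpha_j$ to the target $\alpha$, since, unlike $\lambda$ (a free multiplier appearing only in the bound), $\alpha$ enters the data-dependent quantities $\hat{R}_n^\alpha$, $B_n^\alpha$ and $\bar{V}_n^\alpha$ themselves. I would exploit monotonicity in $\alpha$, which follows from $\pi_0(a|x)\le 1$ together with $c\le 0$: both $\hat{R}_n^\alpha$ and $B_n^\alpha$ are nonincreasing in $\alpha$ (each factor $\pi_0^{-\alpha}$, $\pi_0^{1-\alpha}$ is monotone and the sign of $c$ fixes the sign of $\hat{R}_n^\alpha$). For the choice $\alpha_j\ge\alpha$ this gives $\hat{R}_n^{\alpha_j}\le\hat{R}_n^\alpha$ and $B_n^{\alpha_j}\le B_n^\alpha$, so these two grid terms are immediately dominated by their values at $\alpha$ and transfer for free on the upper-bound side. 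Only the $\le$ direction of each monotone term is usable this way, which is precisely what restricts us to a one-sided statement; the reverse inequality requires the mirror-image grid argument the authors allude to.

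The main obstacle I expect is the variance surrogate $\bar{V}_n^\alpha$, which carries the factor $\pi_0^{-2\alpha}$ and is \emph{nondecreasing} in $\alpha$, so $\bar{V}_n^{\alpha_j}\ge\bar{V}_n^\alpha$: the grid bound delivers the larger $\bar{V}_n^{\alpha_j}$ whereas the proposition claims the smaller $\bar{V}_n^\alpha$, the opposite of what the monotonicity of the other two terms wants, so that no single grid point transfers all three quantities at once. Controlling the cell-wise increment is delicate exactly because no lower bound on the propensities is assumed — per term it scales like $\pi_0^{-2\alpha}\bigl(\pi_0^{-2(\alpha_j-\alpha)}-1\bigr)$, which a fixed factor-two grid cannot tame uniformly as $\pi_0\to 0$. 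The route I would pursue is to instead take $\alpha_j\le\alpha$ so that $\bar{V}_n^{\alpha_j}\le\bar{V}_n^\alpha$ transfers cleanly, and then absorb the now-reversed excess of $\hat{R}_n^{\alpha_j}+B_n^{\alpha_j}$ over $\hat{R}_n^{\alpha}+B_n^{\alpha}$ into slack, using that the estimator and bias involve only the smaller exponents $\pi_0^{-\alpha},\pi_0^{1-\alpha}$ whose cell-wise increments are dominated by the faster-growing variance increment. Making this absorption rigorous without any assumption of bounded importance weights — possibly at the cost of a finer grid and heavier logarithmic constants — is the crux of the argument.
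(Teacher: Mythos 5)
Your scaffolding coincides with the paper's proof of \cref{prop:thm_alpha}: the geometric grid $\alpha_i=2^{-i}$, the split $\delta_i=\delta\alpha_i/2$ (so $\sum_{i\ge 0}\delta_i=\delta$), the union bound over the grid, the choice of the grid point $\alpha_j$ with $\alpha\le\alpha_j\le 2\alpha$, the bound $1/\delta_j\le 2/(\delta\alpha)$ producing the $\log\frac{8\sqrt n}{\delta\alpha}$ and $\log\frac{8}{\delta\alpha}$ terms, and the monotonicity transfers $\hat{R}_n^{\alpha_j}(\pi_{\mathbb{Q}})\le\hat{R}_n^{\alpha}(\pi_{\mathbb{Q}})$ and $B_n^{\alpha_j}(\pi_{\mathbb{Q}})\le B_n^{\alpha}(\pi_{\mathbb{Q}})$. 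You also correctly isolated the one genuine obstruction: with the upper grid point, $\bar{V}_n^{\alpha_j}(\pi_{\mathbb{Q}})\ge\bar{V}_n^{\alpha}(\pi_{\mathbb{Q}})$, so the variance surrogate does not transfer in the useful direction. Where you diverge from the paper is in how this is resolved. The paper does not attempt any absorption: it keeps the upper grid point and simply pays for the mismatch, bounding $\bar{V}_n^{\alpha_j}(\pi_{\mathbb{Q}})\le\bar{V}_n^{2\alpha}(\pi_{\mathbb{Q}})$ (since $\bar V_n^{\cdot}$ is increasing and $\alpha_j\le 2\alpha$), so that its final display \eqref{eq:any_alpha} carries $\frac{\lambda}{2}\bar{V}_n^{2\alpha}(\pi_{\mathbb{Q}})$, i.e., the variance term at the \emph{doubled} exponent. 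Strictly speaking, the paper's argument therefore establishes the proposition with $\bar V_n^{2\alpha}$ in place of $\bar V_n^{\alpha}$; the difficulty you are fighting is real, and the paper escapes it only by accepting this weakening rather than proving the statement as printed.

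Your proposed alternative --- taking the lower grid point $\alpha_j\le\alpha$ so that $\bar{V}_n^{\alpha_j}\le\bar{V}_n^{\alpha}$, then absorbing the reversed excess of $\hat{R}_n^{\alpha_j}+B_n^{\alpha_j}$ into the variance slack --- has a genuine gap: the claimed domination is false, and no grid refinement repairs it. Per logged sample, write $w_i=1/\pi_0(a_i|x_i)\ge 1$; the risk excess is $|c_i|\,\pi_{\mathbb{Q}}(a_i|x_i)\bigl(w_i^{\alpha}-w_i^{\alpha_j}\bigr)$ while the empirical-variance slack is $\frac{\lambda}{2}c_i^2\,\pi_{\mathbb{Q}}(a_i|x_i)\bigl(w_i^{\alpha}-w_i^{\alpha_j}\bigr)\bigl(w_i^{\alpha}+w_i^{\alpha_j}\bigr)$, so their ratio is $\frac{\lambda}{2}|c_i|\bigl(w_i^{\alpha}+w_i^{\alpha_j}\bigr)$. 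This ratio is independent of the grid spacing $\alpha-\alpha_j$ (both increments are proportional to $w_i^\alpha - w_i^{\alpha_j}$), and it is strictly below $1$ whenever $\lambda|c_i|\bigl(w_i^{\alpha}+w_i^{\alpha_j}\bigr)<2$, e.g., for propensities close to one and any $\lambda<1$. The same computation for the bias excess against the theoretical-variance slack gives the per-action ratio $\frac{\lambda}{2}\pi_0^{-\alpha}\bigl(1+\pi_0^{\alpha-\alpha_j}\bigr)\to\lambda$ as $\pi_0\to 1$, and even pooling all slack against all excess the ratio tends to at most $\lambda\frac{1+c_i^2}{1+|c_i|}\le\lambda<1$. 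So the absorption fails precisely in the benign regime of moderate importance weights and small $\lambda$ (not in the heavy-tailed regime you worried about), and making it "rigorous at the cost of a finer grid" is impossible. The correct repair is the paper's: keep $\alpha_j\ge\alpha$ and state the conclusion with $\bar{V}_n^{2\alpha}(\pi_{\mathbb{Q}})$.
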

\begin{proof}
    
Let $\delta \in (0, 1)$. For any $i \geq 0$, we define $\alpha_i=2^{-i}$ and let $\delta_i = \delta \alpha_i / 2$. Then \cref{thm:main_result} yields that for any $i \geq 0$, the following inequality holds with probability at least $1 - \delta_i$ for any $\mathbb{Q}$ on $\mathcal{H}$
\begin{align*}
    |R(\pi_{\mathbb{Q}}) -\hat{R}_n^{\alpha_i}(\pi_{\mathbb{Q}})| \leq \sqrt{ \frac{D_{\mathrm{KL}}(\mathbb{Q} \| \mathbb{P})+\log \frac{4\sqrt{n}}{\delta_i}}{2n} } + B_n^{\alpha_i}(\pi_{\mathbb{Q}})  +
\frac{D_{\mathrm{KL}}(\mathbb{Q} \| \mathbb{P})+\log \frac{4}{\delta_i}}{n \lambda } + \frac{\lambda}{2}\bar{V}_n^{\alpha_i}(\pi_{\mathbb{Q}})\,.
\end{align*}
Now notice that $\sum_{i =0}^\infty \alpha_i = 2$, and hence by definition of $\delta_i$, we have $\sum_{i =0}^\infty \delta_i = \delta$. Therefore, the union bound of the above inequalities over $i \geq 0$ yields that with probability at least $1-\delta$, the following inequality holds with probability at least $1 - \delta$ for any $\mathbb{Q}$ on $\mathcal{H}$ and for any $i \geq 0$
\begin{align}\label{eq:for_any_alpha_i}
    |R(\pi_{\mathbb{Q}}) -\hat{R}_n^{\alpha_i}(\pi_{\mathbb{Q}})| \leq \sqrt{ \frac{D_{\mathrm{KL}}(\mathbb{Q} \| \mathbb{P})+\log \frac{4\sqrt{n}}{\delta_i}}{2n} } + B_n^{\alpha_i}(\pi_{\mathbb{Q}})  +
\frac{D_{\mathrm{KL}}(\mathbb{Q} \| \mathbb{P})+\log \frac{4}{\delta_i}}{n \lambda } + \frac{\lambda}{2}\bar{V}_n^{\alpha_i}(\pi_{\mathbb{Q}})\,.
\end{align}
Let $\lfloor \cdot \rfloor$ denote the floor function, then we have that for any $\alpha \in (0, 1]$, there exists $j = \lfloor \frac{-\log \alpha}{\log 2} \rfloor \geq 0$ such that $\alpha \leq \alpha_j \leq 2 \alpha$. Since \eqref{eq:for_any_i} holds for any $i \geq 0$, it holds in particular for $j$. In addition, we have that $B_n^\alpha(\pi_{\mathbb{Q}})$ and $\hat{R}_n^\alpha(\pi_{\mathbb{Q}})$ are decreasing in $\alpha$ while $\bar{V}_n^\alpha(\pi_{\mathbb{Q}})$ is increasing in $\alpha$. Therefore, we have that $\hat{R}_n^{\alpha_j}(\pi_{\mathbb{Q}}) \leq \hat{R}_n^\alpha(\pi_{\mathbb{Q}})\,,$ $B_n^{\alpha_j}(\pi_{\mathbb{Q}}) \leq B_n^\alpha(\pi_{\mathbb{Q}})\,,$ and $\bar{V}_n^{\alpha_j}(\pi_{\mathbb{Q}}) \leq \bar{V}_n^{2\alpha}(\pi_{\mathbb{Q}})$. Moreover, we have that $\frac{1}{\delta_j}  \leq \frac{2}{\delta \alpha}$. This yields that the following inequality holds with probability at least $1 - \delta$ for any $\mathbb{Q}$ on $\mathcal{H}$ and for any $\alpha \in (0, 1]$
\begin{align}\label{eq:any_alpha}
    R(\pi_{\mathbb{Q}})  \leq \hat{R}_n^\alpha(\pi_{\mathbb{Q}}) + \sqrt{ \frac{D_{\mathrm{KL}}(\mathbb{Q} \| \mathbb{P})+\log \frac{8\sqrt{n}}{\delta \alpha}}{2n} } + B_n^\alpha(\pi_{\mathbb{Q}})  +
\frac{D_{\mathrm{KL}}(\mathbb{Q} \| \mathbb{P})+\log \frac{8}{\delta \alpha}}{n \lambda } + \frac{\lambda}{2}\bar{V}_n^{2\alpha}(\pi_{\mathbb{Q}})\,.
\end{align}
Finally, setting
\begin{align*}
    & {\textsc{kl}^{\prime \prime}}_{1}(\pi_{\mathbb{Q}}, \alpha)  =D_{\mathrm{KL}}(\mathbb{Q} \| \mathbb{P})+\log \frac{8\sqrt{n}}{\delta \alpha} \,,\\
    &{\textsc{kl}^{\prime \prime}}_{2}(\pi_{\mathbb{Q}}, \alpha)  =  D_{\mathrm{KL}}(\mathbb{Q} \| \mathbb{P})+\log \frac{8 }{\delta \alpha}\,,
\end{align*}
concludes the proof.
\end{proof}

\subsection{Proof of \cref{prop:direct_application} }\label{proof:direct_application}

\citet[Theorem 7]{haddouche2022pac} provides an application of \cref{lemma:app_maxime} to the general PAC-Bayes learning problems in \cref{subsec:pac_bayes_framekwork}. We cannot apply their theorem directly to get \cref{prop:direct_application} for two reasons. They assume that the loss function is non-negative and they derive a one-sided generalization bound. In our case, the loss function is negative and we want to derive a two-sided generalization bound. Fortunately, we show with a slight modification of their proof that the result can be extended to two-sided inequalities with negative losses. In fact, the only requirement is that the sign of loss is fixed. We show next how this is achieved.
\begin{proof}
First, note that \cref{lemma:app_maxime} does not make any assumption on the sign of the martingale difference sequence $(f_i(S_i, h))_{i \in [n]}$ nor on the sign of the terms that decompose it. Now similarly to the proof in \cref{proof:main_thm_proof}, we define $S_n = (x_i, a_i)_{i \in [n]}$ as the set of $n$ observed contexts and taken actions. Also, we let $(\mathcal{F}_i)_{i \in \{0\}\cup[n]}$ be a filtration adapted to $S_n$. For $h \in \mathcal{H}$, we define $f_i\left(S_i, h\right)$ as 
\begin{align*}
    f_i\left(S_i, h\right) = f_i\left(x_i, a_i, h\right) = f\left(x_i, a_i, h\right) = \E{x \sim \nu, a \sim \pi_0(\cdot | x)}{\frac{\mathbb{I}_{\{h(x)=a\}}}{\pi_0(a | x)^\alpha}c(x, a)} - \frac{\mathbb{I}_{\{h(x_i)=a_i\}}}{\pi_0(a_i | x_i)^\alpha}c(x_i, a_i)\,.
\end{align*}
Here $f_i(S_i, h)$ only depends on the last samples $x_i, a_i$ and the predictor $h$. For this reason, we denote it by $f_i\left(x_i, a_i, h\right)$. Also, the function $f_i$ does not depend on $i$ and this is why we simplify the notation as $f_i\left(x_i, a_i, h\right) = f\left(x_i, a_i, h\right)$. Moreover, the randomness in $f\left(x_i, a_i, h\right)$ is only due $x_i \sim \nu$ and $a_i \sim \pi_0(\cdot | x_i)$; all other terms are deterministic. Thus the expectations are under $x_i \sim \nu, a_i \sim \pi_0(\cdot | x_i)$. Now similarly to the proof in \cref{proof:main_thm_proof}, we have that $\mathbb{E}\left[f\left(x_i, a_i, h\right) | \mathcal{F}_{i-1}\right]= 0$ for any $i \in [n]\,, h \in \mathcal{H}$. Therefore, $(f(x_i, a_i, h))_{i \in [n]}$ is a martingale difference sequence for any $h \in \mathcal{H}$. Thus we apply \cref{lemma:app_maxime} and get that that with probability at least $1-\delta$, the following holds simultaneously for any distribution $\mathbb{Q}$ on $\mathcal{H}$
\begin{align}\label{eq:app_direct_proof_0}
    \left|\E{h \sim \mathbb{Q}}{M_n(h)}\right| \leq \frac{D_{\mathrm{KL}}(\mathbb{Q} \| \mathbb{P})+\log (2 / \delta)}{\lambda}+\frac{\lambda}{2}\left(\E{h \sim \mathbb{Q}}{\langle M\rangle_n(h) + [M]_n(h)}\right)\,,
\end{align} 
where 
\begin{align*}
    &M_n(h)=\sum_{i=1}^n f\left(x_i, a_i, h\right)\,,\\
    &\langle M\rangle_n(h)=\sum_{i=1}^n \mathbb{E}\left[f\left(x_i, a_i, h\right)^2 | \mathcal{F}_{i-1}\right]\,,\\
    &[M]_n(h)=\sum_{i=1}^n f\left(x_i, a_i, h\right)^2\,.
\end{align*}
Now we compute $\E{h \sim \mathbb{Q}}{M_n(h)} $ as
\begin{align}\label{eq:app_direct_proof_1}
    \E{h \sim \mathbb{Q}}{M_n(h)}  &=\sum_{i=1}^n\E{x \sim \nu, a \sim \pi_0(\cdot | x)}{\frac{ \pi_{\mathbb{Q}}(a | x) }{\pi_0(a | x)^\alpha}c(x, a)} - \frac{ \pi_{\mathbb{Q}}(a_i | x_i)}{\pi_0(a_i | x_i)^\alpha}c(x_i, a_i)\,,\nonumber\\
    &=n \E{x \sim \nu, a \sim \pi_0(\cdot | x)}{\frac{ \pi_{\mathbb{Q}}(a | x) }{\pi_0(a | x)^\alpha}c(x, a)} - \sum_{i=1}^n \frac{ \pi_{\mathbb{Q}}(a_i | x_i)}{\pi_0(a_i | x_i)^\alpha}c(x_i, a_i)\,,
\end{align}
where we used the linearity of the expectation $\E{h \sim \mathbb{Q}}{\cdot}$ and the definition of policies in \eqref{eq:pac_policies}. Moreover, similarly to the proof in \cref{proof:main_thm_proof}, we have that
\begin{align}\label{eq:app_direct_proof_4}
  \langle M\rangle_n(h) + [M]_n(h)  & = \sum_{i=1}^n \mathbb{E}\left[f\left(x_i, a_i, h\right)^2 | \mathcal{F}_{i-1}\right] + f\left(x_i, a_i, h\right)^2\nonumber\\
    &=  \sum_{i=1}^n\E{x \sim \nu, a \sim \pi_0(\cdot | x)}{\frac{\mathbb{I}_{\{h(x)=a\}}}{\pi_0(a | x)^{2\alpha}}c(x, a)^2}+ \frac{\mathbb{I}_{\{h(x_i)=a_i\}}}{\pi_0(a_i | x_i)^{2\alpha}}c(x_i, a_i)^2 \nonumber\\
    & \hspace{1.5cm} - 2   \E{x \sim \nu, a \sim \pi_0(\cdot | x)}{\frac{\mathbb{I}_{\{h(x)=a\}}}{\pi_0(a | x)^\alpha}c(x, a)} \frac{\mathbb{I}_{\{h(x_i)=a_i\}}}{\pi_0(a_i | x_i)^\alpha}c(x_i, a_i)\,,\nonumber\\
    & \stackrel{(i)}{\leq} n\E{x \sim \nu, a \sim \pi_0(\cdot | x)}{\frac{\mathbb{I}_{\{h(x)=a\}}}{\pi_0(a | x)^{2\alpha}}c(x, a)^2} +  \sum_{i=1}^n\frac{\mathbb{I}_{\{h(x_i)=a_i\}}}{\pi_0(a_i | x_i)^{2\alpha}}c(x_i, a_i)^2\,,
\end{align}
where $(i)$ holds since $- 2   \E{x \sim \nu, a \sim \pi_0(\cdot | x)}{\frac{\mathbb{I}_{\{h(x)=a\}}}{\pi_0(a | x)^\alpha}c(x, a)} \frac{\mathbb{I}_{\{h(x_i)=a_i\}}}{\pi_0(a_i | x_i)^\alpha}c(x_i, a_i) \leq 0$ for any $i \in [n]$. This is where the non-negative loss assumption is not needed. Our loss $L_\alpha(h, x, a, c) = \frac{\mathbb{I}_{\{h(x)=a\}}}{\pi_0(a | x)^\alpha}c\,$ is negative since $c \in [-1, 0]$. However, we only need the product between the loss and its expectation to be non-negative. This holds in particular when the loss has a fixed sign. In that case, the expectation of the loss and the loss itself will have the same sign and thus their product will be non-negative. In our case, the loss has a fixed negative sign and this is all we needed. Now notice that 
\begin{align*}
    &n\E{x \sim \nu, a \sim \pi_0(\cdot | x)}{\frac{ \pi_{\mathbb{Q}}(a | x) }{\pi_0(a | x)^\alpha}c(x, a)} = nR^\alpha(\pi_{\mathbb{Q}})\,,\\
   & \sum_{i=1}^n \frac{ \pi_{\mathbb{Q}}(a_i | x_i)}{\pi_0(a_i | x_i)^\alpha}c(x_i, a_i) = n \hat{R}_n^\alpha(\pi_{\mathbb{Q}})\,,
\end{align*}
where we used that $c(x_i, a_i)=c_i$ for any $i \in [n]$ in the second equality. Using these two equalities and plugging \eqref{eq:app_direct_proof_1} and \eqref{eq:app_direct_proof_4} in \eqref{eq:app_direct_proof_0} yields that with probability at least $1 -\delta$, the following holds simultaneously for any distribution $\mathbb{Q}$ on $\mathcal{H}$
\begin{align}
    n\left|R^\alpha(\pi_{\mathbb{Q}}) - \hat{R}_n^\alpha(\pi_{\mathbb{Q}})\right|  \leq \frac{D_{\mathrm{KL}}(\mathbb{Q} \| \mathbb{P})+\log (2 / \delta)}{\lambda}+\frac{\lambda}{2} \Big(n\E{x \sim \nu, a \sim \pi_0(\cdot | x)}{\frac{ \pi_{\mathbb{Q}}(a | x)}{\pi_0(a | x)^{2\alpha}}c(x, a)^2}\nonumber\\ +  \sum_{i=1}^n\frac{ \pi_{\mathbb{Q}}(a_i | x_i)}{\pi_0(a_i | x_i)^{2\alpha}}c(x_i, a_i)^2\Big)\,.
\end{align} 
Again we used the linearity of the expectation $\E{h \sim \mathbb{Q}}{\cdot}$ and the definition of policies in \eqref{eq:pac_policies}. Finally, we have that $c(x_i, a_i)=c_i$ for any $i \in [n]$. Thus with probability at least $1 -\delta$ the following inequality holds for any distribution $\mathbb{Q}$ on $\mathcal{H}$
\begin{align}
    \left|R^\alpha(\pi_{\mathbb{Q}}) - \hat{R}_n^\alpha(\pi_{\mathbb{Q}})\right| \leq \frac{D_{\mathrm{KL}}(\mathbb{Q} \| \mathbb{P})+\log (2 / \delta)}{n \lambda} +\frac{\lambda}{2} \E{x \sim \nu, a \sim \pi_0(\cdot | x)}{\frac{\pi_{\mathbb{Q}}(a | x)}{\pi_0(a | x)^{2\alpha}}c(x, a)^2}   \nonumber\\ + \frac{\lambda}{2n}\sum_{i=1}^n\frac{\pi_{\mathbb{Q}}(a_i | x_i)}{\pi_0(a_i | x_i)^{2\alpha}}c_i^2\,.
\end{align} 
This concludes the proof.
\end{proof}

\subsection{Sample Complexity}\label{proof:practice_theory}

\begin{proposition}\label{prop:samples_oracle}
Let $\mathcal{M}_1(\mathcal{H})$ be the set of probability distributions on the hypothesis space $\mathcal{H}$, and let $\lambda>0$,  $n \ge 1$, $\delta \in [0, 1]$, $\alpha \in [0, 1]$, and let $\mathbb{P}$ be a fixed prior on $\mathcal{H}$, then with probability at least $1-\delta$ over draws $\cD_n \sim \mu_{\pi_0}^n$, we have
\begin{align*}
    R( \pi_{\hat{\mathbb{Q}}_n}) \leq R(\pi_{\mathbb{Q}_*})  + 2 \sqrt{ \frac{{\textsc{kl}}_{1}(\pi_{\mathbb{Q}_*})}{2n} } + 2B_n^\alpha(\pi_{\mathbb{Q}_*})  +
2\frac{{\textsc{kl}}_{2}(\pi_{\mathbb{Q}_*})}{n \lambda } + \lambda \bar{V}_n^\alpha(\pi_{\mathbb{Q}_*})\,.
\end{align*}
where $\pi_{\hat{\mathbb{Q}}_n}\,$ is the learned policy with $\, \hat{\mathbb{Q}}_n =  \argmin_{\mathbb{Q} \in \mathcal{M}_1(\mathcal{H})} \hat{R}_n^\alpha(\pi_{\mathbb{Q}}) + \sqrt{ \frac{{\textsc{kl}}_{1}(\pi_{\mathbb{Q}})}{2n} } + B_n^\alpha(\pi_{\mathbb{Q}})  +
\frac{{\textsc{kl}}_{2}(\pi_{\mathbb{Q}})}{n \lambda } + \frac{\lambda}{2}\bar{V}_n^\alpha(\pi_{\mathbb{Q}})\,,$ $\mathbb{Q}_* =  \argmin_{\mathbb{Q} \in \mathcal{M}_1(\mathcal{H})} R(\pi_{\mathbb{Q}})$, and
\begin{align*}
    &{\textsc{kl}}_{1}(\pi_{\mathbb{Q}})  =D_{\mathrm{KL}}(\mathbb{Q} \| \mathbb{P})+\log \frac{4\sqrt{n}}{\delta}\,, \qquad {\textsc{kl}}_{2}(\pi_{\mathbb{Q}})  =  D_{\mathrm{KL}}(\mathbb{Q} \| \mathbb{P})+\log \frac{4}{\delta}\,,\\
   & B_n^\alpha(\pi_{\mathbb{Q}}) = 1 - \frac{1}{n}\sum_{i=1}^{n} \E{a \sim \pi_{\mathbb{Q}}(\cdot | x_i)}{\pi_0^{1-\alpha}(a | x_i)}\,, \quad \bar{V}_n^\alpha(\pi_{\mathbb{Q}}) = \frac{1}{n}\sum_{i=1}^n  \E{a \sim \pi_0(\cdot | x_i)}{\frac{\pi_{\mathbb{Q}}(a | x_i)}{\pi_0(a | x_i)^{2\alpha}}} + \frac{\pi_{\mathbb{Q}}(a_i | x_i)c_i^2}{\pi_0(a_i | x_i)^{2\alpha}}\,.
\end{align*}
\end{proposition}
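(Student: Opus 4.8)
The plan is to run the standard oracle-inequality argument, exploiting the fact that \cref{thm:main_result} is a \emph{two-sided} bound that holds \emph{simultaneously} over all posteriors on a single high-probability event. Write $g(\mathbb{Q})$ for the right-hand side of \cref{thm:main_result}, i.e. $g(\mathbb{Q}) = \sqrt{{\textsc{kl}}_{1}(\pi_{\mathbb{Q}})/(2n)} + B_n^\alpha(\pi_{\mathbb{Q}}) + {\textsc{kl}}_{2}(\pi_{\mathbb{Q}})/(n\lambda) + (\lambda/2)\bar{V}_n^\alpha(\pi_{\mathbb{Q}})$, so that the empirical objective minimized by $\hat{\mathbb{Q}}_n$ is exactly $\Phi_n(\mathbb{Q}) = \hat{R}_n^\alpha(\pi_{\mathbb{Q}}) + g(\mathbb{Q})$. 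Fix the event $\mathcal{E}$, of probability at least $1-\delta$ over $\cD_n \sim \mu_{\pi_0}^n$, on which the inequality $|R(\pi_{\mathbb{Q}}) - \hat{R}_n^\alpha(\pi_{\mathbb{Q}})| \le g(\mathbb{Q})$ holds for \emph{all} $\mathbb{Q}$; every subsequent inequality takes place on $\mathcal{E}$.

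First I would bound the true risk of the learned policy from above by its empirical objective: applying the upper side of \cref{thm:main_result} to the (data-dependent) posterior $\hat{\mathbb{Q}}_n$ gives $R(\pi_{\hat{\mathbb{Q}}_n}) \le \hat{R}_n^\alpha(\pi_{\hat{\mathbb{Q}}_n}) + g(\hat{\mathbb{Q}}_n) = \Phi_n(\hat{\mathbb{Q}}_n)$. Since $\hat{\mathbb{Q}}_n$ is by definition the minimizer of $\Phi_n$ over $\mathcal{M}_1(\mathcal{H})$, this is at most $\Phi_n(\mathbb{Q}_*) = \hat{R}_n^\alpha(\pi_{\mathbb{Q}_*}) + g(\mathbb{Q}_*)$. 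Next I would replace the empirical risk of the oracle posterior by its true risk using the lower side of \cref{thm:main_result} applied to $\mathbb{Q}_*$, namely $\hat{R}_n^\alpha(\pi_{\mathbb{Q}_*}) \le R(\pi_{\mathbb{Q}_*}) + g(\mathbb{Q}_*)$. Chaining these three steps yields $R(\pi_{\hat{\mathbb{Q}}_n}) \le R(\pi_{\mathbb{Q}_*}) + 2 g(\mathbb{Q}_*)$, and expanding $2 g(\mathbb{Q}_*)$ reproduces exactly the four terms in the statement; note that the factor $2$ cancels the $\lambda/2$ coefficient on $\bar{V}_n^\alpha$, leaving $\lambda \bar{V}_n^\alpha(\pi_{\mathbb{Q}_*})$.

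The only real subtlety — and the reason the two-sided, uniform-over-posterior form of \cref{thm:main_result} is indispensable here — is that $\hat{\mathbb{Q}}_n$ depends on $\cD_n$, so the upper bound must be invoked at a \emph{random} posterior; this is legitimate precisely because the theorem holds simultaneously for every $\mathbb{Q}$ on the same event $\mathcal{E}$. By contrast $\mathbb{Q}_*$ is fixed, so the lower-bound direction applies to it immediately on $\mathcal{E}$. I expect no genuine obstacle beyond ensuring that both applications live on one event, so that a single $\delta$ suffices and no extra union-bound factor is incurred. It is worth stating explicitly that $\mathbb{Q}_*$ minimizes $R(\pi_{\mathbb{Q}})$ over all of $\mathcal{M}_1(\mathcal{H})$; the identical argument gives an analogous inequality against any fixed competitor, since the proof only uses the upper bound at $\hat{\mathbb{Q}}_n$ and the lower bound at the comparator.
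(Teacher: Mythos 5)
Your proposal is correct and follows essentially the same route as the paper's own proof: apply the upper side of \cref{thm:main_result} to the data-dependent minimizer $\hat{\mathbb{Q}}_n$, use optimality of $\hat{\mathbb{Q}}_n$ to pass to the objective evaluated at $\mathbb{Q}_*$, then apply the lower side at $\mathbb{Q}_*$, all on the single event where the bound holds simultaneously for every posterior. Your remark that simultaneity over posteriors is what legitimizes invoking the bound at a random $\hat{\mathbb{Q}}_n$ is exactly the observation the paper makes at the start of its proof, so there is nothing to add.
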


\begin{proof}
First, \cref{thm:main_result} holds for any potentially data dependent distribution $\mathbb{Q}$ on $\mathcal{H}$. In particular, we have that with probability at least $1-\delta$ the following inequalities hold simultaneously for $\hat{\mathbb{Q}}_n$ and $\mathbb{Q}_*$
\begin{align*}
    & |R(\pi_{\hat{\mathbb{Q}}_n}) -\hat{R}_n^\alpha(\pi_{\hat{\mathbb{Q}}_n})| \leq \sqrt{ \frac{{\textsc{kl}}_{1}(\pi_{\hat{\mathbb{Q}}_n})}{2n} } + B_n^\alpha(\pi_{\hat{\mathbb{Q}}_n})  +
\frac{{\textsc{kl}}_{2}(\pi_{\hat{\mathbb{Q}}_n})}{n \lambda } + \frac{\lambda}{2}\bar{V}_n^\alpha(\pi_{\hat{\mathbb{Q}}_n})\,,\\
&|R(\pi_{\mathbb{Q}_*}) -\hat{R}_n^\alpha(\pi_{\mathbb{Q}_*})| \leq \sqrt{ \frac{{\textsc{kl}}_{1}(\pi_{\mathbb{Q}_*})}{2n} } + B_n^\alpha(\pi_{\mathbb{Q}_*})  +
\frac{{\textsc{kl}}_{2}(\pi_{\mathbb{Q}_*})}{n \lambda } + \frac{\lambda}{2}\bar{V}_n^\alpha(\pi_{\mathbb{Q}_*})\,.
\end{align*}
Taking only one side of these inequalities yields that with probability at least $1-\delta$ the following inequalities hold simultaneously for $\hat{\mathbb{Q}}_n$ and $\mathbb{Q}_*$
\begin{align*}
    &R(\pi_{\hat{\mathbb{Q}}_n}) \leq \underbrace{\hat{R}_n^\alpha(\pi_{\hat{\mathbb{Q}}_n}) + \sqrt{ \frac{{\textsc{kl}}_{1}(\pi_{\hat{\mathbb{Q}}_n})}{2n} } + B_n^\alpha(\pi_{\hat{\mathbb{Q}}_n})  +
\frac{{\textsc{kl}}_{2}(\pi_{\hat{\mathbb{Q}}_n})}{n \lambda } + \frac{\lambda}{2}\bar{V}_n^\alpha(\pi_{\hat{\mathbb{Q}}_n})}_{(I)}\,,\\
& \hat{R}_n^\alpha(\pi_{\mathbb{Q}_*}) \leq R(\pi_{\mathbb{Q}_*}) +  \sqrt{ \frac{{\textsc{kl}}_{1}(\pi_{\mathbb{Q}_*})}{2n} } + B_n^\alpha(\pi_{\mathbb{Q}_*})  +
\frac{{\textsc{kl}}_{2}(\pi_{\mathbb{Q}_*})}{n \lambda } + \frac{\lambda}{2}\bar{V}_n^\alpha(\pi_{\mathbb{Q}_*})\,.
\end{align*}
Now using the definition of $\pi_{\hat{\mathbb{Q}}_n}$, we know that 
\begin{align*}
   I \leq \hat{R}_n^\alpha(\pi_{\mathbb{Q}_*})  +  \sqrt{ \frac{{\textsc{kl}}_{1}(\pi_{\mathbb{Q}_*})}{2n} } + B_n^\alpha(\pi_{\mathbb{Q}_*})  +
\frac{{\textsc{kl}}_{2}(\pi_{\mathbb{Q}_*})}{n \lambda } + \frac{\lambda}{2}\bar{V}_n^\alpha(\pi_{\mathbb{Q}_*})\,.
\end{align*}
This yields that with probability at least $1-\delta$ the following inequalities hold simultaneously for $\hat{\mathbb{Q}}_n$ and $\mathbb{Q}_*$
\begin{align*}
    &R(\pi_{\hat{\mathbb{Q}}_n}) \leq \hat{R}_n^\alpha(\pi_{\mathbb{Q}_*})  +  \sqrt{ \frac{{\textsc{kl}}_{1}(\pi_{\mathbb{Q}_*})}{2n} } + B_n^\alpha(\pi_{\mathbb{Q}_*})  +
\frac{{\textsc{kl}}_{2}(\pi_{\mathbb{Q}_*})}{n \lambda } + \frac{\lambda}{2}\bar{V}_n^\alpha(\pi_{\mathbb{Q}_*})\,,\\
& \hat{R}_n^\alpha(\pi_{\mathbb{Q}_*}) \leq R(\pi_{\mathbb{Q}_*}) +  \sqrt{ \frac{{\textsc{kl}}_{1}(\pi_{\mathbb{Q}_*})}{2n} } + B_n^\alpha(\pi_{\mathbb{Q}_*})  +
\frac{{\textsc{kl}}_{2}(\pi_{\mathbb{Q}_*})}{n \lambda } + \frac{\lambda}{2}\bar{V}_n^\alpha(\pi_{\mathbb{Q}_*})\,.
\end{align*}
Computing the sum of these two inequalities concludes the proof.
\end{proof}

\begin{corollary}[Special case of \cref{prop:samples_oracle}] Let $\mathcal{H} = \set{h_{\theta} \, ; \theta \in \real^{dK}}$ of mappings $h_{\theta}(x) = \argmax_{a \in \cA} \phi(x)^\top \theta_a$ for any $x \in \cX$. Let $n \ge 1$, $\delta \in [0, 1]$, $\alpha \in [0, 1]$, and let $\mathbb{P} = \cN(\mu_0, I_{dK})$ be a fixed prior on $\mathcal{H}$, then with probability at least $1-\delta$ over draws $\cD_n \sim \mu_{\pi_0}^n$, we have that
\begin{align*}
    R( \pi_{\hat{\mathbb{Q}}_n}) \leq R(\pi_{\mathbb{Q}_*})  +  \frac{ \sqrt{\norm{\mu_* - \mu_0}^2 + 2\log\frac{4 \sqrt{n}}{\delta}} }{\sqrt{n}}  + 2(1 - K^{\alpha-1})  +
\frac{ \norm{\mu_* - \mu_0}^2 + 2\log\frac{4}{\delta}}{\sqrt{n}} + \frac{K^{2\alpha-1} + K^{2\alpha}}{\sqrt{n}}\,.
\end{align*}
where $\pi_{\hat{\mathbb{Q}}_n}\,$ is the learned policy with $\, \hat{\mathbb{Q}}_n =  \argmin_{\mathbb{Q} = \cN(\mu, I_{dK}) } \hat{R}_n^\alpha(\pi_{\mathbb{Q}}) + \sqrt{ \frac{{\textsc{kl}}_{1}(\pi_{\mathbb{Q}})}{2n} } + B_n^\alpha(\pi_{\mathbb{Q}})  +
\frac{{\textsc{kl}}_{2}(\pi_{\mathbb{Q}})}{n \lambda } + \frac{\lambda}{2}\bar{V}_n^\alpha(\pi_{\mathbb{Q}})\,,$ $\mathbb{Q}_* =  \argmin_{\mathbb{Q} = \cN(\mu, I_{dK})} R(\pi_{\mathbb{Q}})$.
\end{corollary}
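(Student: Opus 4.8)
The plan is to instantiate the oracle inequality of \cref{prop:samples_oracle} in the Gaussian setting and then evaluate each of its three problem-dependent quantities — the KL divergence, the conditional bias $B_n^\alpha$, and the second-moment proxy $\bar{V}_n^\alpha$ — in closed form, before fixing $\lambda$. First I would observe that the oracle inequality of \cref{prop:samples_oracle} continues to hold when the minimization defining $\hat{\mathbb{Q}}_n$ is restricted to the Gaussian family $\set{\cN(\mu, I_{dK}) : \mu \in \real^{dK}}$: its proof only uses that the two-sided bound of \cref{thm:main_result} holds simultaneously at the two posteriors $\hat{\mathbb{Q}}_n$ and $\mathbb{Q}_*$, together with the fact that $\hat{\mathbb{Q}}_n$ is the empirical minimizer over a class containing $\mathbb{Q}_*$. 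Since $\mathbb{Q}_* = \argmin_{\mathbb{Q} = \cN(\mu, I_{dK})} R(\pi_{\mathbb{Q}}) = \cN(\mu_*, I_{dK})$ lies in this class, the same argument applies verbatim, and I would take $\lambda = 1/\sqrt{n}$ from the outset, since this is the value that produces a $\lambda$-free right-hand side.

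It then remains to evaluate the three quantities at $\mathbb{Q}_* = \cN(\mu_*, I_{dK})$ under the stated assumptions (prior $\mathbb{P} = \cN(\mu_0, I_{dK})$ and, as in \cref{subsec:interpretation}, uniform logging $\pi_0 \equiv 1/K$). The KL term is immediate from the closed form for the relative entropy of two isotropic Gaussians with common covariance: $D_{\mathrm{KL}}(\mathbb{Q}_* \| \mathbb{P}) = \tfrac12 \norm{\mu_* - \mu_0}^2$, so that ${\textsc{kl}}_1(\pi_{\mathbb{Q}_*}) = \tfrac12\norm{\mu_* - \mu_0}^2 + \log\tfrac{4\sqrt{n}}{\delta}$ and ${\textsc{kl}}_2(\pi_{\mathbb{Q}_*}) = \tfrac12\norm{\mu_* - \mu_0}^2 + \log\tfrac{4}{\delta}$. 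For the bias, substituting $\pi_0 \equiv 1/K$ gives $\pi_0^{1-\alpha}(a | x_i) = K^{\alpha-1}$ for every $a$ and $i$, whence $\E{a \sim \pi_{\mathbb{Q}_*}(\cdot | x_i)}{\pi_0^{1-\alpha}(a | x_i)} = K^{\alpha-1}$ and $B_n^\alpha(\pi_{\mathbb{Q}_*}) = 1 - K^{\alpha-1}$, a constant independent of the sample.

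The only genuinely computational step is the second-moment proxy. With $\pi_0 \equiv 1/K$ the first summand of $\bar{V}_n^\alpha$ becomes $\E{a \sim \pi_0(\cdot | x_i)}{\pi_{\mathbb{Q}_*}(a | x_i) K^{2\alpha}} = K^{2\alpha-1} \sum_{a \in \cA} \pi_{\mathbb{Q}_*}(a | x_i) = K^{2\alpha-1}$, using $\sum_{a} \pi_{\mathbb{Q}_*}(a | x_i) = 1$; and the second summand $\pi_{\mathbb{Q}_*}(a_i | x_i) c_i^2 K^{2\alpha}$ is bounded by $K^{2\alpha}$ since $\pi_{\mathbb{Q}_*} \le 1$ and $c_i^2 \le 1$. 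Averaging over $i$ yields $\bar{V}_n^\alpha(\pi_{\mathbb{Q}_*}) \le K^{2\alpha-1} + K^{2\alpha}$. Finally I would substitute these three evaluations and $\lambda = 1/\sqrt{n}$ into \cref{prop:samples_oracle}: the term $2\sqrt{{\textsc{kl}}_1/(2n)}$ collapses to $\sqrt{\norm{\mu_* - \mu_0}^2 + 2\log\tfrac{4\sqrt{n}}{\delta}}\,/\sqrt{n}$, the term $2\,{\textsc{kl}}_2/(n\lambda)$ becomes $(\norm{\mu_* - \mu_0}^2 + 2\log\tfrac{4}{\delta})/\sqrt{n}$, the bias doubles to $2(1 - K^{\alpha-1})$, and $\lambda \bar{V}_n^\alpha$ becomes $(K^{2\alpha-1} + K^{2\alpha})/\sqrt{n}$, giving exactly the claimed inequality. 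I do not anticipate a real obstacle: the result is a direct specialization, and the only care needed is bookkeeping — correctly carrying the factor of two through $2\sqrt{{\textsc{kl}}_1/(2n)}$ and confirming that $\lambda = 1/\sqrt{n}$ is precisely the choice that cancels all residual $\lambda$ dependence.
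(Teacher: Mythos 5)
Your proposal is correct and follows essentially the same route as the paper's proof: instantiate \cref{prop:samples_oracle} with $\mathbb{P}=\cN(\mu_0,I_{dK})$, $\mathbb{Q}_*=\cN(\mu_*,I_{dK})$, compute $D_{\mathrm{KL}}(\mathbb{Q}_*\|\mathbb{P})=\norm{\mu_*-\mu_0}^2/2$, use the uniform logging policy to get $B_n^\alpha(\pi_{\mathbb{Q}_*})=1-K^{\alpha-1}$ and $\bar{V}_n^\alpha(\pi_{\mathbb{Q}_*})\leq K^{2\alpha-1}+K^{2\alpha}$, and set $\lambda=1/\sqrt{n}$. If anything, you are more careful than the paper on two points it leaves implicit — that the oracle-inequality argument still applies when the minimization defining $\hat{\mathbb{Q}}_n$ is restricted to the Gaussian family containing $\mathbb{Q}_*$, and that uniform logging is an assumption inherited from \cref{subsec:interpretation} rather than stated in the corollary.
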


\begin{proof}
    This result follows from the general \cref{prop:samples_oracle} by simply setting $\mathbb{P} = \cN(\mu_0, I_{dK})$ and $\mathbb{Q}_* = \cN(\mu_*, I_{dK})$. First, since the covariance matrices of both distributions are $I_{dK}$, their KL divergence is $D_{\mathrm{KL}}(\mathbb{Q} \| \mathbb{P}) = \norm{\mu_* - \mu_0}^2/2$. Moreover, since the logging policy is uniform then $B_n^\alpha(\pi_{\mathbb{Q}}) = (1-K^{\alpha-1})$ and $ \bar{V}_n^\alpha(\pi_{\mathbb{Q}})
 \leq K^{2\alpha-1} + K^{2\alpha}$.  Using these quantities, setting $\lambda = 1/\sqrt{n}$ and applying \cref{prop:samples_oracle} yields that with probability at least $1-\delta$ over draws $\cD_n \sim \mu_{\pi_0}^n$, we have that
\begin{align*}
    R( \pi_{\hat{\mathbb{Q}}_n}) \leq R(\pi_{\mathbb{Q}_*})  +  \frac{ \sqrt{\norm{\mu_* - \mu_0}^2 + 2\log\frac{4 \sqrt{n}}{\delta}} }{\sqrt{n}}+ 2(1 - K^{\alpha-1}) + \frac{ \norm{\mu_* - \mu_0}^2 + 2\log\frac{4}{\delta}}{\sqrt{n}} + \frac{K^{2\alpha-1} + K^{2\alpha}}{\sqrt{n}}\,.
\end{align*}
This concludes the proof.
\end{proof}

The above corollary allows us to give insights into the sample complexity of our procedure. That is, the number of samples needed so that the performance of the learned policy $\pi_{\hat{\mathbb{Q}}_n}$ is close to that of the optimal one. Let $\epsilon> 2(1 - K^{\alpha-1})$ for $\alpha \in [1- \log 2 / \log K, 1]$. This condition on $\alpha$ ensures that $\epsilon \in [0, 1]$ and it is mild as $\alpha$ is often close to 1. Let $\delta$, then the following implication holds
\begin{align}\label{eq:inequality1}
    \epsilon \geq \frac{ \sqrt{\norm{\mu_* - \mu_0}^2 + 2\log\frac{4 \sqrt{n}}{\delta}} }{\sqrt{n}}+ 2(1 - K^{\alpha-1}) + \frac{ \norm{\mu_* - \mu_0}^2 + 2\log\frac{4}{\delta}}{\sqrt{n}} + \frac{K^{2\alpha-1} + K^{2\alpha}}{\sqrt{n}}  \nonumber \\ \implies \mathbb{P}(R(\pi_{\hat{\mathbb{Q}}_n}) \leq R(\pi_{\mathbb{Q}_*}) + \epsilon) \geq 1-\delta\,.
\end{align}
First, we use that $\sqrt{\norm{\mu_* - \mu_0}^2 + 2\log\frac{4 \sqrt{n}}{\delta}} \leq \norm{\mu_* - \mu_0} + \sqrt{2\log\frac{4 \sqrt{n}}{\delta}}$. Moreover we bound $K^{2\alpha-1} + K^{2\alpha} \leq 2K^{2 \alpha}$. Then the implication in \eqref{eq:inequality1} becomes
\begin{align}\label{eq:inequality2}
   \sqrt{n} \geq \frac{\norm{\mu_* - \mu_0} + \norm{\mu_* - \mu_0}^2 + 2\log\frac{4}{\delta} + \sqrt{2\log\frac{4 \sqrt{n}}{\delta}} + 2K^{2\alpha} }{\epsilon - 2(1 - K^{\alpha-1})}  \implies \mathbb{P}(R(\pi_{\hat{\mathbb{Q}}_n}) \leq R(\pi_{\mathbb{Q}_*}) + \epsilon) \geq 1-\delta\,.
\end{align}
We only provide intuition on the sample complexity and aim at having easy-to-interpret terms. Thus we omit the logarithmic terms in \eqref{eq:inequality2} and assume that $\norm{\mu_* - \mu_0}^2 \geq \norm{\mu_* - \mu_0}$. This leads to the claim made in \cref{subsec:interpretation}. Of course, a more precise sample complexity analysis can be made by studying the function $h(x) = \sqrt{x} -  \sqrt{2\log\frac{4 \sqrt{x}}{\delta}}/(\epsilon - 2(1 - K^{\alpha-1}))$ and finding $x$ such that $f(x) \geq \frac{\norm{\mu_* - \mu_0} + \norm{\mu_* - \mu_0}^2 + 2\log\frac{4}{\delta} + 2K^{2\alpha} }{\epsilon - 2(1 - K^{\alpha-1})}$.
\section{Experiments}\label{app:all_experiments}

\subsection{Setup}\label{app:setup}
We consider the standard supervised-to-bandit conversion \citep{agarwal2014taming}. Precisely, let $\mathcal{S}^{\textsc{tr}}_{n}$ and $\mathcal{S}^{\textsc{ts}}_{n_{\textsc{ts}}}$ be the training and testing set of a classification dataset, respectively. First, we transform the training set $\mathcal{S}^{\textsc{tr}}_{n}$ to a logged bandit data $\cD_n$ as described in \cref{alg:supervised_to_bandit}. The resulting logged data $\cD_n$ is then used to train our policies. After that, the learned policies are tested on $\mathcal{S}^{\textsc{ts}}_{n_{\textsc{ts}}}$ as described in \cref{alg:supervised_to_bandit_test}. We consider that the resulting reward in \cref{alg:supervised_to_bandit_test} is a good proxy for the unknown true reward of the learned policies. This will be our performance metric, the higher the better.

In our experiments, we use the following image classification datasets \texttt{MNIST} \citep{lecun1998gradient}, \texttt{FashionMNIST} \citep{xiao2017fashion},  \texttt{EMNIST} \citep{cohen2017emnist} and \texttt{CIFAR100} \citep{krizhevsky2009learning}. We provide a summary of the statistics of these datasets in \cref{tab:stats}. \cref{alg:supervised_to_bandit} takes as input a logging policy $\pi_0$ which we define as
\begin{align}\label{eq:logging}
    &\pi_0(a | x) = \frac{ \exp(\eta_0 \cdot \phi(x)^\top \mu_{0, a})}{\sum_{a^\prime \in \cA}  \exp(\eta_0 \cdot \phi(x)^\top \mu_{0, a^\prime})}\,, & \forall (x,a) \in \cX \times \cA\,.
\end{align}
Here $\phi(x) \in \real^d$ is the feature transformation function that outputs a $d$-dimensional vector, $\mu_0 = (\mu_{0,a})_{a \in \cA} \in \real^{dK}$ are learnable parameters and $\eta_0$ is an inverse-temperature parameter for the \texttt{softmax} in \eqref{eq:logging}. We explain next how these quantities are derived in detail.

\textbf{The feature transformation function $\phi(x) \in \real^d$:} for all the datasets, except \texttt{CIFAR100}, the feature transformation function $\phi(\cdot)$ is defined as $\phi(x) = \frac{x}{\norm{x}}$ for any $x \in \cX$. That is, we simply normalize the features $x \in \cX$ by their $L_2$ norm $\norm{x}$. In contrast, \texttt{CIFAR100} is a more challenging problem. Thus we use transfer learning to extract features $\phi(x)$ expressive enough so that a linear \texttt{softmax} model would enjoy a reasonable performance. Precisely, we retrieve the last hidden layer of a \texttt{ResNet-50} network, pre-trained on the ImageNet dataset, to output 2048-dimensional features. Finally, the obtained features are normalized as $\frac{x}{\norm{x}}$ and this whole process (\texttt{ResNet-50}  + normalization) corresponds to $\phi(\cdot)$ for \texttt{CIFAR100}.

\textbf{The parameters $\mu_0 = (\mu_{0,a})_{a \in \cA} \in \real^{dK}$:} we learn the parameters $\mu_0$ using 5\% of the training set $\mathcal{S}^{\textsc{tr}}_{n}$. Precisely, we use the cross-entropy loss with an $L_2$ regularization of $10^{-6}$ to prevent the logging policy $\pi_0$ from being degenerate.  This ensures that the learning policies are absolutely continuous with respect to the logging policy $\pi_0$, a condition under which standard IPS is unbiased. In optimization, we use Adam \citep{kingma2014adam} with a learning rate of $0.1$ for $10$ epochs. In all the experiments, we set the prior $\mathbb{P} = \cN(\eta_0 \mu_0, I_{dK})$ for the Gaussian policies in \eqref{eq:gaussian_pac_bayes} and we set it as $\mathbb{P} = \cN(\eta_0 \mu_0, I_{dK}) \times {\rm G}(0, 1)^K$ for the mixed-logit policies in \eqref{eq:logit_pac_bayes}. Our theory requires that the prior does not depend on data. Given that $\mu_0$ is learned on the $5\%$ portion of data, we only train our learning policies on the remaining  $95\%$ portion of the data to match our theoretical requirements.

\textbf{The inverse-temperature parameter $\eta_0 \in \real$:} this controls the performance of the logging policy. A high positive value of $\eta_0$ leads to a well-performing logging policy, while a negative one leads to a low-performing logging policy. When $\eta_0=0$, $\pi_0$ is identical to the uniform policy. In our experiments $\eta_0$ varies between $0$ and $1$.

\begin{algorithm}
\caption{Supervised-to-bandit: creating logged data}
\label{alg:supervised_to_bandit}
\textbf{Input:} training classification set $\mathcal{S}^{\textsc{tr}}_{n}=\{(x_i, y_i)\}_{i=1}^n$, logging policy $\pi_0$.\\
\textbf{Output:} logged bandit data $\mathcal{D}_n=(x_i, a_i, c_i)_{i \in [n]}$.\\
Initialize $\mathcal{D}_n =\{\}$ \\
\For{$i=1, \dots, n$}
{$a_i \sim \pi_0(\cdot | x_i)$\\
$c_i = - \mathbb{I}_{\{a_i = y_i\}}$\\
$\mathcal{D}_n \gets \mathcal{D}_n \cup \{(x_i, a_i, c_i)\}\,.$ 
}
\end{algorithm}

\begin{algorithm}
\caption{Supervised-to-bandit: testing policies}
\label{alg:supervised_to_bandit_test}
\textbf{Input:} image classification dataset $\mathcal{S}^{\textsc{ts}}_{n_{\textsc{ts}}}=\{(x_i, y_i)\}_{i=1}^{n_{\textsc{ts}}}$, learned policy $ \hat{\pi}_n$.\\
\textbf{Output:} reward $r$.\\
\For{$i=1, \dots, n_{\textsc{ts}}$}
{$a_i \sim \hat{\pi}_n(\cdot | x_i)$\\
$r_i = \mathbb{I}_{\{a_i = y_i\}}$}
$r = \frac{1}{n_{\textsc{ts}}} \sum_{i=1}^{n_{\textsc{ts}}} r_i\,.$
\end{algorithm}

\begin{table}[t]
\caption{Statistics of the datasets used in our experiments.}
\label{tab:stats}
\vskip 0.15in
\begin{center}
\begin{small}
\begin{sc}
\begin{tabular}{lcccr}
\toprule
Data set & Nbr. train samples $n$ & Nbr. test samples $n_{\textsc{ts}}$ & Nbr. actions $K$ & Dimension $d$ \\
\bottomrule
\texttt{MNIST}    & 60000 & 10000 & 10& 784 \\
\texttt{FashionMNIST} & 60000 & 10000 & 10& 784\\
\texttt{EMNIST}    & 112800 &18800 & 47& 784 \\
\texttt{CIFAR100}    & 50000 & 10000 & 100&   2048\\
\bottomrule
\end{tabular}
\end{sc}
\end{small}
\end{center}
\vskip -0.1in
\end{table}

Now it remains to explain the learning policies $\pi_{\mathbb{Q}}$ and the corresponding closed-form bounds using either our results or those in existing works \citep{london2019bayesian,sakhi2022pac}. 

\subsection{Policies}\label{app:policies}
Here we present the two families of policies that we use in our experiments, Gaussian and mixed-logit policies.
\subsubsection{Mixed-Logit}
Let $\mathcal{H} = \set{h_{\theta, \gamma} \, ; \theta \in \real^{dK}, \gamma \in \real^K}$ be a hypothesis space of mappings $h_{\theta, \gamma}(x) = \argmax_{a \in \cA} \phi(x)^\top \theta_a + \gamma_a$ for any $x \in \cX$. Here $\phi(x)$ outputs a $d$-dimensional representation of context $x \in \cX$. Now assume that for any $a \in \cA$, $\gamma_a$ is a standard Gumbel perturbation, $\gamma_a \sim {\rm G}(0, 1)$, then we have that 
\begin{align}\label{eq:app_softmax_pac_bayes}
    \pi^{\textsc{sof}}_{\theta}(a | x) &= \frac{\exp(\phi(x)^\top \theta_a)}{\sum_{a^\prime \in \cA}\exp(\phi(x)^\top  \theta_{a^\prime})}\,,\nonumber\\
    &= \E{\gamma \sim {\rm G}(0, 1)^K}{\mathbb{I}_{\{ h_{\theta, \gamma}(x) = a \}}}\,.
\end{align}
In addition, we randomize $\theta$ such as $\theta \sim \cN(\mu, \sigma^2 I_{dK})$ where $\mu \in \real^{dK}$ and $\sigma>0$. It follows that the posterior $\mathbb{Q}$ is a multivariate Gaussian $\cN(\mu, \sigma^2 I_{dK})$ over the parameters $\theta$ with standard Gumbel perturbations $\gamma \sim {\rm G}(0, 1)^K$. We denote such policies by $\pi^{\textsc{mixL}}_{\mu, \sigma}$ and they are defined as
\begin{align}\label{eq:app_logit_pac_bayes}
    \pi^{\textsc{mixL}}_{\mu, \sigma}(a | x) 
    &=  \E{\theta \sim \cN(\mu, \sigma^2 I_{dK})}{\frac{\exp(\phi(x)^\top \theta_a)}{\sum_{a^\prime \in \cA}\exp(\phi(x)^\top  \theta_{a^\prime})}}\,,\nonumber\\
    &=  \E{\theta \sim \cN(\mu, \sigma^2 I_{dK})}{\pi^{\textsc{sof}}_{\theta}(a | x)}\,,\nonumber\\
    &= \E{\theta \sim \cN(\mu, \sigma^2 I_{dK})\,, \gamma \sim {\rm G}(0, 1)^K}{\mathbb{I}_{\{ h_{\theta, \gamma}(x) = a \}}}\,.
\end{align}
To sample from the mixed-logit policies $\pi^{\textsc{mixL}}_{\mu, \sigma}$, we first sample $\theta \sim \cN(\mu, \sigma^2 I_{dK})$ and $\gamma \sim {\rm G}(0, 1)^K$ and then set the sampled action as $a \gets h_{\theta, \gamma}(x)$. Now we also need to compute the gradient of the expectation in \eqref{eq:app_logit_pac_bayes}. This needs additional care since the distribution under which we take the expectation depends on the parameters $\mu, \sigma$. Fortunately, the reparameterization trick can be used in this case. Roughly speaking, it allows us to express a gradient of the expectation in \eqref{eq:app_logit_pac_bayes} as an expectation of a gradient. In our case, we use the \emph{local} reparameterizaton trick \citep{kingma2015variational} which is known for reducing the variance of stochastic gradients. Precisely, we rewrite \eqref{eq:app_logit_pac_bayes} as
\begin{align}
      \pi^{\textsc{mixL}}_{\mu, \sigma}(a | x)  &=  \E{\epsilon \sim \cN(0, \norm{\phi(x)}^2 I_{K})}{\frac{\exp(\phi(x)^\top \mu_a + \sigma \epsilon_a)}{\sum_{a^\prime \in \cA}\exp(\phi(x)^\top  \mu_{a^\prime} + \sigma \epsilon_{a^\prime})}}\,.\nonumber\\
      &=  \E{\epsilon \sim \cN(0, I_{K})}{\frac{\exp(\phi(x)^\top \mu_a + \sigma \epsilon_a)}{\sum_{a^\prime \in \cA}\exp(\phi(x)^\top  \mu_{a^\prime} + \sigma \epsilon_{a^\prime})}}\,,\nonumber
\end{align}
where we used that $\norm{\phi(x)}^2=1$ since features are normalized. It follows that gradients read 
$$\nabla_{\mu, \sigma} \pi^{\textsc{mixL}}_{\mu, \sigma}(a | x) =  \E{\epsilon \sim \cN(0, I_{K})}{\nabla_{\mu, \sigma}\frac{\exp(\phi(x)^\top \mu_a + \sigma \epsilon_a)}{\sum_{a^\prime \in \cA}\exp(\phi(x)^\top  \mu_{a^\prime} + \sigma \epsilon_{a^\prime})}}\,.$$
Moreover, the propensities are approximated as 
\begin{align}
      &\pi^{\textsc{mixL}}_{\mu, \sigma}(a | x)  \approx \frac{1}{S} \sum_{i \in [S]}{\frac{\exp(\phi(x)^\top \mu_a + \sigma \epsilon_{i, a})}{\sum_{a^\prime \in \cA}\exp(\phi(x)^\top  \mu_{a^\prime} + \sigma \epsilon_{i, a^\prime})}}\,, & 
\epsilon_i  \sim \cN(0, I_{K})\,, \forall i \in [S]\,.
\end{align}
In all our experiments, we set $S=32$.

\subsubsection{Gaussian}
We define the hypothesis space $\mathcal{H} = \set{h_{\theta} \, ; \theta \in \real^{dK}}$ of mappings $h_{\theta}(x) = \argmax_{a \in \cA} \phi(x)^\top \theta_a$ for any $x \in \cX$. It follows that the learning policies $ \pi_{\mathbb{Q}} = \pi^{\textsc{gaus}}_{\mu, \sigma}$ read 
\begin{align}\label{eq:app_gaussian_pac_bayes}
   \pi^{\textsc{gaus}}_{\mu, \sigma}(a | x) = \E{\theta \sim \cN(\mu, \sigma^2 I_{dK})}{\mathbb{I}_{\{ h_{\theta}(x) = a \}}}\,.
\end{align}
To see why this can be beneficial \citep{sakhi2022pac}, let $\pi_*$ be the optimal policy. Given $x \in \cX$, $\pi_*(\cdot | x)$ should be deterministic; it chooses the best action for context $x$ with probability 1. That is, there exists $\mu_* \in \real^{dK}$ such that $\pi_* = \mathbb{I}_{\{ h_{\mu_*}(x) = a \}}$. When $\mu \rightarrow \mu_*$ and $\sigma \rightarrow 0$, the Gaussian policy in \eqref{eq:app_gaussian_pac_bayes} approaches $\pi_*$. In contrast, the mixed-logit policy in \eqref{eq:app_logit_pac_bayes} approaches $\pi^{\textsc{sof}}_{\mu_*}$. However, $\pi^{\textsc{sof}}_{\mu_*}$ is not deterministic due to the additional randomness in $\gamma$ and is equal to $\pi_*$ only if $\phi(x)^\top \mu_{*,a_*(x)} \rightarrow \infty$. This explains the choice of removing the Gumbel noise. First, \citet{sakhi2022pac} showed that \eqref{eq:app_gaussian_pac_bayes} can be written as 
\begin{align*}
 \pi^{\textsc{gaus}}_{\mu, \sigma}(a | x)  & =\mathbb{E}_{\epsilon \sim \cN(0, 1)}\Big[\prod_{a^{\prime} \neq a} \Phi\big(\epsilon+\frac{\phi(x)^\top\left(\mu_a-\mu_{a^{\prime}}\right)}{\sigma\|\phi(x)\|}\big)\Big]\,,
\end{align*}
where $\Phi$ is the cumulative distribution function of a standard normal variable.  But $\|\phi(x)\|=1$ in all our experiments. Thus
\begin{align*}
 \pi^{\textsc{gaus}}_{\mu, \sigma}(a | x)  & =\mathbb{E}_{\epsilon \sim \cN(0, 1)}\Big[\prod_{a^{\prime} \neq a} \Phi\big(\epsilon+\frac{\phi(x)^\top\left(\mu_a-\mu_{a^{\prime}}\right)}{\sigma}\big)\Big]\,.
\end{align*}
Then similarly to mixed-logit policies, the gradient reads 
\begin{align*}
    \nabla_{\mu, \sigma} \pi^{\textsc{gaus}}_{\mu, \sigma}(a | x) =  \mathbb{E}_{\epsilon \sim \cN(0, 1)}\Big[\nabla_{\mu, \sigma}\prod_{a^{\prime} \neq a} \Phi\big(\epsilon+\frac{\phi(x)^\top\left(\mu_a-\mu_{a^{\prime}}\right)}{\sigma}\big)\Big]\,.
\end{align*}
Moreover, the propensities are approximated as 
\begin{align}
      &\pi^{\textsc{gaus}}_{\mu, \sigma}(a | x)  \approx \frac{1}{S} \sum_{i \in [S]}{\prod_{a^{\prime} \neq a} \Phi\big(\epsilon_i+\frac{\phi(x)^\top\left(\mu_a-\mu_{a^{\prime}}\right)}{\sigma}\big)}\,, & 
\epsilon_i  \sim \cN(0, 1)\,, \forall i \in [S]\,.
\end{align}
In all our experiments, we set $S=32$.

\subsection{Baselines}\label{app:baselines}
Here we present all the methods that we use in our experiments. For each method, we state the result that holds for any learning policy $\pi$. After that, we derive the corresponding closed-form bounds for Gaussian and mixed-logit policies that we presented previously. All the baselines require computing the KL divergence between the prior $\mathbb{P}$ and the posterior $\mathbb{Q}$. Thus before presenting them, we state the following lemma that allows bounding the KL divergence between the prior $\mathbb{P}$ and the posterior $\mathbb{Q}$ in the cases of mixed-logit or Gaussian policies.

\begin{lemma}[KL divergence for Gaussian distributions with Gumbel noise]\label{lemma:kl_gaussian}
For distributions $\mathbb{P} = \mathcal{N}\left(\mu_0, \sigma_0^2 I_{dK}\right) \times \operatorname{G}(0,1)^K$ and $\mathbb{Q} = \mathcal{N}\left(\mu, \sigma^2 I_{dK}\right) \times \operatorname{G}(0,1)^K$, with $\mu_0, \mu \in \mathbb{R}^{dK}$ and $0<\sigma^2 \leq \sigma_0^2<\infty$,
$$
D_{\mathrm{KL}}(\mathbb{Q} \| \mathbb{P}) \leq \frac{\left\|\mu-\mu_0\right\|^2}{2 \sigma_0^2}+\frac{dK}{2} \log \frac{\sigma_0^2}{\sigma^2}\,.
$$
Moreover, this result holds when the Gumbel noise is removed. That is when $\mathbb{P} = \mathcal{N}\left(\mu_0, \sigma_0^2 I_{dK}\right)$ and $\mathbb{Q} = \mathcal{N}\left(\mu, \sigma^2 I_{dK}\right)$.
\end{lemma}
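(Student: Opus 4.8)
The plan is to reduce the claim to the classical closed-form expression for the KL divergence between two isotropic Gaussians, after peeling off the shared Gumbel factor. First I would observe that both $\mathbb{P}$ and $\mathbb{Q}$ are product measures, $\mathbb{Q} = \mathcal{N}(\mu, \sigma^2 I_{dK}) \otimes \operatorname{G}(0,1)^K$ and $\mathbb{P} = \mathcal{N}(\mu_0, \sigma_0^2 I_{dK}) \otimes \operatorname{G}(0,1)^K$, where the second factor is \emph{identical} in both. Because the KL divergence between product measures of independent components equals the sum of the componentwise KL divergences, and the Gumbel factors coincide (so their KL contribution vanishes), this gives the exact identity
\[
  D_{\mathrm{KL}}(\mathbb{Q} \| \mathbb{P}) = D_{\mathrm{KL}}\big(\mathcal{N}(\mu, \sigma^2 I_{dK}) \,\big\|\, \mathcal{N}(\mu_0, \sigma_0^2 I_{dK})\big).
\]
The remaining work is therefore purely Gaussian.

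Next I would invoke the standard formula for the KL divergence between $\mathcal{N}(\mu, \Sigma_1)$ and $\mathcal{N}(\mu_0, \Sigma_0)$ in dimension $k = dK$ and specialize it to the isotropic covariances $\Sigma_1 = \sigma^2 I_{dK}$ and $\Sigma_0 = \sigma_0^2 I_{dK}$. Using $\operatorname{tr}(\Sigma_0^{-1}\Sigma_1) = dK\,\sigma^2/\sigma_0^2$, $\log(\det\Sigma_0/\det\Sigma_1) = dK\log(\sigma_0^2/\sigma^2)$, and $(\mu-\mu_0)^\top \Sigma_0^{-1}(\mu-\mu_0) = \|\mu-\mu_0\|^2/\sigma_0^2$, the formula collapses to
\[
  D_{\mathrm{KL}}(\mathbb{Q} \| \mathbb{P}) = \frac{\|\mu-\mu_0\|^2}{2\sigma_0^2} + \frac{dK}{2}\log\frac{\sigma_0^2}{\sigma^2} + \frac{dK}{2}\Big(\frac{\sigma^2}{\sigma_0^2} - 1\Big).
\]
This is an exact equality, not yet the claimed inequality.

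Finally, I would use the hypothesis $0 < \sigma^2 \le \sigma_0^2$, which forces $\sigma^2/\sigma_0^2 - 1 \le 0$, so the last (dimension-dependent) term is non-positive and may be dropped to obtain the stated upper bound. The ``Gumbel removed'' variant is immediate: with $\mathbb{P} = \mathcal{N}(\mu_0, \sigma_0^2 I_{dK})$ and $\mathbb{Q} = \mathcal{N}(\mu, \sigma^2 I_{dK})$ the reduction in the first paragraph is vacuous and the same Gaussian computation applies verbatim. The argument is essentially a routine calculation; the only point requiring care — and the sole place where the assumption $\sigma^2 \le \sigma_0^2$ is actually used — is this last step, where we trade the exact identity for the cleaner inequality by discarding the term $\tfrac{dK}{2}(\sigma^2/\sigma_0^2 - 1)$. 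Without that assumption this term could be positive and the bound would fail, so I would flag its use explicitly.
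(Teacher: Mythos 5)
Your proof is correct. Note that the paper itself does not prove this lemma at all: it states it and simply borrows it from \citet{london2019bayesian}, so there is no in-paper argument to compare against. Your route --- splitting the product measures so the shared Gumbel factor contributes zero KL, applying the closed-form KL between isotropic Gaussians to get the exact identity
$D_{\mathrm{KL}}(\mathbb{Q}\|\mathbb{P}) = \frac{\|\mu-\mu_0\|^2}{2\sigma_0^2} + \frac{dK}{2}\log\frac{\sigma_0^2}{\sigma^2} + \frac{dK}{2}\bigl(\frac{\sigma^2}{\sigma_0^2}-1\bigr)$,
and then discarding the final non-positive term --- is exactly the standard computation underlying the cited result, and your remark that $\sigma^2 \le \sigma_0^2$ is the sole and indispensable ingredient for the last step (the exact identity shows the stated bound genuinely fails otherwise) is a worthwhile precision that the paper's bare citation leaves implicit.
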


We borrow this lemma from \citet{london2019bayesian}. In particular, \cref{lemma:kl_gaussian} shows that the KL terms for both policies can be bounded by the same quantity. As a result, the corresponding bounds will be the same; the only difference is the space of learning policies on which we optimize. For completeness, however, we write these bounds for both types of policies although they are similar. Since existing approaches are not named, we name them as \textbf{(Author, Policy)} where \textbf{Author $\in$ \{Ours, London et al., Sakhi et al. 1, Sakhi et al. 2\} } and \textbf{Policy $\in$ \{Gaussian, Mixed-Logit\} }. Here \textbf{Ours}, \textbf{London et al.}, \textbf{Sakhi et al. 1} and\textbf{ Sakhi et al. 2} correspond to \cref{thm:main_result},  \citet[Theorem 1]{london2019bayesian}, \citet[Proposition 1]{sakhi2022pac}, \citet[Proposition 3]{sakhi2022pac}, respectively. For example, \citet[Theorem 1]{london2019bayesian} leads to two baselines \textbf{(London et al., Gaussian)} and \textbf{(London et al., Mixed-Logit)}. In all our experiments, the learning policies are trained using Adam \citep{kingma2014adam} with a learning rate of $0.1$ for $20$ epochs.

\subsubsection{Ours, \cref{thm:main_result}}

\textbf{(Ours, Gaussian)} Here we use the Gaussian policies in \eqref{eq:app_gaussian_pac_bayes}. Thus we only replace the term, $D_{\mathrm{KL}}(\mathbb{Q} \| \mathbb{P})$, with its closed-form bound in \cref{lemma:kl_gaussian}. This leads to the following objective. 
\begin{align*}
  \min_{\mu \in \real^{dK}, \sigma >0} \Big( \hat{R}_n^\alpha\left(\pi^{\textsc{gaus}}_{\mu, \sigma}\right) + \sqrt{ \frac{\frac{\left\|\mu-\mu_0\right\|^2}{2} - \frac{dK}{2} \log \sigma^2+\log \frac{4\sqrt{n}}{\delta}}{2n} } + B_n^\alpha(\pi^{\textsc{gaus}}_{\mu, \sigma})  +
\frac{\frac{\left\|\mu-\mu_0\right\|^2}{2} - \frac{dK}{2} \log \sigma^2+\log \frac{4}{\delta}}{n \lambda } \\ + \frac{\lambda}{2}\bar{V}_n^\alpha(\pi^{\textsc{gaus}}_{\mu, \sigma})\Big)\,, 
\end{align*}
where we used that $\sigma_0=1$ since our prior is $\mathbb{P}=\cN(\eta_0 \mu_0, I_{dK})$ for Gaussian policies. Moreover, we set $\lambda=\sqrt{2\frac{\frac{\left\|\mu-\mu_0\right\|^2}{2} - \frac{dK}{2} \log \sigma^2+\log \frac{4}{\delta}}{n \bar{V}_n^\alpha(\pi^{\textsc{gaus}}_{\mu, \sigma}) }}$.

\textbf{(Ours, Mixed-Logit)} Here we use the mixed-logit policies in \eqref{eq:app_logit_pac_bayes}. Thus we only replace the terms, $D_{\mathrm{KL}}(\mathbb{Q} \| \mathbb{P})$, with their closed-form bound in \cref{lemma:kl_gaussian}. This leads to the following objective. 
\begin{align*}
  \min_{\mu \in \real^{dK}, \sigma >0} \Big( \hat{R}_n^\alpha\left(\pi^{\textsc{mixL}}_{\mu, \sigma}\right) + \sqrt{ \frac{\frac{\left\|\mu-\mu_0\right\|^2}{2} - \frac{dK}{2} \log \sigma^2+\log \frac{4\sqrt{n}}{\delta}}{2n} } + B_n^\alpha(\pi^{\textsc{mixL}}_{\mu, \sigma})  +
\frac{\frac{\left\|\mu-\mu_0\right\|^2}{2} - \frac{dK}{2} \log \sigma^2+\log \frac{4}{\delta}}{n \lambda } \\
+ \frac{\lambda}{2}\bar{V}_n^\alpha(\pi^{\textsc{mixL}}_{\mu, \sigma})\Big)\,, 
\end{align*}
where we used that $\sigma_0=1$ since our prior is $\mathbb{P} = \cN(\eta_0 \mu_0, I_{dK}) \times {\rm G}(0, 1)^K$ for mixed-logit policies. Moreover, we set $\lambda=\sqrt{2\frac{\frac{\left\|\mu-\mu_0\right\|^2}{2} - \frac{dK}{2} \log \sigma^2+\log \frac{4}{\delta}}{n \bar{V}_n^\alpha(\pi^{\textsc{mixL}}_{\mu, \sigma}) }}$.

\subsubsection{\citet[Theorem 1]{london2019bayesian}}

\begin{proposition}\label{prop:london}
Let $\tau \in (0, 1)$, $n \geq 1$, $\delta \in(0,1)$ and let $\mathbb{P}$ be a fixed prior on $\mathcal{H}$, then with probability at least $1-\delta$ over draws $\cD_n \sim \mu_{\pi_0}^n$, the following holds simultaneously for all posteriors, $\mathbb{Q}$, on $\mathcal{H}$ that
\begin{align}
R\left(\pi_{\mathbb{Q}}\right) & \leq \hat{R}_n^\tau\left(\pi_{\mathbb{Q}}\right) +\sqrt{\frac{2\left(\hat{R}_n^\tau\left(\pi_{\mathbb{Q}}\right)+\frac{1}{\tau}\right)\left(D_{\mathrm{KL}}(\mathbb{Q} \| \mathbb{P})+\log \frac{n}{\delta}\right)}{\tau(n-1)}} +\frac{2\left(D_{\mathrm{KL}}(\mathbb{Q} \| \mathbb{P})+\log \frac{n}{\delta}\right)}{\tau(n-1)}\,.
\end{align}

\end{proposition}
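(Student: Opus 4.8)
The plan is to reconstruct London et al.'s argument by reducing the claim to a standard PAC-Bayes-$\mathrm{kl}$ inequality for a bounded loss, exploiting two features of the clipped estimator $\hat{R}_n^\tau$ from \eqref{eq:clip_ips_policy_value}: the clipping bias points in the favorable direction, and the clipped weights are uniformly bounded by $1/\tau$. First I would record the bias observation. Writing $w_\pi^\tau(a|x) = \pi(a|x)/\max(\pi_0(a|x),\tau)$ for the clipped weight and $R^\tau(\pi) = \E{(x,a,c)\sim\mu_{\pi_0}}{c\, w_\pi^\tau(a|x)}$ for the clipped risk, I note that $w_\pi^\tau \le w_\pi = \pi/\pi_0$ pointwise, so because $c \le 0$ we have $c\, w_\pi^\tau \ge c\, w_\pi$ and hence $R(\pi_{\mathbb{Q}}) \le R^\tau(\pi_{\mathbb{Q}})$. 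This is precisely the estimate that the one-sided shape of the bound requires; it plays the role that the closed-form bias control plays for \texttt{IPS-}$\alpha$, and it explains why no explicit bias term survives on the right-hand side.

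Next I would turn the clipped loss into a $[0,1]$-valued loss so that a textbook PAC-Bayes-$\mathrm{kl}$ bound applies. Since $c \in [-1,0]$, $\pi \le 1$ and $\max(\pi_0,\tau) \ge \tau$, the per-sample loss $\ell(h,x,a,c) = c\, \mathbb{I}_{\{h(x)=a\}}/\max(\pi_0(a|x),\tau)$ lies in $[-1/\tau,0]$, so the affine transform $\phi = 1 + \tau \ell$ takes values in $[0,1]$. Under the aggregation $\pi_{\mathbb{Q}}(a|x) = \E{h\sim\mathbb{Q}}{\mathbb{I}_{\{h(x)=a\}}}$ from \eqref{eq:pac_policies}, the $\mathbb{Q}$-averaged empirical mean of $\phi$ equals $1 + \tau \hat{R}_n^\tau(\pi_{\mathbb{Q}})$ and its population mean equals $1 + \tau R^\tau(\pi_{\mathbb{Q}})$, both in $[0,1]$. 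I would then apply the classical PAC-Bayes-$\mathrm{kl}$ inequality for bounded losses to obtain, with probability at least $1-\delta$ and simultaneously over all $\mathbb{Q}$,
\[
\mathrm{kl}\!\left(1+\tau\hat{R}_n^\tau(\pi_{\mathbb{Q}}) \,\big\|\, 1+\tau R^\tau(\pi_{\mathbb{Q}})\right) \le \frac{D_{\mathrm{KL}}(\mathbb{Q}\|\mathbb{P}) + \log\frac{n}{\delta}}{n-1}.
\]

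Finally I would relax the binary relative entropy by the standard one-sided inequality $\mathrm{kl}(q\|p)\le\varepsilon \Rightarrow p \le q + \sqrt{2q\varepsilon} + 2\varepsilon$, applied with $q = 1+\tau\hat{R}_n^\tau(\pi_{\mathbb{Q}})$, $p = 1+\tau R^\tau(\pi_{\mathbb{Q}})$ and $\varepsilon = \kappa/(n-1)$ where $\kappa = D_{\mathrm{KL}}(\mathbb{Q}\|\mathbb{P}) + \log(n/\delta)$. Subtracting $1$, dividing by $\tau$, and using the identity $\tfrac1\tau\sqrt{2q\varepsilon} = \sqrt{2(\hat{R}_n^\tau(\pi_{\mathbb{Q}})+1/\tau)\varepsilon/\tau}$ reproduce the two stated terms exactly, and chaining with $R(\pi_{\mathbb{Q}}) \le R^\tau(\pi_{\mathbb{Q}})$ closes the argument. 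The main obstacle is matching the precise constants: the generic $\mathrm{kl}$ bound yields denominator $n$ and slack $\log(2\sqrt n/\delta)$, whereas the stated form has $n-1$ and $\log(n/\delta)$, so I would need the particular PAC-Bayes-$\mathrm{kl}$ statement (with its $(n-1)$ normalization) that London et al. invoke rather than the vanilla Maurer bound; a secondary subtlety is verifying that the $\mathrm{kl}$-relaxation applies here without a sign restriction on $q-p$.
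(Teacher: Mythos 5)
The paper never proves this statement: Proposition~\ref{prop:london} is imported verbatim as \citet[Theorem 1]{london2019bayesian} and used only as an experimental baseline, so there is no in-paper proof to compare against. Judged on its own merits, your reconstruction is correct and is essentially the original argument of London et al.: the one-sided bias observation ($w^\tau_\pi \le w_\pi$ and $c\le 0$ give $R(\pi_{\mathbb{Q}}) \le R^\tau(\pi_{\mathbb{Q}})$, which is why the bound is one-sided and carries no bias term), the affine rescaling $\phi = 1+\tau\ell \in [0,1]$ (valid because the clipping acts on $\pi_0$ only, so the loss stays linear in the indicator $\mathbb{I}_{\{h(x)=a\}}$ and aggregation over $h \sim \mathbb{Q}$ commutes with everything), a PAC-Bayes-$\mathrm{kl}$ bound, and the refined-Pinsker relaxation. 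Your two flagged subtleties resolve as you expect: the relaxation $\mathrm{kl}(q\|p)\le\varepsilon \Rightarrow p \le q + \sqrt{2q\varepsilon}+2\varepsilon$ needs no sign restriction, since it is trivial when $p \le q$ and follows from $\mathrm{kl}(q\|p) \ge (p-q)^2/(2p)$ when $p \ge q$; and the constants $(n-1)$ and $\log(n/\delta)$ come from the particular $\mathrm{kl}$-form PAC-Bayes lemma that London et al. invoke (McAllester's version) rather than Maurer's $\log(2\sqrt{n}/\delta)/n$ variant, exactly as you anticipated. It is also worth noting the contrast with the paper's own machinery: this clipped-loss route works precisely because the loss is bounded in $[-1/\tau, 0]$, which is what forces the $1/\tau$ dependence; the paper's \cref{thm:main_result} avoids boundedness (and hence any blow-up as the regularization is removed) by instead controlling second moments via the martingale-based bound of \citet{haddouche2022pac}.
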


\textbf{Baseline 1: (London et al., Gaussian)} Here we use the Gaussian policies in \eqref{eq:app_gaussian_pac_bayes}. Thus we only replace the terms, $D_{\mathrm{KL}}(\mathbb{Q} \| \mathbb{P})$, with their closed-form bound in \cref{lemma:kl_gaussian}. This leads to the following objective. 
\begin{align*}
  \min_{\mu \in \real^{dK}, \sigma >0} \Big( \hat{R}_n^\tau\left(\pi^{\textsc{gaus}}_{\mu, \sigma}\right) +\sqrt{\frac{2\left(\hat{R}_n^\tau\left(\pi^{\textsc{gaus}}_{\mu, \sigma}\right)+\frac{1}{\tau}\right)\left(\frac{\left\|\mu-\mu_0\right\|^2}{2} - \frac{dK}{2} \log \sigma^2+\log \frac{n}{\delta}\right)}{\tau(n-1)}}\\ +\frac{2\left(\frac{\left\|\mu-\mu_0\right\|^2}{2} - \frac{dK}{2} \log \sigma^2+\log \frac{n}{\delta}\right)}{\tau(n-1)}\Big)\,, 
\end{align*}
where we used that $\sigma_0=1$ since our prior is $\mathbb{P}=\cN(\eta_0 \mu_0, I_{dK})$ for Gaussian policies.

\textbf{Baseline 2: (London et al., Mixed-Logit)} Here we consider the mixed-logit policies in \eqref{eq:app_logit_pac_bayes}. Since the additional Gumbel noise does not affect the KL divergence (\cref{lemma:kl_gaussian}), we have the same objective as in the Gaussian case. That is
\begin{align*}
  \min_{\mu \in \real^{dK}, \sigma >0} \Big( \hat{R}_n^\tau\left(\pi^{\textsc{mixL}}_{\mu, \sigma}\right) +\sqrt{\frac{2\left(\hat{R}_n^\tau\left(\pi^{\textsc{mixL}}_{\mu, \sigma}\right)+\frac{1}{\tau}\right)\left(\frac{\left\|\mu-\mu_0\right\|^2}{2} - \frac{dK}{2} \log \sigma^2+\log \frac{n}{\delta}\right)}{\tau(n-1)}}\\+\frac{2\left(\frac{\left\|\mu-\mu_0\right\|^2}{2} - \frac{dK}{2} \log \sigma^2+\log \frac{n}{\delta}\right)}{\tau(n-1)}\Big)\,,
\end{align*}
where we used that $\sigma_0=1$ since our prior is $\mathbb{P} = \cN(\eta_0 \mu_0, I_{dK}) \times {\rm G}(0, 1)^K$ for mixed-logit policies.

\subsubsection{\citet[Proposition 1]{sakhi2022pac}}

\begin{proposition}\label{prop:sakhi1}
Let $\tau \in (0, 1)$, $n \geq 1$, $\delta \in(0,1)$ and let $\mathbb{P}$ be a fixed prior on $\mathcal{H}$, then with probability at least $1-\delta$ over draws $\cD_n \sim \mu_{\pi_0}^n$, the following holds simultaneously for all posteriors, $\mathbb{Q}$, on $\mathcal{H}$ that
\begin{align}
R\left(\pi_{\mathbb{Q}}\right) & \leq \min_{\lambda >0} \frac{1}{\tau\left(e^\lambda-1\right)}\Big(1-e^{-\tau \lambda \hat{R}_n^\tau\left(\pi_{\mathbb{Q}}\right)+\frac{D_{\mathrm{KL}}(\mathbb{Q} \| \mathbb{P})+\log \frac{2 \sqrt{n}}{\delta}}{n}}\Big)\,.
\end{align}
\end{proposition}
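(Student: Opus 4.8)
The plan is to recognize the bound as a \emph{Catoni-type} PAC-Bayes inequality for a rescaled $[0,1]$-valued loss, obtained after neutralizing the bias of the clipped estimator. Write the per-hypothesis loss $\ell_\tau(h,x,a,c)=\frac{\mathbb{I}_{\{h(x)=a\}}}{\max(\pi_0(a\mid x),\tau)}\,c$, so that, via \eqref{eq:pac_policies} and linearity, $\hat{R}_n^\tau(\pi_{\mathbb{Q}})=\E{h\sim\mathbb{Q}}{\frac1n\sum_i\ell_\tau(h,x_i,a_i,c_i)}$. First I would compute $\E{(x,a,c)\sim\mu_{\pi_0}}{\ell_\tau(h,x,a,c)}$: the clipping only touches the denominator, so this equals $\E{x\sim\nu}{\frac{\pi_0(h(x)\mid x)}{\max(\pi_0(h(x)\mid x),\tau)}\,c(x,h(x))}$, and since the clipping factor lies in $[0,1]$ while $c\le 0$, it is $\ge \E{x\sim\nu}{c(x,h(x))}=R(h)$. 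Averaging over $h\sim\mathbb{Q}$ gives $R(\pi_{\mathbb{Q}})\le R^\tau(\pi_{\mathbb{Q}})$, so it suffices to upper bound $R^\tau(\pi_{\mathbb{Q}})$. Because $\pi\le 1$, $|c|\le1$ and $1/\max(\pi_0,\tau)\le 1/\tau$, the rescaled quantity $u(h,z):=-\tau\,\ell_\tau(h,z)$ takes values in $[0,1]$, with $U(h):=\E{z}{u(h,z)}$ and $\hat{U}_n(h):=\frac1n\sum_i u(h,z_i)$ satisfying $\E{h\sim\mathbb{Q}}{U(h)}=-\tau R^\tau(\pi_{\mathbb{Q}})$ and $\E{h\sim\mathbb{Q}}{\hat{U}_n(h)}=-\tau\hat{R}_n^\tau(\pi_{\mathbb{Q}})$.

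The second step is the PAC-Bayes-$\mathrm{kl}$ bound for $[0,1]$ losses. Using Maurer's moment inequality $\E{\cD_n}{e^{n\,\mathrm{kl}(\hat{U}_n(h)\|U(h))}}\le 2\sqrt n$ together with the Donsker--Varadhan change of measure from $\mathbb{P}$ to $\mathbb{Q}$ and the joint convexity of the binary relative entropy $\mathrm{kl}$, one obtains that with probability at least $1-\delta$, simultaneously for all $\mathbb{Q}$,
\[
\mathrm{kl}\!\left(\E{h\sim\mathbb{Q}}{\hat{U}_n(h)}\,\middle\|\,\E{h\sim\mathbb{Q}}{U(h)}\right)\le\frac{D_{\mathrm{KL}}(\mathbb{Q}\|\mathbb{P})+\log\frac{2\sqrt n}{\delta}}{n};
\]
the $2\sqrt n$ is exactly the source of the $\log\frac{2\sqrt n}{\delta}$ term, and the ``simultaneously for all $\mathbb{Q}$'' is automatic from the change of measure. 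It then remains to \emph{invert} this $\mathrm{kl}$ constraint into a lower bound on $\E{h\sim\mathbb{Q}}{U(h)}$. For this I would use the family of Chernoff-type relaxations $\mathrm{kl}(q\|p)\ge\lambda q-\log(1+(e^\lambda-1)p)$, valid for every $\lambda>0$ (a consequence of $e^{\lambda t}\le 1+(e^\lambda-1)t$ on $[0,1]$). Plugging $q=\E{h\sim\mathbb{Q}}{\hat{U}_n(h)}$ and $p=\E{h\sim\mathbb{Q}}{U(h)}$ and solving for $p$ yields a lower bound of the form $\frac{1}{e^\lambda-1}\big(e^{\lambda\E{h\sim\mathbb{Q}}{\hat{U}_n(h)}-\,\cdots}-1\big)$; substituting $\E{h\sim\mathbb{Q}}{\hat{U}_n(h)}=-\tau\hat{R}_n^\tau(\pi_{\mathbb{Q}})$ back through $R(\pi_{\mathbb{Q}})\le R^\tau(\pi_{\mathbb{Q}})=-\tfrac1\tau\E{h\sim\mathbb{Q}}{U(h)}$ gives the displayed expression $\frac{1}{\tau(e^\lambda-1)}\big(1-e^{-\tau\lambda\hat{R}_n^\tau(\pi_{\mathbb{Q}})+\cdots}\big)$, and the freedom in $\lambda$ produces the outer $\min_{\lambda>0}$.

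I expect the main obstacle to be the \emph{reversed direction} of this argument. Standard PAC-Bayes upper bounds a true risk by an empirical one, but here the negativity of the cost forces us to \emph{lower} bound the true mean $\E{h\sim\mathbb{Q}}{U(h)}$ of a $[0,1]$ loss in order to \emph{upper} bound the nonpositive $R^\tau$; concretely, one must apply the Chernoff relaxation on the correct side of $\mathrm{kl}$ and check that the clipping bias $R\le R^\tau$ points the same way. A secondary care point is that the relaxation is a one-parameter family indexed by $\lambda$: because each $\lambda$ gives a valid, data-free inequality once the $\mathrm{kl}$ bound holds, the minimization over $\lambda$ sits \emph{inside} the single high-probability event and needs no extra union bound, which is what lets the clean $\min_{\lambda>0}$ appear without inflating the confidence term. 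Verifying this interplay---rather than any single calculation---is the delicate part.
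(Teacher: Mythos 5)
First, a point of reference: this paper never proves \cref{prop:sakhi1}. It is a verbatim restatement of \citet[Proposition 1]{sakhi2022pac}, quoted in the appendix only to define a baseline objective, so your reconstruction can only be compared with the standard derivation of that result (which is indeed the route the original authors take). On that score, your plan has all the right ingredients and in the right order: the bias direction $R(\pi_{\mathbb{Q}})\le R^{\tau}(\pi_{\mathbb{Q}})$ (clipping multiplies a nonpositive cost by a factor in $[0,1]$), the rescaling $u=-\tau\,\ell_\tau\in[0,1]$, Maurer's PAC-Bayes-$\mathrm{kl}$ inequality as the source of the $\log\frac{2\sqrt{n}}{\delta}$ term, the Bernoulli--Chernoff relaxation $\mathrm{kl}(q\|p)\ge \lambda q-\log\bigl(1+(e^{\lambda}-1)p\bigr)$ applied with $\lambda>0$ to \emph{lower} bound $p=-\tau R^{\tau}(\pi_{\mathbb{Q}})$, and the correct observation that the minimization over $\lambda$ sits inside the single high-probability event and needs no union bound.

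There is, however, a genuine problem at the very last step, which you hide behind an ellipsis. Carrying out your own substitution honestly gives
\begin{align*}
R(\pi_{\mathbb{Q}}) \;\le\; \min_{\lambda>0}\; \frac{1}{\tau\left(e^{\lambda}-1\right)}\Big(1-e^{-\tau\lambda \hat{R}_n^{\tau}(\pi_{\mathbb{Q}})\,-\,\frac{D_{\mathrm{KL}}(\mathbb{Q}\|\mathbb{P})+\log\frac{2\sqrt{n}}{\delta}}{n}}\Big)\,,
\end{align*}
i.e.\ the confidence/KL term enters the exponent with a \emph{minus} sign, whereas the statement you were asked to prove (and your closing sentence) has it with a \emph{plus} sign. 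These are not interchangeable. Write $B=\bigl(D_{\mathrm{KL}}(\mathbb{Q}\|\mathbb{P})+\log\frac{2\sqrt{n}}{\delta}\bigr)/n>0$; with the plus sign, as $\lambda\to 0^{+}$ the prefactor $1/(\tau(e^{\lambda}-1))$ diverges to $+\infty$ while $1-e^{\,-\tau\lambda\hat{R}_n^{\tau}(\pi_{\mathbb{Q}})+B}\to 1-e^{B}<0$, so the right-hand side tends to $-\infty$ and the displayed inequality, read literally, asserts $R(\pi_{\mathbb{Q}})\le-\infty$, which is false. In other words, the proposition as printed carries a sign typo; your argument, done carefully, proves the corrected ($-$) version, and the claim that your substitution ``gives the displayed expression'' is not accurate. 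A complete write-up must either flag and fix that sign (the inversion $p\ge(e^{\lambda q-B}-1)/(e^{\lambda}-1)$ only weakens as $B$ grows, as it must), or it cannot conclude, since no proof can establish the $+$ version.
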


\textbf{Baseline 3: (Sakhi et al. 1, Gaussian)} Here we use the Gaussian policies in \eqref{eq:app_gaussian_pac_bayes}.
\begin{align}
\min_{\mu \in \real^{dK}, \sigma >0, \lambda>0} \Big(\frac{1}{\tau\left(e^\lambda-1\right)}\Big(1-e^{-\tau \lambda \hat{R}_n^\tau\left(\pi^{\textsc{gaus}}_{\mu, \sigma}\right)+\frac{\frac{\left\|\mu-\mu_0\right\|^2}{2} - \frac{dK}{2} \log \sigma^2+\log \frac{2 \sqrt{n}}{\delta}}{n}}\Big)\Big)\,,
\end{align}
where we used that $\sigma_0=1$ since our prior is $\mathbb{P} = \cN(\eta_0 \mu_0, I_{dK})$ for Gaussian policies.

\textbf{Baseline 4: (Sakhi et al. 1, Mixed-Logit)} Here we consider the mixed-logit policies in \eqref{eq:app_logit_pac_bayes}.
\begin{align}
\min_{\mu \in \real^{dK}, \sigma >0, \lambda>0} \Big(\frac{1}{\tau\left(e^\lambda-1\right)}\Big(1-e^{-\tau \lambda \hat{R}_n^\tau\left(\pi^{\textsc{mixL}}_{\mu, \sigma}\right)+\frac{\frac{\left\|\mu-\mu_0\right\|^2}{2} - \frac{dK}{2} \log \sigma^2+\log \frac{2 \sqrt{n}}{\delta}}{n}}\Big)\Big)\,.
\end{align}
where we used that $\sigma_0=1$ since our prior is $\mathbb{P} = \cN(\eta_0 \mu_0, I_{dK}) \times {\rm G}(0, 1)^K$ for mixed-logit policies. 

\subsubsection{\citet[Proposition 3]{sakhi2022pac}}
\begin{proposition}\label{prop:sakhi2}
Let $\tau \in (0, 1)$, $n \geq 1$, $\delta \in(0,1)$, let $\mathbb{P}$ be a fixed prior on $\mathcal{H}$, and let $\Lambda = \set{\lambda_i}_{i \in [n_\lambda]}$ a set of $n_\lambda$ positive scalars. Then with probability at least $1-\delta$ over draws $\cD_n \sim \mu_{\pi_0}^n$, the following holds simultaneously for all posteriors, $\mathbb{Q}$, on $\mathcal{H}$ and any $\lambda_i \in \Lambda$,
\begin{align}
R\left(\pi_{\mathbb{Q}}\right) & \leq \hat{R}_n^\tau\left(\pi_{\mathbb{Q}}\right) +\sqrt{\frac{D_{\mathrm{KL}}(\mathbb{Q} \| \mathbb{P})+\log \frac{4 \sqrt{n}}{\delta}}{2 n}}+\frac{D_{\mathrm{KL}}(\mathbb{Q} \| \mathbb{P})+\log \frac{2 n_\lambda}{\delta}}{\lambda} +\frac{\lambda}{n} g\left(\frac{\lambda}{ \tau n}\right) \mathcal{V}_{n}^\tau\left(\pi_{\mathbb{Q}}\right)\,,
\end{align}
where $g: u \rightarrow \frac{\exp (u)-1-u}{u^2}$ and $\mathcal{V}_{n}^\tau(\pi_{\mathbb{Q}})=\frac{1}{n} \sum_{i=1}^{n} \E{a \sim \pi_{\mathbb{Q}}\left(\cdot | x_i\right)}{\frac{\pi_0\left(a | x_i\right)}{\max \left(\tau, \pi_0\left(a | x_i\right)\right)^2}},.$
\end{proposition}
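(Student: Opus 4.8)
The plan is to reprove this one-sided bound by the same context-conditioning strategy used for \cref{thm:main_result}, but replacing the unbounded-loss martingale argument by a \emph{bounded-loss} PAC-Bayes--Bernstein inequality, which is available here precisely because the clipped weight $\pi_{\mathbb{Q}}(a | x)/\max(\pi_0(a | x),\tau)$ never exceeds $1/\tau$. Introducing the conditional risks $R(\pi_{\mathbb{Q}} | x_i)=\E{a \sim \pi_{\mathbb{Q}}(\cdot | x_i)}{c(x_i,a)}$ and $R^\tau(\pi_{\mathbb{Q}} | x_i)=\E{a \sim \pi_0(\cdot | x_i)}{\tfrac{\pi_{\mathbb{Q}}(a | x_i)}{\max(\pi_0(a | x_i),\tau)}c(x_i,a)}$, I would split
\[
R(\pi_{\mathbb{Q}})-\hat{R}_n^\tau(\pi_{\mathbb{Q}}) = J_1 + J_2 + J_3 ,
\]
with $J_1 = R(\pi_{\mathbb{Q}})-\tfrac1n\sum_i R(\pi_{\mathbb{Q}} | x_i)$, $J_2=\tfrac1n\sum_i\big(R(\pi_{\mathbb{Q}} | x_i)-R^\tau(\pi_{\mathbb{Q}} | x_i)\big)$, and $J_3=\tfrac1n\sum_i R^\tau(\pi_{\mathbb{Q}} | x_i)-\hat{R}_n^\tau(\pi_{\mathbb{Q}})$, exactly paralleling the $I_1,I_2,I_3$ of \cref{proof:main_thm_proof}. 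Conditioning on the contexts is again what sidesteps the intractable expectation over $x\sim\nu$.

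For $J_1$, since $-R(\pi_{\mathbb{Q}} | x)\in[0,1]$, I would apply the bounded PAC-Bayes bound \citet[Theorem 3.3]{alquier2021user} (as for $I_1$) to get, with probability $1-\delta/2$ and simultaneously over all $\mathbb{Q}$, that $J_1 \le \sqrt{(D_{\mathrm{KL}}(\mathbb{Q} \| \mathbb{P})+\log\tfrac{4\sqrt n}{\delta})/(2n)}$. For $J_2$, I would bound the clipping bias \emph{deterministically}: because $\max(\pi_0,\tau)\ge\pi_0$, the clipped weight is never larger than the true weight $\pi_{\mathbb{Q}}/\pi_0$, and since $c\le 0$ this makes each summand of $J_2$ nonpositive, so $J_2\le 0$ and this piece drops out of the upper bound at no probabilistic cost. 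This is exactly where the sign convention $c\in[-1,0]$ is used.

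The heart of the argument is $J_3$, which I would control conditionally on the contexts $(x_i)$, treating the actions $a_i\sim\pi_0(\cdot | x_i)$ as the only randomness. Writing $\ell(h,x,a)=\tfrac{\mathbb{I}_{\{h(x)=a\}}}{\max(\pi_0(a | x),\tau)}c(x,a)\in[-1/\tau,0]$ and $V_i(h)=\E{a \sim \pi_0(\cdot | x_i)}{\ell(h,x_i,a)}-\ell(h,x_i,a_i)$, the variables $(V_i(h))_i$ are conditionally centered, independent, and bounded above by $1/\tau$. The key inequality is Bennett's bound $e^{v}\le 1+v+g(b)v^2$ for $v\le b$, which with $s=\lambda/n$ and $b=1/\tau$ gives the per-sample estimate $\E{a_i}{e^{s V_i(h)}}\le\exp\!\big(s^2 g(\tfrac{\lambda}{\tau n})\,\E{a \sim \pi_0(\cdot | x_i)}{\ell(h,x_i,a)^2}\big)$. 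Multiplying over $i$ and using the tower rule shows that $\exp\!\big(\lambda J_3(h)-\tfrac{\lambda^2}{n^2}g(\tfrac{\lambda}{\tau n})\sum_i\E{a \sim \pi_0(\cdot | x_i)}{\ell(h,x_i,a)^2}\big)$ has expectation at most $1$ for every fixed $h$; a Donsker--Varadhan change of measure against $\mathbb{P}$ followed by Markov then transfers this uniformly to all posteriors $\mathbb{Q}$. Bounding $c^2\le1$ converts $\tfrac1{n}\E{h\sim\mathbb{Q}}{\sum_i\E{a \sim \pi_0(\cdot | x_i)}{\ell(h,x_i,a)^2}}$ into $\mathcal{V}_n^\tau(\pi_{\mathbb{Q}})$, yielding $J_3\le \tfrac{D_{\mathrm{KL}}(\mathbb{Q} \| \mathbb{P})+\log(1/\delta')}{\lambda}+\tfrac{\lambda}{n}g(\tfrac{\lambda}{\tau n})\mathcal{V}_n^\tau(\pi_{\mathbb{Q}})$ for a single $\lambda$.

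Since $\lambda$ cannot be chosen data-dependently inside one use of this exponential-moment argument, I would finish with a union bound over the grid $\Lambda=\set{\lambda_i}_{i\in[n_\lambda]}$, allocating $\delta/(2n_\lambda)$ to each $\lambda_i$ so that $\log(1/\delta')=\log\tfrac{2n_\lambda}{\delta}$, and combine with the $\delta/2$ spent on $J_1$; summing $J_1+J_2+J_3$ then gives the stated inequality. The main obstacle is the $J_3$ step: getting the exponential-moment/supermartingale inequality with the correct $g(\lambda/(\tau n))$ factor (carefully tracking how the single parameter $\lambda$ distributes over the $n$ conditionally independent factors), and then lifting the fixed-$h$ statement to one holding \emph{simultaneously over all} $\mathbb{Q}$ through change of measure, while keeping the variance proxy in the tractable form $\mathcal{V}_n^\tau$ rather than the true conditional variance.
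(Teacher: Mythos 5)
The paper never proves this statement: \cref{prop:sakhi2} is quoted verbatim from \citet[Proposition 3]{sakhi2022pac} and used purely as an experimental baseline in \cref{app:baselines}, so there is no in-paper proof to compare against; the closest object is the proof of \cref{thm:main_result} in \cref{proof:main_thm_proof}, which your argument deliberately parallels. Your reconstruction is correct. The $J_1$ step is the same application of \citet[Theorem~3.3]{alquier2021user} at level $\delta/2$ (hence the $\log\frac{4\sqrt{n}}{\delta}$), and the deterministic sign argument $J_2\le 0$ is valid because $\max(\pi_0,\tau)\ge\pi_0$ and $c\le 0$; this is exactly why a one-sided bound needs no explicit bias term, whereas the two-sided \cref{thm:main_result} must keep $B_n^\alpha$ in closed form. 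Where you genuinely diverge from the paper's machinery is $J_3$: instead of the unbounded-loss martingale tool (\cref{lemma:app_maxime}, from \citet{haddouche2022pac}), you exploit that the clipped loss lies in $[-1/\tau,0]$ and run a conditional Bennett/Bernstein exponential-moment argument. The bookkeeping checks out: with $s=\lambda/n$ and $b=1/\tau$, Bennett's inequality gives $\mathbb{E}[e^{sV_i(h)}]\le\exp\big(s^2 g(\tfrac{\lambda}{\tau n})\,\mathbb{E}_{a\sim\pi_0(\cdot|x_i)}[\ell(h,x_i,a)^2]\big)$; the product over conditionally independent actions yields a unit-expectation exponential; Donsker--Varadhan plus Markov at level $\delta/(2n_\lambda)$ per grid point produces $\frac{D_{\mathrm{KL}}(\mathbb{Q}\|\mathbb{P})+\log\frac{2n_\lambda}{\delta}}{\lambda}$; and $c^2\le 1$ together with the identity $\mathbb{E}_{a\sim\pi_0(\cdot|x_i)}\big[\tfrac{\pi_{\mathbb{Q}}(a|x_i)}{\max(\tau,\pi_0(a|x_i))^2}\big]=\mathbb{E}_{a\sim\pi_{\mathbb{Q}}(\cdot|x_i)}\big[\tfrac{\pi_0(a|x_i)}{\max(\tau,\pi_0(a|x_i))^2}\big]$ recovers $\frac{\lambda}{n}g(\frac{\lambda}{\tau n})\mathcal{V}_n^\tau(\pi_{\mathbb{Q}})$ exactly. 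The comparison is instructive: boundedness of the clipped weights buys the sharper Bernstein factor $g$ and a union bound enabling a data-dependent choice of $\lambda$ over $\Lambda$, but at the price of the $1/\tau$ dependence that makes such bounds vacuous as $\tau\to 0$ --- precisely the dependence the paper's martingale route avoids so that \cref{thm:main_result} also covers unclipped IPS ($\alpha=1$).
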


\textbf{Baseline 5: (Sakhi et al. 2, Gaussian)} Here we consider the Gaussian policies in \eqref{eq:app_gaussian_pac_bayes}.

\begin{align}
\min_{\mu \in \real^{dK}, \sigma >0, \lambda \in \Lambda}\Big(\hat{R}_n^\tau\left(\pi^{\textsc{gaus}}_{\mu, \sigma}\right) +\sqrt{\frac{\frac{\left\|\mu-\mu_0\right\|^2}{2} - \frac{dK}{2} \log \sigma^2+\log \frac{4 \sqrt{n}}{\delta}}{2 n}}+\frac{\frac{\left\|\mu-\mu_0\right\|^2}{2} - \frac{dK}{2} \log \sigma^2+\log \frac{2 n_\lambda}{\delta}}{\lambda} \nonumber\\ +\frac{\lambda}{n} g\left(\frac{\lambda}{ \tau n}\right) \mathcal{V}_{n}^\tau\left(\pi^{\textsc{gaus}}_{\mu, \sigma}\right)\Big)\,,
\end{align}
where we used that $\sigma_0=1$ since our prior is $\mathbb{P} = \cN(\eta_0 \mu_0, I_{dK})$ for Gaussian policies. 

\textbf{Baseline 6: (Sakhi et al. 2, Mixed-Logit)} Here we consider the mixed-logit policies in \eqref{eq:app_logit_pac_bayes}.

\begin{align}
\min_{\mu \in \real^{dK}, \sigma >0, \lambda \in \Lambda}\Big(\hat{R}_n^\tau\left(\pi^{\textsc{mixL}}_{\mu, \sigma}\right) +\sqrt{\frac{\frac{\left\|\mu-\mu_0\right\|^2}{2} - \frac{dK}{2} \log \sigma^2+\log \frac{4 \sqrt{n}}{\delta}}{2 n}}+\frac{\frac{\left\|\mu-\mu_0\right\|^2}{2} - \frac{dK}{2} \log \sigma^2+\log \frac{2 n_\lambda}{\delta}}{\lambda} \nonumber \\ +\frac{\lambda}{n} g\left(\frac{\lambda}{ \tau n}\right) \mathcal{V}_{n}^\tau\left(\pi^{\textsc{mixL}}_{\mu, \sigma}\right)\Big)\,,
\end{align}
where we used that $\sigma_0=1$ since our prior is $\mathbb{P} = \cN(\eta_0 \mu_0, I_{dK}) \times {\rm G}(0, 1)^K$ for mixed-logit policies.

\subsection{Additional Results and Discussion} \label{app:add_results}

In \cref{fig:app_main_exp_results}, we report the reward of the learned policy using one of the considered methods. We make the following observations:
\begin{itemize}[topsep=0pt,itemsep=0pt]
\item \textbf{Choice of $\tau$ and $\alpha$:} in \cref{fig:app_main_exp_results}, we set $\tau= 1/\sqrt[\leftroot{-2}\uproot{2}4]{n} \approx 0.06$ and $\alpha = 1-1/\sqrt[\leftroot{-2}\uproot{2}4]{n} \approx 0.94$ so that when $n$ is large enough, both $\hat{R}_n^\tau(\pi)$ and $ \hat{R}_n^\alpha(\pi)$ approach $\hat{R}_n^{\textsc{ips}}(\pi)$ \citep{ionides2008truncated}. This is because standard IPS should be preferred when $n \rightarrow \infty$. For completeness, we also show in \cref{fig:app_varying_params} that the choice of $\alpha$ and $\tau$ does not affect the conclusions that we make here. We also include in \cref{fig:app_varying_params} the results with an adaptive and data-dependent $\alpha$ obtained using \eqref{eq:data_dependent_alpha} in \cref{subsec:data_dep_alpha}. The results in \cref{fig:app_varying_params} will be discussed in detail after we finish analyzing the results in \cref{fig:app_main_exp_results}.
\item \textbf{Overall performance:} our method outperforms the baselines for any class of learning policies (Gaussian or mixed-logit) and any choice of logging policies. The only exception is when the logging policy is uniform.
\item \textbf{Effect of the class of learning policies:} the class of policies, Gaussian or mixed-logit, affects the performance of all the baselines. In general, Gaussian policies behave better than mixed-logit policies. However, this is less significant for our method; the performance of both Gaussian and mixed-logit policies are comparable, and in both cases, our method outperforms the baselines with Gaussian policies. Therefore, in general, Gaussian policies should be preferred over mixed-logit policies. But in case engineering constraints impose the choice of mixed-logit or softmax policies, then the performance of our method is robust to this choice. 
\item \textbf{Effect of the logging policy:} our method reaches the maximum reward even when the logging policy is not performing well. In contrast, the baselines only reach their best reward when the logging policy is already well-performing ($\eta_0 \approx 1$), in which case minor to no improvements are made. Note that the baselines have a better reward than ours when the logging policy is uniform. But our method has better reward when the logging policy is not uniform, that is when $\eta_0>0$. This is more common in practice since the logging policy is deployed in production and thus it is expected to perform better than the uniform policy.
\end{itemize}

In \cref{fig:app_varying_params}, we compare our method to \textbf{(Sakhi et al. 2)} with Gaussian policies since this was the best-performing baseline in our experiments in \cref{fig:app_main_exp_results}. Note that we did not include \texttt{CIFAR100} in \cref{fig:app_main_exp_results} as it was computationally heavy to run these experiments with varying $\eta_0$, $\alpha$ and $\tau$ for a very high-dimensional dataset such as \texttt{CIFAR100}. We consider $20$ varying values of $\tau$ and $\alpha$ evenly spaced in $(0, 1)$. We also include the results using the adaptive tuning procedure of $\alpha$ described in \cref{subsec:data_dep_alpha} (green curve). We make the following observations:

\begin{itemize}[topsep=0pt,itemsep=0pt]
\item \textbf{Adaptive and data-dependent $\alpha$:} This procedure is reliable since the performance with an adaptive $\alpha$ (green curve) is comparable with the best possible choice of $\alpha$. This is consistent for the three datasets.
\item \textbf{Effect of the choice $\alpha$:} as we observed before, the only case where the choice of $\alpha$ may lead to bad-performing policies is when the logging policy is uniform. When the logging policy is not uniform, our method outperforms the best baseline with the best $\tau$ for a wide range of values of $\alpha$. Also, note that there is no very bad choice of $\alpha$, in contrast with $\tau \approx 0$ that led to a very bad performing policy that slightly improved upon the logging policy. This attests to the robustness of our method to the choice of $\alpha$. Moreover, our bound regularizes better $\alpha$; it contains a bias-variance trade-off term for $\alpha$. Also, the bound of \textbf{(Sakhi et al. 2)} has a $1/\tau$ making it vacuous for small values of $\tau$.
\item \textbf{Best choice of $\alpha$:} To see the effect of $\alpha$ for varying problems, we consider the following experiment. We split the logging policies into two groups. The first is \emph{modest logging} which corresponds to logging policies whose $\eta_0$ is between $0$ and $0.5$. This includes uniform logging policies and other average-performing logging policies. The second is \emph{good logging} which corresponds to logging policies whose $\eta_0$ is between $0.5$ and $1$. After that, for each $\alpha$, we compute the average reward of the learned policy across either the group of modest or good logging policies. For each dataset, this leads to the two red and green curves in the second row of \cref{fig:app_varying_params}. Overall, we observe that $\alpha \approx 0.7$ leads to the best performance for the \emph{modest logging} group. Thus when the performance of the logging policy is average, regularizing the importance weights can be critical. In contrast, when the performance of the logging policy is already good, regularization is less needed and we can set $\alpha \approx 1$. Fortunately, one of the main strengths of this work is that our bound also holds for standard IPS recovered for $\alpha=1$. The bounds in all prior works cannot provide good performance for standard IPS due to their dependency on $1/\tau$.
\end{itemize}

\begin{figure}[H]
  \centering  \includegraphics[width=\linewidth]{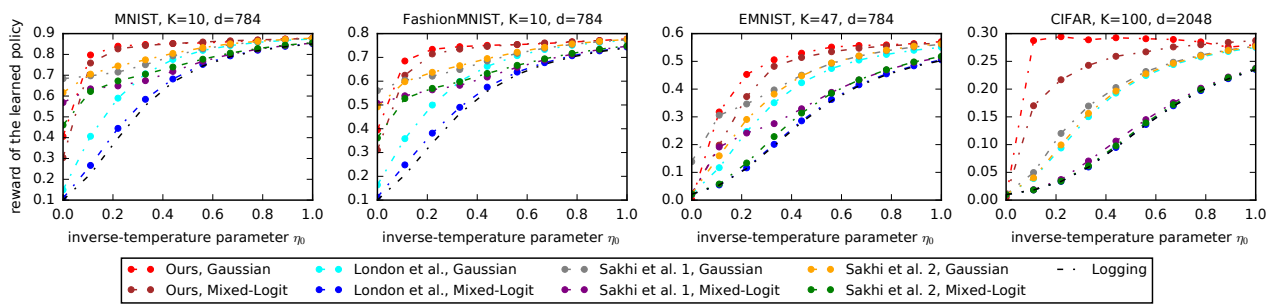}
  \caption{The reward of the learned policy for four datasets with varying quality of the logging policy $\eta_0 \in [0, 1]$.} 
  \label{fig:app_main_exp_results}
\end{figure}

\begin{figure}
  \centering  \includegraphics[width=0.7\linewidth]{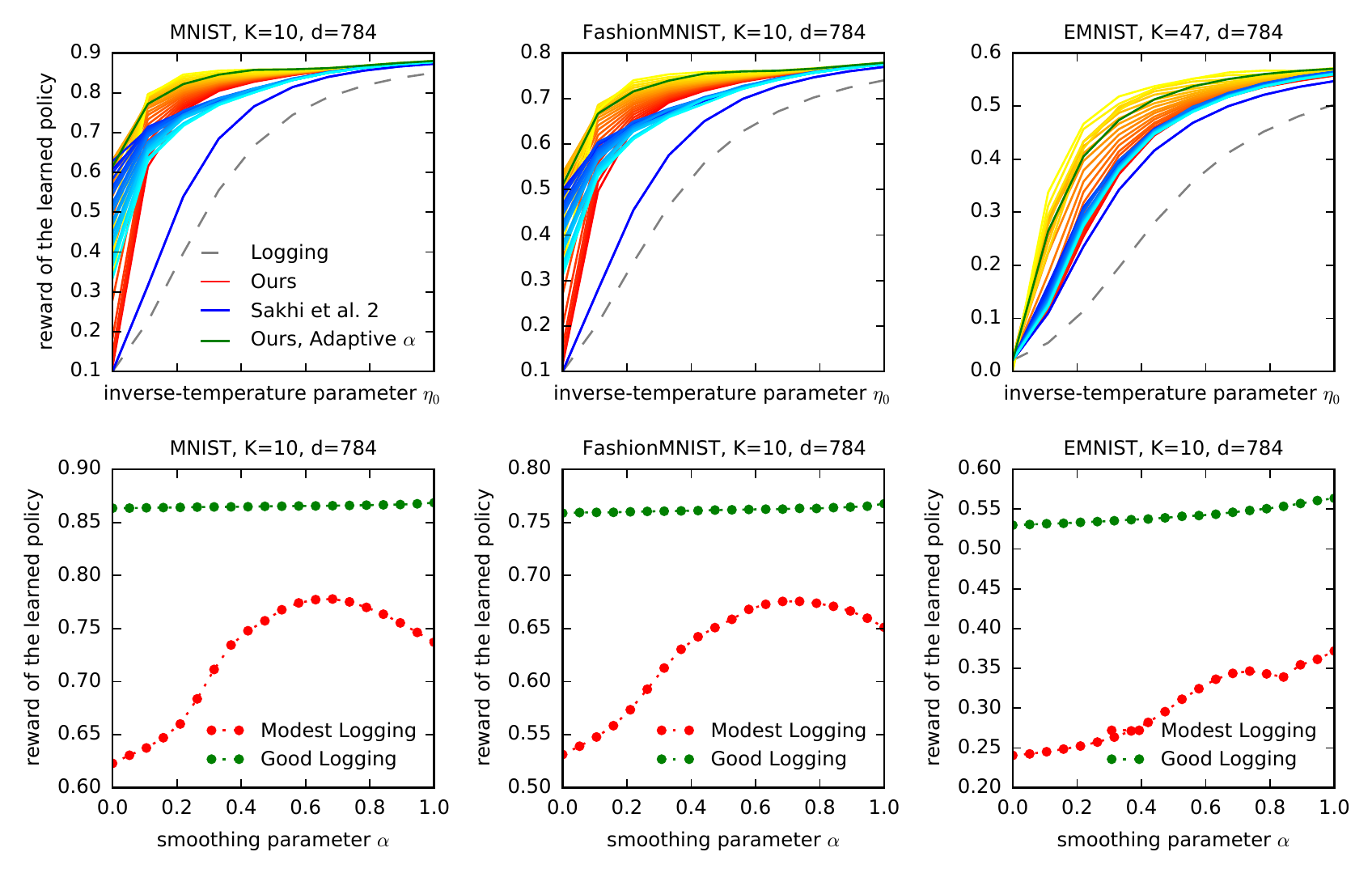}
  \caption{In the first row, we report the reward of the learned policy with 20 evenly space values of $\tau \in (0, 1)$ and $\alpha \in (0, 1)$ and varying $\eta_0 \in [0, 1]$, and for an adaptive and data-dependent $\alpha$ obtained using \eqref{eq:data_dependent_alpha} in \cref{subsec:data_dep_alpha}. The blue-to-cyan colors correspond to different values of $\tau$. The lighter the color, the higher the value of $\tau$. For instance, the cyan lines correspond to high values of $\tau$ while the blue ones correspond to very small values of $\tau$. Similarly, the red-to-yellow colors correspond to different values $\alpha$. The lighter the color, the higher the value of $\alpha$. For instance, the yellow lines correspond to high values of $\alpha$ while the red ones correspond to very small values of $\alpha$. Finally, the green curve corresponds to the reward of the learned policy using an adaptive and data-dependent $\alpha$ described in \eqref{eq:data_dependent_alpha} (\cref{subsec:data_dep_alpha}). In the second row, we report the \emph{average} reward of the learned policies using our method across the modest logging group ($\eta_0 \in [0, 0.5]$ in red) and the good logging group  ($\eta_0 \in [0.5, 1]$ in green).}
  \label{fig:app_varying_params}
\end{figure}

\subsection{Learning Principles} \label{app:add_discussion}

Here we compare our bound in \cref{thm:main_result} and our learning principle in \eqref{eq:learning_principle} to the one in \citet{london2019bayesian}. We do not include the learning principle in \citet{swaminathan2015batch} since the one in \citet{london2019bayesian} enjoys similar performance and is far more scalable. The learning principle of \citet{london2019bayesian} is defined as 
\begin{align}\label{eq:lp_london}
\min_{\mu} \hat{R}^\tau_n(\pi_\mu) + \lambda \norm{\mu - \mu_0}^2\,.
\end{align}
where $\lambda$ is a tunable hyper-parameters, $\pi_\mu$ is the softmax policy defined in \eqref{eq:app_softmax_pac_bayes} and $\mu \in \real^{dK}$ is its parameter vector. This learning principle is referred to as \textbf{(London et al., LP)}. In contrast, our learning principle is defined as 
\begin{align}\label{eq:our_learning_principle}
\hat{R}^\alpha_n(\pi_{\mu}) + \lambda_1 \norm{\mu - \mu_0}^2  + \lambda_2  \bar{V}_n^\alpha(\pi_{\mu}) + \lambda_3 B_n^\alpha(\pi_{\mu})\,,
\end{align}
where $\lambda_1, \lambda_2$ and $\lambda_3$ are tunable hyper-parameters and $\pi_{\mu}$ is the Gaussian policy in \eqref{eq:gaussian_pac_bayes} with a fixed $\sigma=1$. Our learning principle is referred to as \textbf{(Ours, LP)}. Finally, our bound in \cref{thm:main_result} with Gaussian policies is referred to as \textbf{(Ours, Bound)}. Similarly to the previous experiments, we set $\tau= 1/\sqrt[\leftroot{-2}\uproot{2}4]{n} \approx 0.06$ and $\alpha = 1-1/\sqrt[\leftroot{-2}\uproot{2}4]{n} \approx 0.94$ so that when $n$ is large enough, both $\hat{R}_n^\tau(\pi)$ and $ \hat{R}_n^\alpha(\pi)$ approach $\hat{R}_n^{\textsc{ips}}(\pi)$ \citep{ionides2008truncated}. For the learning principles, we tried multiple values of hyper-parameters $\lambda, \lambda_1, \lambda_2$ and $\lambda_3$, all between $10^{-5}$ and $ 10^{-1}$. For instance, we found that the best hyper-parameter for \citet{london2019bayesian} is $\lambda=10^{-5}$ which matches the value they found in their \texttt{FashionMNIST} experiments. For our learning principle, the best hyper-parameters were $\lambda_1=10^{-5}, \lambda_2=10^{-5}$ and $\lambda_3=10^{-5}$. In contrast, our bound does not require hyper-parameter tuning. We report in \cref{fig:app_lp_compare} the reward of the learned policy on the \texttt{FashionMNIST} for all these methods with varying values of hyper-parameters. To reduce clutter, we only report the reward for good choices of hyper-parameters $\lambda, \lambda_1, \lambda_2$ and $\lambda_3$. We observe that for a wide range of hyper-parameters, our learning principle outperforms the one in \citet{london2019bayesian}. However, both learning principles are sensitive to the choice of hyper-parameters. In contrast, our bound does not require the tuning of any additional hyper-parameter and it achieves the best performance except for the uniform logging policy. In addition to being more theoretically grounded, this approach also enjoys favorable empirical performance without additional hyper-parameter tuning, an important practical consideration.

\begin{figure}[H]
  \centering  \includegraphics[width=0.35\linewidth]{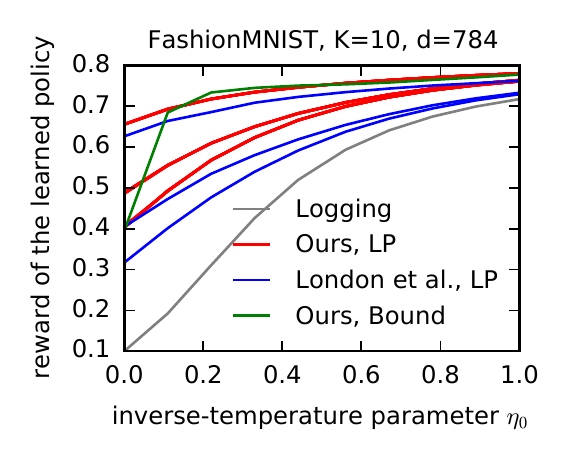}
  \caption{The reward of the learned policy using either our bound in \cref{thm:main_result} (referred to as \textbf{(Ours, Bound)} in green), our learning principle in \eqref{eq:learning_principle} (referred to as \textbf{(Ours, LP)} in red for multiple values of hyper-parameters) or the learning principle in \citet{london2019bayesian} (referred to as \textbf{(London et al., LP)} in blue) for multiple values of hyper-parameters).
} 
  \label{fig:app_lp_compare}
\end{figure}

\subsection{Other Importance Weight Corrections} \label{app:other_corrections}

\citet{su2020doubly,metelli2021subgaussian} also proposed corrections that are different from hard clipping (a detailed comparison is given in \cref{sec:ope}). However, they were not included in our main experiments since they do not provide generalization guarantees; they focus on OPE and only propose a heuristic for OPL in their Appendix B.2 and Section 6.1.2, respectively. Those heuristics are not based on theory, in contrast with ours which is directly derived from our generalization bound. However, for completeness, we also compare our regularization of importance weights to theirs. To make such a comparison, we use the hyper-parameters and tuning procedures provided in Section 6 and Appendix B.2 for \citet{metelli2021subgaussian} and Sections 5 and 6.1.2 for \citet{su2020doubly}. Overall, we observe in \cref{fig:app_other_corrections} that our method outperforms these baselines in OPL and the gap is more significant when the logging policy is not performing well.

\begin{figure}[H]
  \centering  \includegraphics[width=0.8\linewidth]{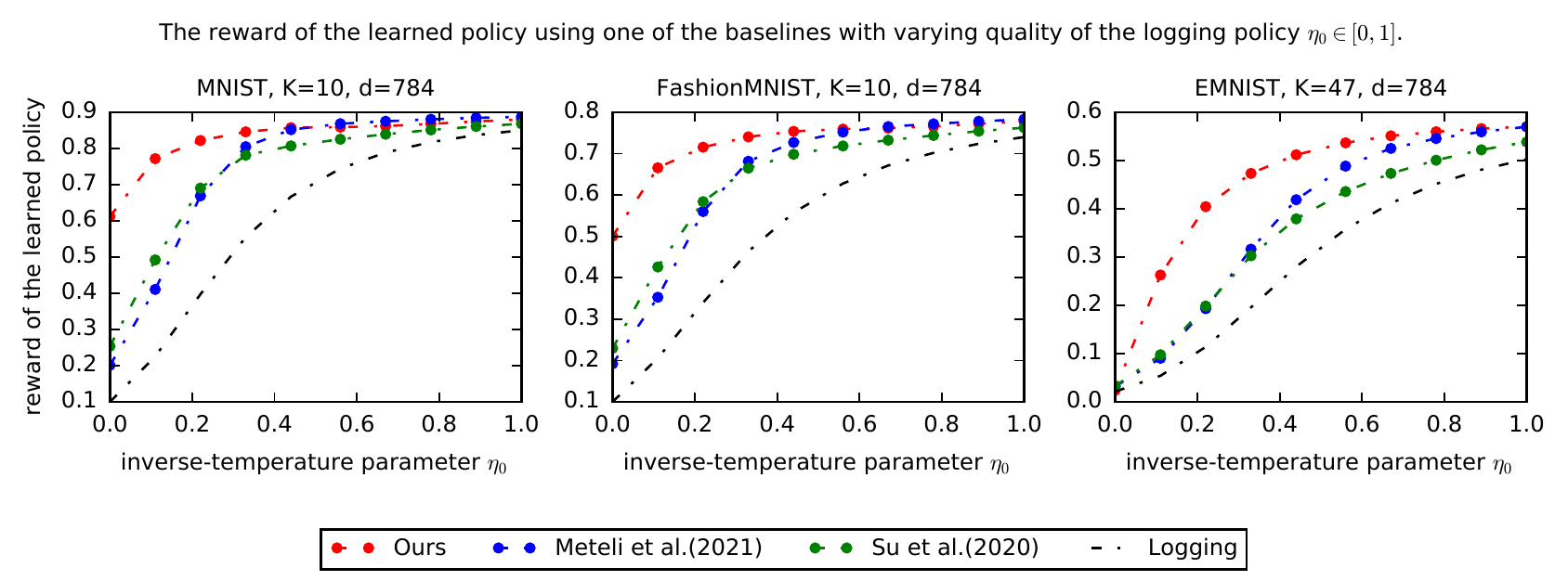}
  \caption{The reward of the learned policy with varying quality of the logging policy $\eta_0 \in [0, 1]$ using either our regularization ($\alpha$-\texttt{IPS}) or the ones in \citet{su2020doubly,metelli2021subgaussian}.} 
  \label{fig:app_other_corrections}
\end{figure}

\end{document}